\documentclass[11pt]{article}
\pdfoutput=1
\usepackage{pdfsync}
\usepackage[OT1]{fontenc}
\usepackage{subfigure}
\usepackage{mathtools}

\usepackage{amsmath,amssymb}
\usepackage{bm}
\usepackage{natbib}
\usepackage[usenames]{color}
\usepackage{amsthm}
\usepackage{algorithm}
\usepackage{algorithmic}
\usepackage{booktabs}
\usepackage{multirow} 
\usepackage{wrapfig}

\usepackage[colorlinks,
linkcolor=red,
anchorcolor=blue,
citecolor=blue
]{hyperref}

\usepackage{mathrsfs}

\usepackage{fullpage}
\usepackage[protrusion=true,expansion=true]{microtype}

\usepackage{smile}
\usepackage{bbm}

\usepackage{caption} 
\captionsetup[table]{skip=10pt}

\newcommand{\ifcomments}{\iftrue}

\newcommand{\tabincell}[2]{\begin{tabular}{@{}#1@{}}#2\end{tabular}}

\usepackage{xcolor,colortbl}%%table
\definecolor{DarkBlue}{RGB}{22,54,93}%%table

\newcommand{\ignore}[1]{}

\graphicspath{{arxiv v2/}}

\begin{document}
\title{\huge Do Wider Neural Networks Really Help Adversarial Robustness?}

\author
{   Boxi Wu\thanks{Equal contribution.} \thanks{State Key Lab of CAD\&CG, College of Computer Science, Zhejiang University;  E-mail:  {\tt wuboxi@zju.edu.cn;  \{dengcai,xiaofeihe\}@cad.zju.edu.cn}.},  
	~~Jinghui Chen\footnotemark[1] \thanks{College of Information Science and Technology, Pennsylvania State University, State College, PA 16802, USA; E-mail:  {jzc5917@psu.edu}. },
	~~Deng Cai\footnotemark[2],
	~~Xiaofei He\footnotemark[2],
	~~Quanquan Gu\thanks{Department of Computer Science, University of California, Los Angeles, CA 90095, USA; E-mail:  {\tt qgu@cs.ucla.edu}. }
}
\date{}
 
\maketitle

\begin{abstract}
    Adversarial training is a powerful type of defense against adversarial examples. Previous empirical results suggest that adversarial training requires wider networks for better performances. However, it remains elusive how neural network width affects model robustness. In this paper, we carefully examine the relationship between network width and model robustness. Specifically, we show that the model robustness is closely related to the tradeoff between natural accuracy and perturbation stability, which is controlled by the robust regularization parameter $\lambda$. With the same $\lambda$, wider networks can achieve better natural accuracy but worse perturbation stability, leading to a potentially worse overall model robustness. To understand the origin of this phenomenon, we further relate the perturbation stability with the network's local Lipschitzness. By leveraging recent results on neural tangent kernels, we theoretically show that wider networks tend to have worse perturbation stability. Our analyses suggest that: 1) the common strategy of first fine-tuning $\lambda$ on small networks and then directly use it for wide model training could lead to deteriorated model robustness; 2) one needs to properly enlarge $\lambda$ to unleash the robustness potential of wider models fully. Finally, we propose a new Width Adjusted Regularization (WAR) method that adaptively enlarges $\lambda$ on wide models and significantly saves the tuning time.
\end{abstract}

\section{Introduction}
\label{sec:intro}

    Researchers have found that Deep Neural Networks (DNNs) suffer badly from adversarial examples \citep{intrigue}. By perturbing the original inputs with an intentionally computed, undetectable noise, one can deceive DNNs and even arbitrarily modify their predictions on purpose.
	To defend against adversarial examples and further improve model robustness, various defense approaches have been proposed \citep{distillation,meng2017magnet,dhillon2018stochastic,liao2018defense,xie2017mitigating,guo2017countering,song2017pixeldefend,samangouei2018defense}.
	Among them, adversarial training~\cite{fgsm,pgd} has been shown to be the most effective type of defenses~\citep{obfuscated}. Adversarial training can be seen as a form of data augmentation by first finding the adversarial examples and then training  DNN models on those examples. Specifically, given a DNN classifier $f$ parameterized by $\btheta$, a general form of adversarial training with loss function $\cL$ can be defined as:
		\begin{align}
		\label{eq:general_adv}
		\mathop{\argmin}_{\btheta}
		\frac{1}{N}\sum_{i=1}^N \Big[
		\underbrace{
			\mathcal{L}(\mathbf{\btheta}; \xb_i,y_i)
		}_{\text{natural risk}}
		+
		\lambda \cdot
		\underbrace{
			\max\limits_{\hat{\xb}_i \in \mathbb{B}(\xb_i,\epsilon)}
			\big[
			\mathcal{L}(\mathbf{\btheta}; \hat{\xb}_i,y_i)
			- \mathcal{L}(\mathbf{\btheta}; \xb_i,y_i)
			\big]
		}_{\text{robust regularization}}
		\Big],
		\end{align}
	where $\{(\xb_i,y_i)_{i=1}^n\}$ are training data, $\mathbb{B}(\xb,\epsilon) = \{ {\hat{\xb}} \ | \ \| {\hat{\xb}}  - \xb  \|_p \leq \epsilon\}$ denotes the $\ell_p$ norm ball with radius $\epsilon$ centered at $\xb$, and $p\geq 1$, and $\lambda>0$ is the regularization parameter.
	Compared with standard empirical risk minimization, the extra robust regularization term encourages the data points within $\mathbb{B}(\xb,\epsilon)$ to be classified as the same class, i.e., encourages the predictions to be stable. The regularization parameter $\lambda$ adjusts the strength of robust regularization. When $\lambda = 1$, it recovers the formulation in \cite{pgd}, and when $\lambda = 0.5$, it recovers the formulation in \cite{fgsm}. Furthermore, replacing the loss difference in robust regularization term with the KL-divergence based regularization recovers the formulation in \cite{trades}.
	
\begin{wrapfigure}{R}{0.5\linewidth}
\vskip -0.2in
	\centering
	\subfigure[Natural Risk]{
		\includegraphics[width=0.47\linewidth]{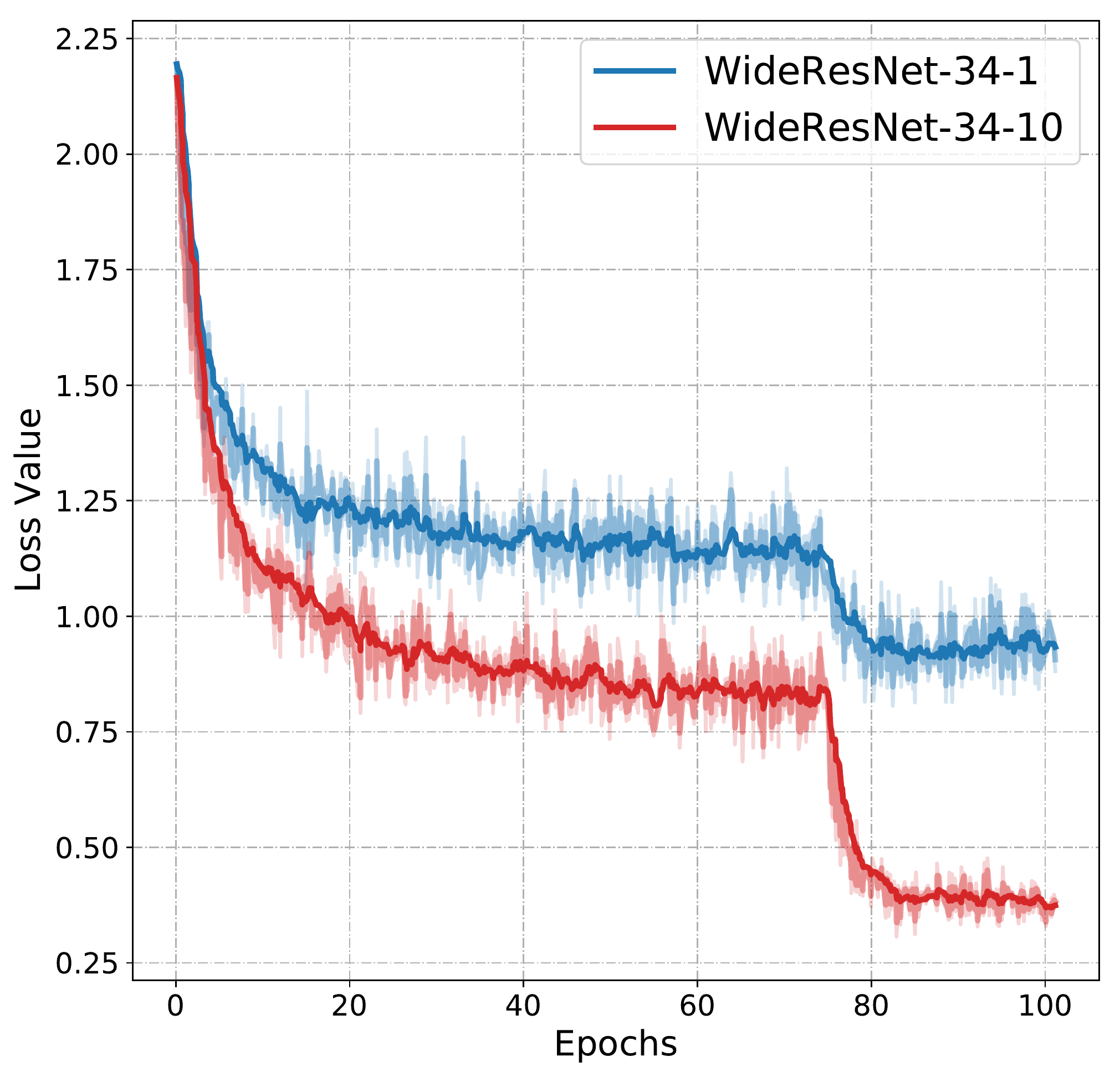}}
	\subfigure[Robust Regularization]{
		\includegraphics[width=0.47\linewidth]{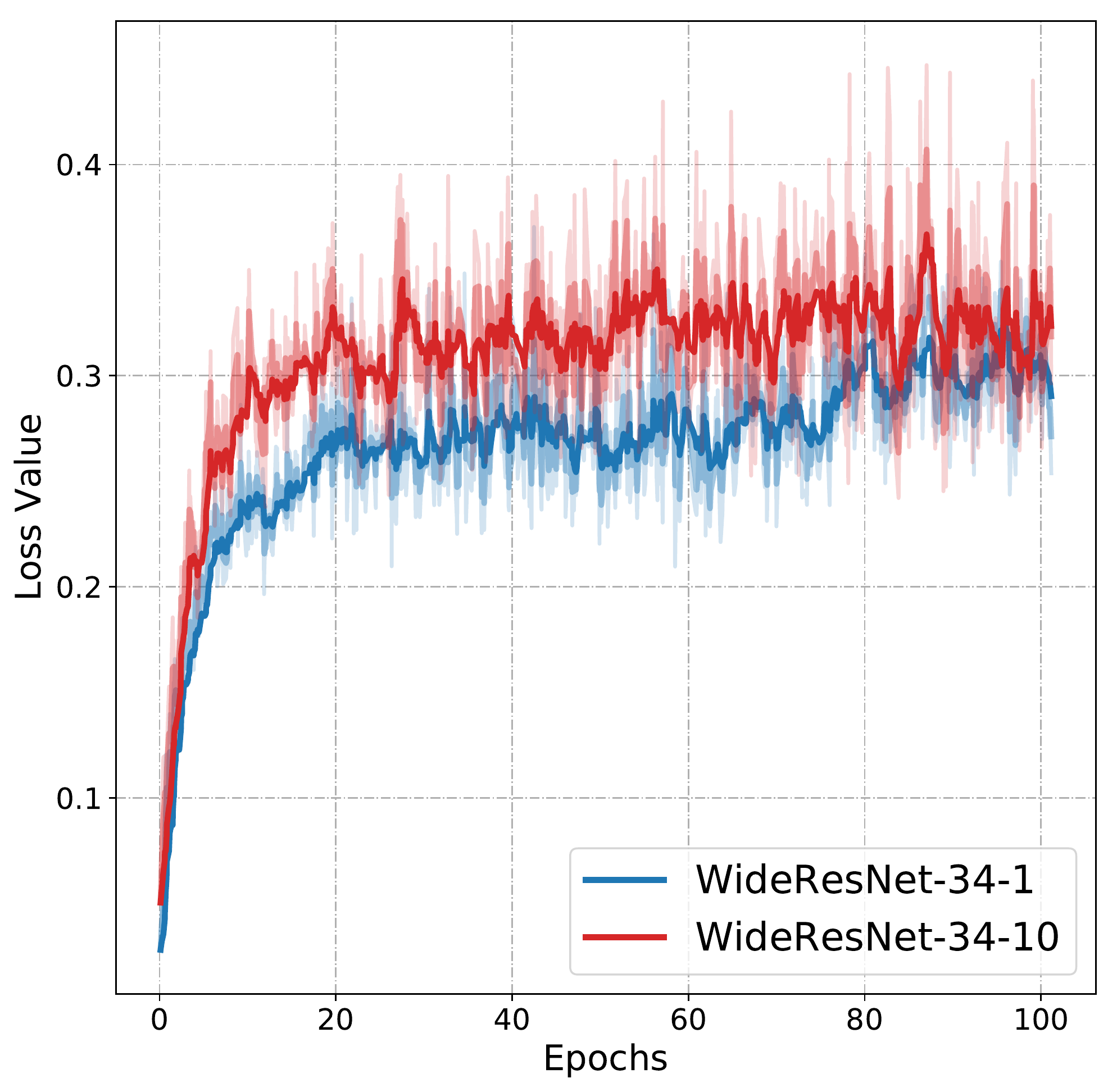}}
	\caption{Plots of both natural risk and robust regularization in \eqref{eq:general_adv}. Two $34$-layer WideResNet \citep{wrn} are trained by TRADES \citep{trades} on CIFAR10 \citep{krizhevsky2009learning}  with widen factor being $1$ and $10$.}
	\label{fig:loss}
\end{wrapfigure}
	One common belief in the practice of adversarial training is that, compared with the standard empirical risk minimization, adversarial training requires much wider neural networks to achieve better robustness. \cite{pgd} provided an intuitive explanation: robust classification requires a much more complicated decision boundary, as it needs to handle the presence of possible adversarial examples.
    % 	wider models are expected to perform better on robust regularization since it is the criterion to pull adversarial examples into the decision boundary.
	However, it remains elusive how the network width affects model robustness. To answer this question, we first examine whether the larger network width contributes to both the natural risk term and the robust regularization term in \eqref{eq:general_adv}.
	Interestingly, when tracing the value changes in \eqref{eq:general_adv} during adversarial training, we observe that the value of the robust regularization part actually gets worse on wider models, suggesting that larger network width does not lead to better stability in predictions. In Figure \ref{fig:loss}, we show the loss value comparison of two different wide models trained by TRADES \citep{trades} with $\lambda=6$ as suggested in the original paper. We can see that the wider model (i.e., WideResNet-34-10) achieves better natural risk but incurs a larger value on robust regularization. This motivates us to find out the cause of this phenomenon. %It also suggests that the relation between network width and model robustness can be complicated and there might be other factors that are neglected in previous studies.

    In this paper, we study the relationship between neural network width and model robustness for adversarially trained neural networks. Our contributions can be summarized as follows:
    \begin{enumerate}
    % [leftmargin = *]
    
        \item 
        We show that the model robustness is closely related to both natural accuracy and perturbation stability, a new metric we proposed to characterize the strength of robust regularization. The balance between the two is controlled by the robust regularization parameter $\lambda$. With the same value of $\lambda$, the natural accuracy is improved on wider models while the perturbation stability often worsens, leading to a possible decrease in the overall model robustness. This suggests that proper tuning of $\lambda$ on wide models is necessary despite being extremely time-consuming, while directly using the fine-tuned $\lambda$ on small networks to train wider ones, as many people did in practice \citep{pgd,trades}, may lead to deteriorated model robustness. 
        % This suggests that the deteriorated perturbation stability is the main reason for the marginal improved or even degenerate model robustness observed on wider models in practice.
       
        \item Unlike previous understandings that there exists a trade-off between natural accuracy and robust accuracy, we show that the real trade-off should between natural accuracy and perturbation stability. And the robust accuracy is actually the consequence of this trade-off.
    
        \item To understand the origin of the lower perturbation stability of wider networks, we further relate perturbation stability with the network's local Lipschitznesss. By leveraging recent results on neural tangent kernels \citep{ntk,allen2019convergence,  zou2020gradient,cao2019generalization, gao2019convergence}, we show that with the same value of $\lambda$, larger network width naturally leads to worse perturbation stability, which explains our empirical findings.
    
        \item Our analyses suggest that to unleash the potential of wider model architectures fully, one should mitigate the perturbation stability deterioration and enlarge robust regularization parameter $\lambda$ for training wider models. Empirical results verified the effectiveness of this strategy on benchmark datasets. In order to alleviate the heavy burden for tuning $\lambda$ on wide models, we develop the Width Adjusted Regularization (WAR) method to transfer the knowledge we gain from fine-tuning smaller networks into the training of wider networks and significantly save the tuning time.
    
    \end{enumerate}

    % The remainder of this paper is organized as follows: in Section \ref{sec:related}, we briefly review existing literature on adversarial attacks and defenses as well as robustness and generalization. We empirically study the network width and adversarial robustness for adversarial training methods in Section \ref{sec:width-exp}. In Section \ref{sec:adv_theory}, we theoretically show that larger network width leads to worse perturbation stability. In Section \ref{sec:boost}, we show that improving the perturbation stability on wide models leads to better robustness on benchmark datasets. Finally, we conclude this paper in Section \ref{sec:conclusion}.

% 	\smallskip
\vspace{-5pt}
	\noindent\textbf{Notation.} For a $d$-dimensional vector $\xb = [x_1,...,x_d]^{\top}$, we use
	$\|\xb\|_{p}= (\sum_{i=1}^{d}|x_{i}|^{p})^{1/p} $ with $p\geq 1$ to denote its $\ell_p$ norm.
    $\mathbbm{1}(\cdot)$ represents the indicator function and $\forall$ represents the universal quantifier.
\vspace{-5pt}

\section{Related Work}
\label{sec:related}

	\textbf{Adversarial attacks:} Adversarial examples were first found in \cite{intrigue}. Since then, tremendous work have been done exploring the origins of this intriguing property of deep learning \citep{gu,at-scale,analysis,transfer,gilmer2018adversarial,zhang2020understanding} as well as designing more powerful attacks \citep{fgsm,jsma,deep-fool,pgd,cw,chen2020rays} under various attack settings. \cite{obfuscated} identified the gradient masking problem and showed that many defense methods could be broken with a few changes on the attacker. \cite{zoo} proposed gradient-free black-box attacks and \cite{ilyas2018black,ilyas2018prior,chen2018frank} further improved its efficiency.
	Recently, \cite{bug-features,excessive} pointed out that adversarial examples are generated from the non-robust or invariant features hidden in the training data.

	\textbf{Defensive adversarial learning:}
	Many defense approaches have been proposed to directly learn a robust model that can defend against adversarial attacks.  \cite{pgd} proposed a general framework of robust training by solving a min-max optimization problem. \cite{wang2019convergence} proposed a new criterion to evaluate the convergence quality quantitatively. \cite{trades} theoretically studied the trade-off between natural accuracy and robust accuracy for adversarially trained models. \cite{mart} followed this framework and further improved its robustness by differentiating correctly classified and misclassified examples.
	\cite{parseval} solve the problem by restricting the variation of outputs with respect to the inputs. 
	\cite{certified,provably,rs} developed provably robust adversarial learning methods that have the theoretical guarantees on robustness. Recent works in \cite{fast-free,llr} focus on creating adversarial robust networks with faster training protocol. Another line of works focuses on increasing the effective size of the training data, either by pre-trained models~\cite{pretrain} or by semi-supervised learning~\cite{rst,uat,incomplete}.
	Very recently, \cite{wu2020adversarial} proposed to conduct adversarial weight perturbation aside from input perturbation to obtain more robust models. 
	\cite{gowal2020uncovering} achieves further robust models by practical techniques like weight averaging.

	\textbf{Robustness and generalization:}
	Earlier works like \cite{fgsm} found that adversarial learning can reduce overfitting and help generalization. However, as the arms race between attackers and defenses keeps going, it is observed that strong adversarial attacks can cause severe damage to the model's natural accuracy \cite{pgd,trades}. Many works \citep{trades,tes19,can-hurt,dobriban2020provable} attempt to explain this trade-off between robustness and natural generalization, while some other works proposed different perspectives. \cite{more-data} confirmed that more training data has the potential to close this gap. \cite{computation} suggested that a robust model is computationally difficult to learn and optimize. \cite{zhang2020understanding} showed that there is still a large gap between the currently achieved model robustness and the theoretically achievable robustness limit on natural image distributions. %In \cite{simplicity}, the existence of robust models with high natural accuracy has been proved in the setting of classification.
% 	However, the origin of this trade-off is not crystal clear, and its relation with the model complexity remains elusive. 
    \cite{allen2020feature} showed that the adversarial examples stems from the accumulation of small dense mixtures in the hidden weights during training and adversarial training works by removing such mixtures. 
    Very recently, \cite{raghunathan2020understanding} showed that this trade-off stems from overparameterization and insufficient data in the linear regression setting. 
    \cite{yang2020closer} proved that both accuracy and robustness are achievable through locally Lipschitz functions with separated data, and the gap between theory and practice is due to either failure to impose local Lipschitzness or insufficient generalization.  \cite{bubeck2020law} also studied the relationship between robustness and network size. In particular, \cite{bubeck2020law} shows that overparametrization is necessary for robustness on two-layer neural network, while we show that when networks get wider, they will have worse perturbation stability and therefore larger regularization is needed to achieve better robustness.

% ========================================================================

\section{Empirical Study on Network Width and Adversarial Robustness}
\label{sec:width-exp}
	In this section, we empirically study the relation between network width and robustness by first taking a closer look at the robust accuracy and the associated robust examples.

	\subsection{Characterization of Robust Examples}
	\label{sec:define_sta}
	Robust accuracy is the standard evaluation metric of robustness, which measures the ratio of robust examples, i.e., examples that can still be correctly classified after adversarial attacks.
	% 	whether the prediction on any $\hat{\xb} \in \mathbb{B}(\xb,\epsilon)$ equals to the desired label $y$.
	% 	This aligns with the formulation in \cite{pgd} since it adopts $\lambda=1$ and makes the internal metric of \eqref{eq:general_adv} become to $\mathcal{L}(\mathbf{\btheta}; \hat{\xb},y)$.
	
\begin{wrapfigure}{r}{0.5\linewidth}
% \vskip -0.2in
    \centering
    \includegraphics[width=0.93\linewidth]{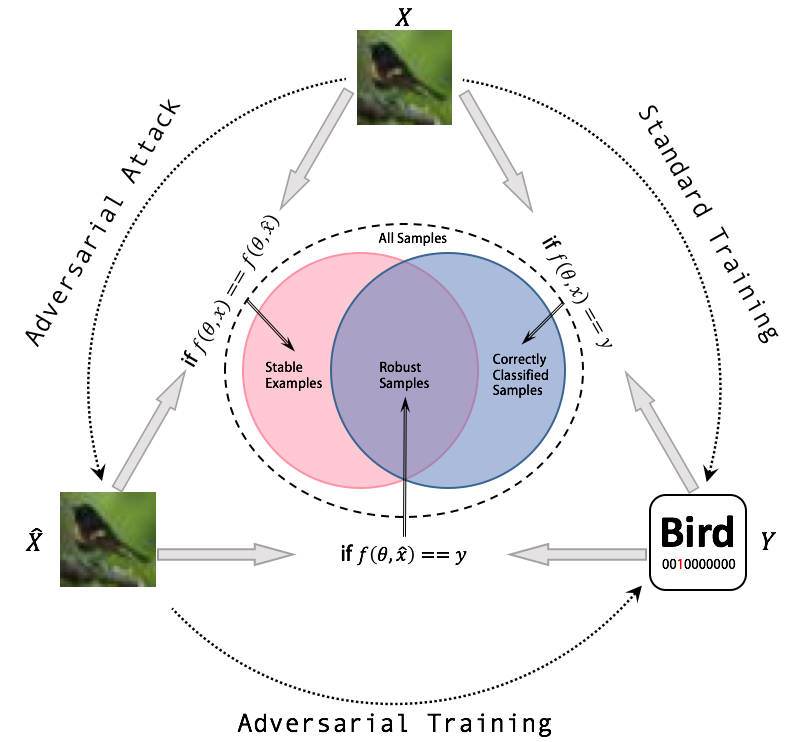}
    \caption{An illustration of robust, correctly classified, and stable examples in \eqref{eq:indicator_decouple}.}
    \label{fig:decomp}
    % \vskip -0.4in
\end{wrapfigure}
	Previous empirical results suggest that wide models enjoy both better generalization ability and model robustness. Specifically, \cite{pgd} proposed to extend ResNet \citep{resnetv2} architecture to WideResNet \citep{wrn} with a widen factor $10$ for adversarial training on the CIFAR10 dataset and found that the increased model capacity significantly improves both robust accuracy and natural accuracy. Later works~\cite{trades, mart} follow this finding and report their best result using the wide networks.

	However, as shown by our findings in Figure \ref{fig:loss}, wider models actually lead to worse robust regularization effects, suggesting that wider models are not better in all aspects, and the relation between model robustness and network width may be more intricate than what people understood previously.
    To understand the intrinsic relationship between model robustness and network width, let us first take a closer look at the robust examples. Mathematically, robust examples can be defined as $\cS_{\text{rob}}:=\big\{\xb:
    	\forall \hat{\xb} \in \mathbb{B}(\xb,\epsilon), f(\btheta; \hat{\xb}) = y
    	\big\}$.
    % which are examples that can still be correctly classified after adversarial attacks
% 	Note that a robust example should meet the following two conditions at the same time:
	Note that by definition of robust examples, we have the following equation holds:
	
    \begin{small}
        \vskip -0.2in
    	\begin{align}
    	\label{eq:indicator_decouple}
    	\underbrace{
    	\big\{\xb:
    	\forall \hat{\xb} \in \mathbb{B}(\xb,\epsilon), f(\btheta; \hat{\xb}) = y
    	\big\}
    	}_{\text{robust examples}:   \cS_{\text{rob}}}
    	= 
    	\underbrace{
    	\big\{\xb: f(\btheta; \xb) = y
    	\big\}
    	}_{\text{correctly classified examples}: \cS_{\text{correct}}}
    	\wedge
    	\underbrace{
        \big\{\xb: \forall \hat{\xb} \in \mathbb{B}(\xb,\epsilon),
    	f(\btheta; \xb) = f(\btheta; \hat{\xb})
    	\big\},
    	}_{\text{stable examples}: \cS_{\text{stable}}}
    	\end{align}
    	\vskip -0.1in
    \end{small}
    
    where $\wedge$ is the logical conjunction operator. \eqref{eq:indicator_decouple} suggests that the robust examples are the intersection of two other sets: the correctly classified examples (examples whose predictions are the correct labels) and the stable examples (examples whose predictions are the same within the $\ell_p$ norm ball). A more direct illustration of this relationship can be found in Figure \ref{fig:decomp}. While the natural accuracy measures the ratio of correctly classified examples $|\cS_{\text{correct}}|$ against the whole sample set, to our knowledge, there does not exist a metric measuring the ratio of stable examples $|\cS_{\text{stable}}|$ against whole the sample set. Here we formally define this ratio as the \textit{perturbation stability}, which measures the fraction of examples whose predictions cannot be perturbed as reflected in the robust regularization term in \eqref{eq:general_adv}.

    % and it is easy to see that minimizing the natural generalization term in \eqref{eq:general_adv} to $0$ leads to $f(\btheta; \xb) = y$.
    % Likewise, minimizing the robust regularization term to $0$ leads to $ \forall \hat{\xb} \in \mathbb{B}(\xb,\epsilon), f(\btheta; \xb) = f(\btheta; \hat{\xb})$. Thus, we can define $ \forall \hat{\xb} \in \mathbb{B}(\xb,\epsilon), f(\btheta; \xb) = f(\btheta; \hat{\xb})$ as the stable accuracy criterion to measure the quality of robust regularization.
    % Notice that \eqref{eq:general_adv} combines the two loss values with simple addition. And recently \cite{mart} proposed to use a conditional regularization loss to help adversarial training, which aligns better with the decomposition in \eqref{eq:indicator_decouple}.

	\subsection{Evaluation of Perturbation Stability}
	\label{sec:eval_sta}

    We apply the TRADES~\cite{trades} method, which is one of the strongest baselines in robust training, on CIFAR10 dataset and plot the robust accuracy, natural accuracy, and perturbation stability against the training epochs in Figure \ref{fig:accu}. Experiments are conducted on WideResNet-34 \citep{wrn} with various widen factors. For each network, when robust accuracy reaches the highest point, we record all three metrics and show their changing trend against network width in Figure \ref{fig:accu-width}. From Figure \ref{fig:accu-width}, we can observe that the perturbation stability decreases monotonically as the network width increases. This suggests that wider models are actually more vulnerable to adversarial perturbation. In this sense, the increased network width could hurt the overall model robustness to a certain extent. This can be seen from Figure \ref{fig:accu-width}, where the robust accuracy of widen-factor $5$ is actually slightly better than that of widen-factor $10$.

    Aside from the relation with model width, we can also gain other insights from perturbation stability:
	\begin{enumerate}
% 	[leftmargin = *,topsep = 3pt,itemsep= 2pt]
		\item Unlike robust accuracy and natural accuracy, perturbation stability gradually gets worse during the training process. This makes sense since an untrained model that always outputs the same label will have perfect stability, and the training process tends to break this perfect stability. From another perspective, the role of robust regularization in \eqref{eq:general_adv} is to encourage perturbation stability, such that the model predictions remain the same under small perturbations, which in turn improves model robustness.

		\item Previous works \citep{trades,tes19,can-hurt} have argued that there exists a trade-off between natural accuracy and robust accuracy. However, from \eqref{eq:indicator_decouple}, we can see that robust accuracy and natural accuracy are coupled with each other, as a robust example must first be correctly classified. When the natural accuracy goes to zero, the robust accuracy will become zero. On the other hand, higher natural accuracy also implies that more examples will likely become robust examples. Works including \cite{raghunathan2020understanding} and \cite{simplicity} also challenged this robust-natural trade-off~\cite{tes19} does not hold for some cases. Therefore, we argue that the real trade-off here should be between natural accuracy and perturbation stability and the robust accuracy is actually the consequence of this trade-off.

		\item \cite{over-fitting} has recently shown that adversarial training suffers from over-fitting as the robust accuracy might get worse as training proceeds, which can be seen in Figure \ref{fig:accu-rob}. We found that the origin of this over-fitting is mainly attributed to the degenerate perturbation stability (Figure \ref{fig:accu-sta}) rather than the natural risk (Figure \ref{fig:accu-nat}). Future works of adversarial training may consider evaluating our perturbation stability to understand how their method takes effects: do they only help natural risk, or robust regularization, or maybe both of them? 

	\end{enumerate}
    
\begin{figure*}[ht]
		\centering
		\subfigure[Robust Accuracy]{
			\label{fig:accu-rob}
			\includegraphics[width=0.23\linewidth]{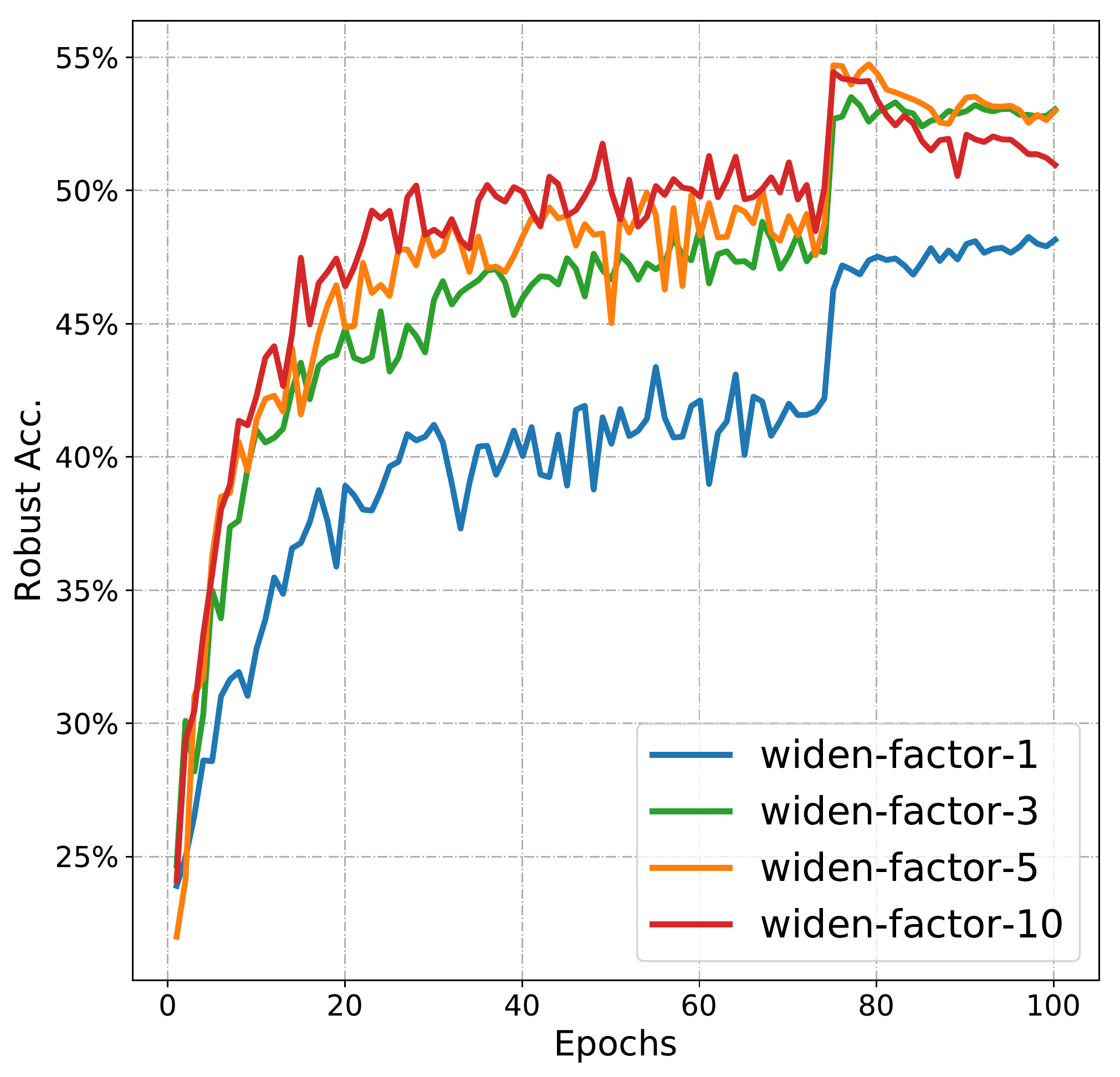}}
		\subfigure[Natural Accuracy]{
			\label{fig:accu-nat}
			\includegraphics[width=0.23\linewidth]{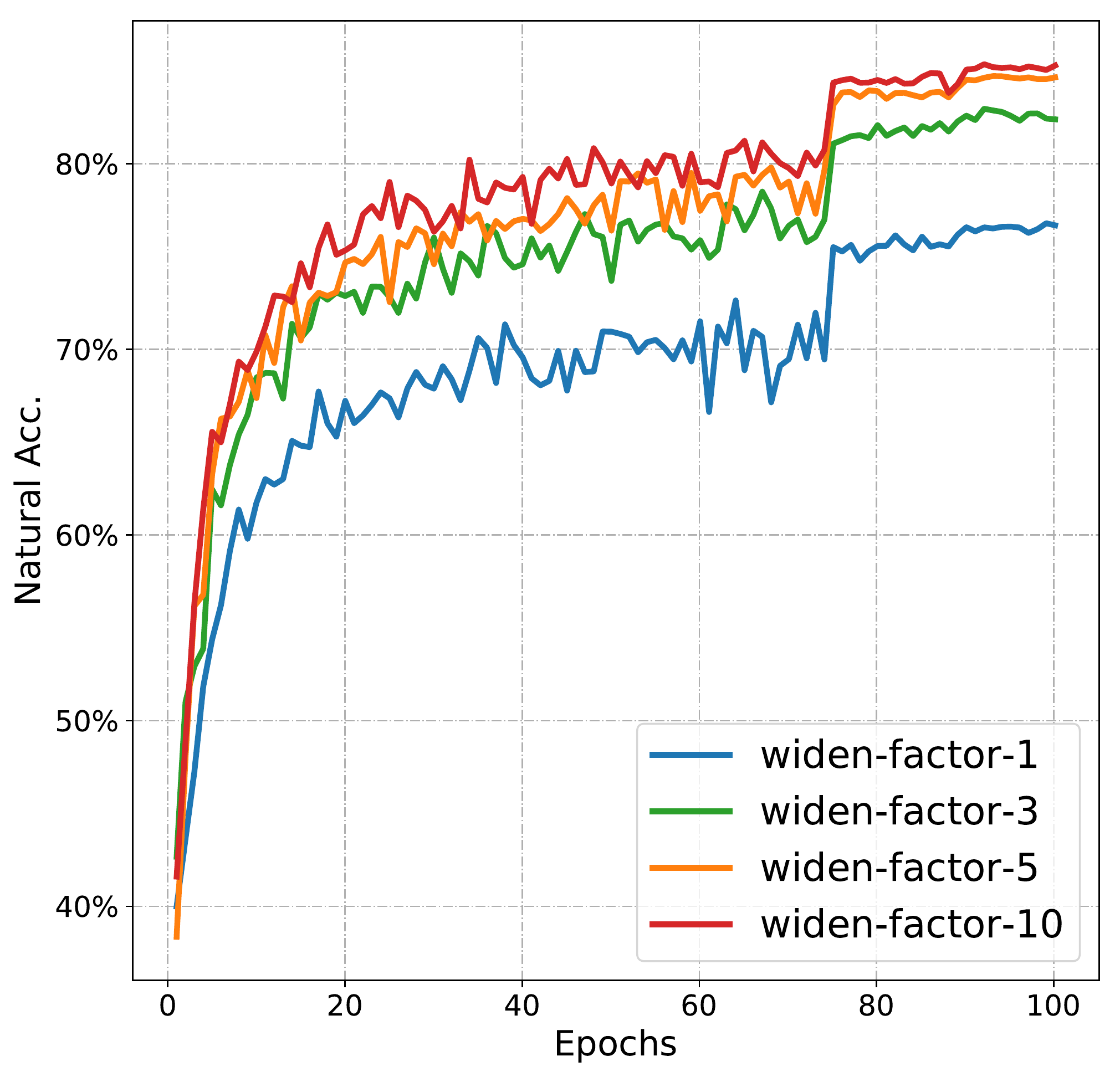}}
		\subfigure[Perturbation Stability]{
			\label{fig:accu-sta}
			\includegraphics[width=0.23\linewidth]{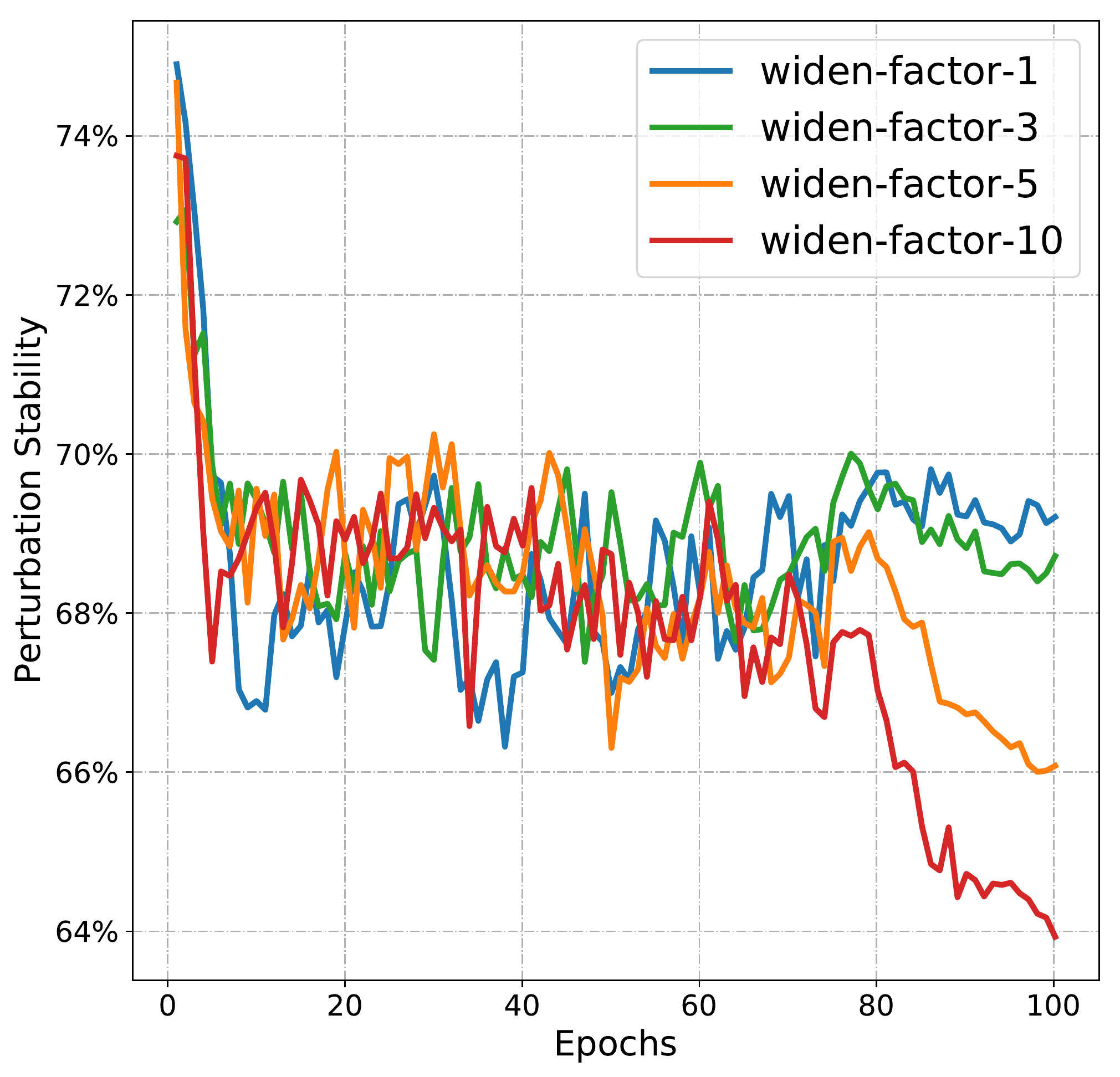}}
		\subfigure[Metrics against Width]{
				\label{fig:accu-width}
			\includegraphics[width=0.23\linewidth]{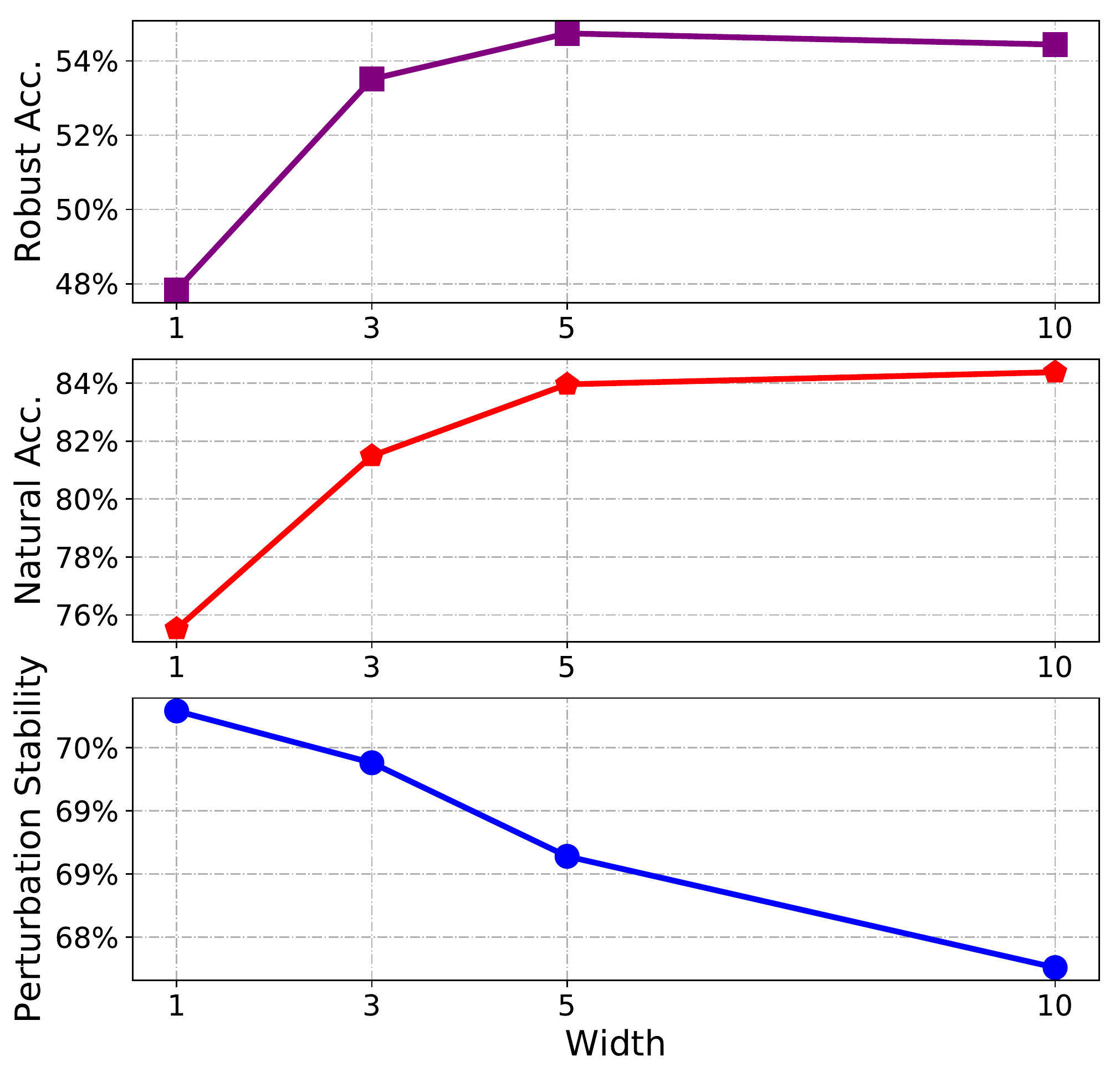}}
		\caption{Plots of (a) robust accuracy, (b) natural accuracy, and (c) perturbation stability against training epochs for networks of different width. Results are acquired on CIFAR10 with the adversarial training method TRADES and architectures of WideResNet-34. Training schedule is the same as the original work \cite{trades}. We record all three metrics when robust accuracy reaches the highest point and plot them against network width in (d).}
		\label{fig:accu}
		\vskip -0.1in
	\end{figure*}
% ========================================================================

\section{Why Larger Network Width Leads to Worse Perturbation Stability?
}
\label{sec:adv_theory}
    
    Our empirical findings in Section \ref{sec:width-exp} illustrates how the larger network width may not help model robustness as it leads to worse perturbation stability. However, it still remains unclear what the underlying reasons are for the negative correlation between the perturbation stability and the model width.
    In this section, we show that larger network width naturally leads to worse perturbation stability from a theoretical perspective. Specifically, we first relate perturbation stability with the network's local Lipschitzness and then study the relationship between local Lipschitzness and the model width by leveraging recent studies on neural tangent kernels \citep{ntk,allen2019convergence,cao2019generalization,zou2020gradient, gao2019convergence}.

    \subsection{Perturbation Stability and Local Lipschitzness}\label{sec:adv_ll}
    Previous works \citep{hein2017formal,weng2018evaluating} usually relate local Lipschitzness with network robustness, suggesting that smaller local Lipschitzness leads to robust models. Here we show that local Lipshctzness is more directly linked to perturbation stability, through which it further influences model robustness.
    
    As a start, let us first recall the definition of Lipschitz continuity and its relation with gradient norms.
    
    \begin{lemma}[Lipschitz continuity and gradient norm \citep{paulavivcius2006analysis}]
    Let $\cD \in \RR^d$ denotes a convex compact set, $f$ is a Lipschitz function if for all $\xb, \xb' \in \cD$, it satisfies
    \begin{align*}
        |f(\xb') - f(\xb)| \leq L \|\xb' - \xb\|_p,
    \end{align*}
    where $L = \sup_{\xb \in \cD}\{\|\nabla f(\xb)\|_q\}$ and $1/p + 1/q = 1$.
    \end{lemma}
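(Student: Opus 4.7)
The plan is to prove this as a direct application of the fundamental theorem of calculus along the line segment connecting $\xb$ and $\xb'$, combined with H\"older's inequality to convert the inner product into the dual pair of $\ell_p$ and $\ell_q$ norms. The statement is the classical characterization that the Lipschitz constant of a smooth function over a convex domain equals the supremum of its gradient norm in the dual norm, so the structure of the proof is well established and the key inputs are only convexity of $\cD$, (a.e.) differentiability of $f$, and H\"older duality $1/p+1/q=1$.

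Concretely, I would first fix arbitrary $\xb,\xb'\in\cD$ and define the line segment $\gamma(t)=\xb+t(\xb'-\xb)$ for $t\in[0,1]$; convexity of $\cD$ guarantees $\gamma(t)\in\cD$ for every $t$, which is what will let us invoke the sup-over-$\cD$ bound on the gradient uniformly along the path. Next I would apply the fundamental theorem of calculus to the scalar function $g(t)=f(\gamma(t))$, giving
\begin{align*}
f(\xb')-f(\xb)=\int_0^1 \nabla f(\gamma(t))^{\top}(\xb'-\xb)\,dt.
\end{align*}
Taking absolute values, moving the modulus inside the integral, and applying H\"older's inequality pointwise with the conjugate pair $(p,q)$ yields
\begin{align*}
|f(\xb')-f(\xb)|\leq \int_0^1 \|\nabla f(\gamma(t))\|_q\,\|\xb'-\xb\|_p\,dt.
\end{align*}
Finally, bounding $\|\nabla f(\gamma(t))\|_q\leq \sup_{\xb\in\cD}\|\nabla f(\xb)\|_q=L$ (valid precisely because $\gamma(t)\in\cD$) and pulling the constant out of the integral gives the claimed inequality.

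The main obstacle is not any deep argument but rather ensuring the regularity hypotheses line up: the fundamental theorem of calculus step requires $f$ to be (at least) absolutely continuous along line segments in $\cD$, which is automatic if $f$ is continuously differentiable on an open neighborhood of $\cD$, but would need a Rademacher-type argument if we only assumed Lipschitzness to get differentiability a.e. Since the lemma is being used in the paper as a tool to relate perturbation stability to a gradient-norm-based local Lipschitz constant of the network, it suffices to assume $f\in C^1$ on $\cD$, which is the natural smooth setting for neural network outputs with smooth activations; under that assumption the above three-step argument is essentially immediate and no additional machinery is required.
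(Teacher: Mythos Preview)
Your argument is the standard and correct proof of this classical fact: parametrize the segment, apply the fundamental theorem of calculus, then H\"older's inequality with the dual pair $(p,q)$, and finally bound the gradient norm by its supremum over $\cD$. The paper itself does not supply a proof for this lemma; it is stated with a citation to \cite{paulavivcius2006analysis} and used as a black-box tool, so there is nothing to compare against and your write-up would serve perfectly well as a self-contained justification.
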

    
    Intuitively speaking, Lipschitz continuity guarantees that small perturbation in the input will not lead to large changes in the function output. In the adversarial training setting where the perturbation $\xb'$ can only be chosen within the neighborhood of $\xb$, we focus on the local Lipschitz constant where we restrict $\xb' \in \mathbb{B}(\xb, \epsilon)$ and $L = \sup_{\xb' \in \mathbb{B}(\xb, \epsilon)}\{\|\nabla f(\xb')\|_q\}$.

    Now suppose our neural network loss function is local Lipschitz, let $\xb'$ be our computed adversarial example $\hat{\xb}$ and $\xb$ be the original example, the robust regularization term satisfies
    \begin{align}\label{eq:local_lipschitz}
        &\max\limits_{\hat{\xb} \in \mathbb{B}(\xb,\epsilon)}
			\big[ \mathcal{L}(\mathbf{\btheta}; \hat{\xb},y) - \mathcal{L}(\mathbf{\btheta}; \xb,y) \big]  \leq L \max\limits_{\hat{\xb} \in \mathbb{B}(\xb,\epsilon)}
			\big[\|\hat{\xb} - \xb\|_{p} \big] \leq \epsilon L,
    \end{align}
    where the first inequality is due to local Lipschitz continuity and $L = \sup_{\xb' \in \mathbb{B}(\xb, \epsilon)}\{\|\nabla \mathcal{L}(\mathbf{\btheta}; \xb',y)\|_q\}$. \eqref{eq:local_lipschitz} shows that the local Lipschitz constant is directly related to the robust regularization term, which can be used as
    a surrogate loss for the perturbation stability.

    \subsection{Local Lipschitzness and Network Width}
    \label{sec:adv_empirical}
    Now we study how the network width affects the perturbation stability via studying the local Lipschitz constant.
    
    Recently, a line of research emerges, which tries to theoretically understand the optimization and generalization behaviors of over-parameterized deep neural networks through the lens of the neural
    tangent kernel (NTK) \citep{ntk,allen2019convergence,cao2019generalization,zou2020gradient}.
    By showing the equivalence between over-parameterized neural networks and NTK in the finite width setting, this type of analysis characterizes the optimization and generalization performance of deep learning by the network architecture (e.g., network width, which we are particularly interested in).
    Recently, \cite{gao2019convergence} also analyzed the convergence of adversarial training for over-parameterized neural networks using NTK. Here, we will show that the local Lipschitz constant increases with the model width.

    In specific, let $m$ be the network width and $H$ be the network depth. Define an $H$-layer fully connected neural network as follows
    $$f(\xb) = \ab^\top\sigma(\Wb^{(H)}\sigma(\Wb^{(H-1)}\cdots\sigma(\Wb^{(1)}\xb)\cdots)),$$
     where $\Wb^{(1)}\in \RR^{m\times d}$, $\Wb^{(h)} \in \RR^{m\times m}, h=2,\ldots,H$ are the weight matrices, $\ab \in \RR^m$ is the output layer weight vector, and $\sigma(\cdot)$ is the entry-wise ReLU activation function. For notational simplicity, we denote by $\Wb = \{\Wb^{(H)}, \ldots, \Wb^{(1)}\}$ the collection of  weight matrices and by $\Wb_0 = \{\Wb^{(H)}_0, \ldots, \Wb^{(1)}_0\}$ the collection of initial weight matrices.
    Following \cite{gao2019convergence}, we assume the first layer and the last layer's weights are fixed, and $\Wb$ is updated via projected gradient descent with projection set $B(R) = \{\Wb: \|\Wb^{(h)} - \Wb^{(h)}_0\|_F \leq R/\sqrt{m}, h =1,2,\ldots,H\}$. We have the following lemma upper bounding the input gradient norm.
    \begin{lemma}\label{lemma:ntk}
    For any given input $\xb \in \RR^d$ and $\ell_2$ norm perturbation limit $\epsilon$, if $m \geq \max(d, \Omega(H\log(H)))$, $R/\sqrt{m} + \epsilon \leq c/(H^6(\log{m})^3)$ for some sufficient small $c>0$, then with probability at least $1 - O(H)e^{-\Omega(m(R/\sqrt{m} + \epsilon)^{2/3} H)}$, we have for any $\xb' \in \mathbb{B}(\xb, \epsilon)$ and Lipschitz loss $\mathcal{L}$, the input gradient norm satisfies
    $$\|\nabla \mathcal{L}(f(\xb'),y)\|_2 = O\big(\sqrt{mH}\big).$$
    \end{lemma}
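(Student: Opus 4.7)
The plan is to reduce the claim to a bound on the input--output Jacobian of $f$ and then invoke the standard NTK-regime perturbation toolkit. By the chain rule, $\nabla_\xb \mathcal{L}(f(\xb'),y) = \mathcal{L}'(f(\xb'),y)\cdot \nabla_\xb f(\xb')$, and since $\mathcal{L}$ is Lipschitz in its first argument we have $|\mathcal{L}'(f(\xb'),y)| = O(1)$, so it suffices to prove $\|\nabla_\xb f(\xb')\|_2 = O(\sqrt{mH})$. Unrolling the chain rule layer by layer and writing $\mathbf{D}^{(h)}(\xb')$ for the diagonal $\{0,1\}$ matrix of ReLU derivatives at the layer-$h$ pre-activations induced by input $\xb'$, we have
\begin{align*}
\nabla_\xb f(\xb') \;=\; (\Wb^{(1)})^\top \mathbf{D}^{(1)}(\xb')\, (\Wb^{(2)})^\top \mathbf{D}^{(2)}(\xb')\cdots (\Wb^{(H)})^\top \mathbf{D}^{(H)}(\xb')\,\ab,
\end{align*}
so the remaining task is to upper bound the $\ell_2$ norm of this telescoping product when applied to $\ab$.

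Next I would import the perturbation lemmas of \cite{allen2019convergence, zou2020gradient, gao2019convergence}, which hold on an event of the probability stated in the lemma. Under the hypothesis $R/\sqrt{m}+\epsilon \le c/(H^6\log^3 m)$, these results give, for every $h$: (a) the projection onto $B(R)$ ensures $\|\Wb^{(h)}-\Wb^{(h)}_0\|_2 \le R/\sqrt{m}$, which combined with the random-matrix initialization bound $\|\Wb^{(h)}_0\|_2 = O(1)$ yields $\|\Wb^{(h)}\|_2 = O(1)$; (b) the number of entries in which $\mathbf{D}^{(h)}(\xb')$ differs from $\mathbf{D}^{(h)}_0(\xb)$ is $O\bigl(m(R/\sqrt{m}+\epsilon)^{2/3}H\bigr)$, a vanishing fraction of $m$; and (c) for the fixed initialization and any $\xb'\in\mathbb{B}(\xb,\epsilon)$, the backward signal $\mathbf{D}^{(1)}_0(\Wb^{(2)}_0)^\top\cdots(\Wb^{(H)}_0)^\top \mathbf{D}^{(H)}_0\ab$ has $\ell_2$ norm $\Theta(\sqrt{m})$, because $\|\ab\|_2 = \Theta(\sqrt{m})$ at initialization and the Gaussian weights concentrate through the ReLU layers. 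A telescoping argument that swaps each $(\Wb^{(h)},\mathbf{D}^{(h)})$ pair one at a time for its initial value, together with bound (c) applied to the resulting initial product, then yields the claimed $O(\sqrt{mH})$ estimate.

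The main obstacle is the last telescoping step: ensuring that successively swapping all $H$ layers accumulates only a $\sqrt{H}$-type factor rather than blowing up exponentially in $H$. Each swap introduces either a small operator-norm perturbation of size $R/\sqrt{m}$ or a sparsity-limited perturbation from the flipped ReLU bits; both are precisely what the condition $R/\sqrt{m}+\epsilon \le c/(H^6\log^3 m)$ is calibrated to control, so that the per-layer multiplicative slack is of order $1/\sqrt{H}$ and the $H$ layer-wise errors sum to the $\sqrt{H}$ inflation factor. Once this bookkeeping is carried out and the $O(1)$ Lipschitz constant of $\mathcal{L}$ is absorbed, the proof is complete.
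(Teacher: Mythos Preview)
Your proposal follows essentially the same route as the paper: reduce via the chain rule and Lipschitzness of $\mathcal{L}$ to bounding $\|\nabla_\xb f(\xb')\|_2$, write this as the telescoping product $\ab^\top \mathbf{D}^{(H)}\Wb^{(H)}\cdots \mathbf{D}^{(1)}\Wb^{(1)}$, and then control it by comparing to its value at initialization using the NTK-regime perturbation lemmas of \cite{gao2019convergence,allen2019convergence}. The paper carries this out by a single triangle inequality that splits off the initialization term and then directly invokes Lemma~A.3 of \cite{gao2019convergence} for the initialization bound $O(\sqrt{mH})$ and Lemma~A.5 (Eq.~(12)) for the perturbation bound $O\bigl((R/\sqrt{m}+\epsilon)^{1/3}H^2\sqrt{m\log m}\bigr)$, which under the stated hypothesis is dominated by $O(\sqrt{mH})$.

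The one place where your bookkeeping diverges from the paper is the origin of the $\sqrt{H}$ factor. You assert that the backward signal at initialization has norm $\Theta(\sqrt{m})$ and that the extra $\sqrt{H}$ arises from accumulating the $H$ layerwise swap errors in the telescoping. In the paper's decomposition it is the other way around: the $O(\sqrt{mH})$ already comes from the cited initialization bound (Lemma~A.3), and the telescoping/perturbation contribution is the \emph{smaller} of the two terms once the condition on $R/\sqrt{m}+\epsilon$ is enforced. Your heuristic ``per-layer slack of order $1/\sqrt{H}$ summing to $\sqrt{H}$'' is therefore not the mechanism actually used, and would need a separate argument if you wanted to make it rigorous. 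If you instead cite the two lemmas as the paper does, the proof closes immediately with no need for that last paragraph.
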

    % \begin{lemma}[Informal]\label{lemma:ntk}
    % For any given input $\xb \in \RR^d$ and $\ell_2$ norm perturbation limit $\epsilon$, if the $H$ layer neural network has sufficiently large network width $m$, for any $\xb' \in \mathbb{B}(\xb, \epsilon)$ and Lipschitz loss $\mathcal{L}$, with high probability that the input gradient norm satisfies
    % $$\|\nabla \mathcal{L}(f(\xb'),y)\|_2 = O\big(\sqrt{mH}\big).$$
    % \end{lemma}
    % A formal version of Lemma \ref{lemma:ntk} as well as its proof can be found in the supplemental materials.
    
    \begin{wrapfigure}{R}{0.5\linewidth}
    \vskip -0.2in
    	\centering
    	\setlength{\belowcaptionskip}{-4pt}
    	\setlength{\abovecaptionskip}{-4pt}
    	\subfigure{
    		\includegraphics[width=0.95\linewidth]{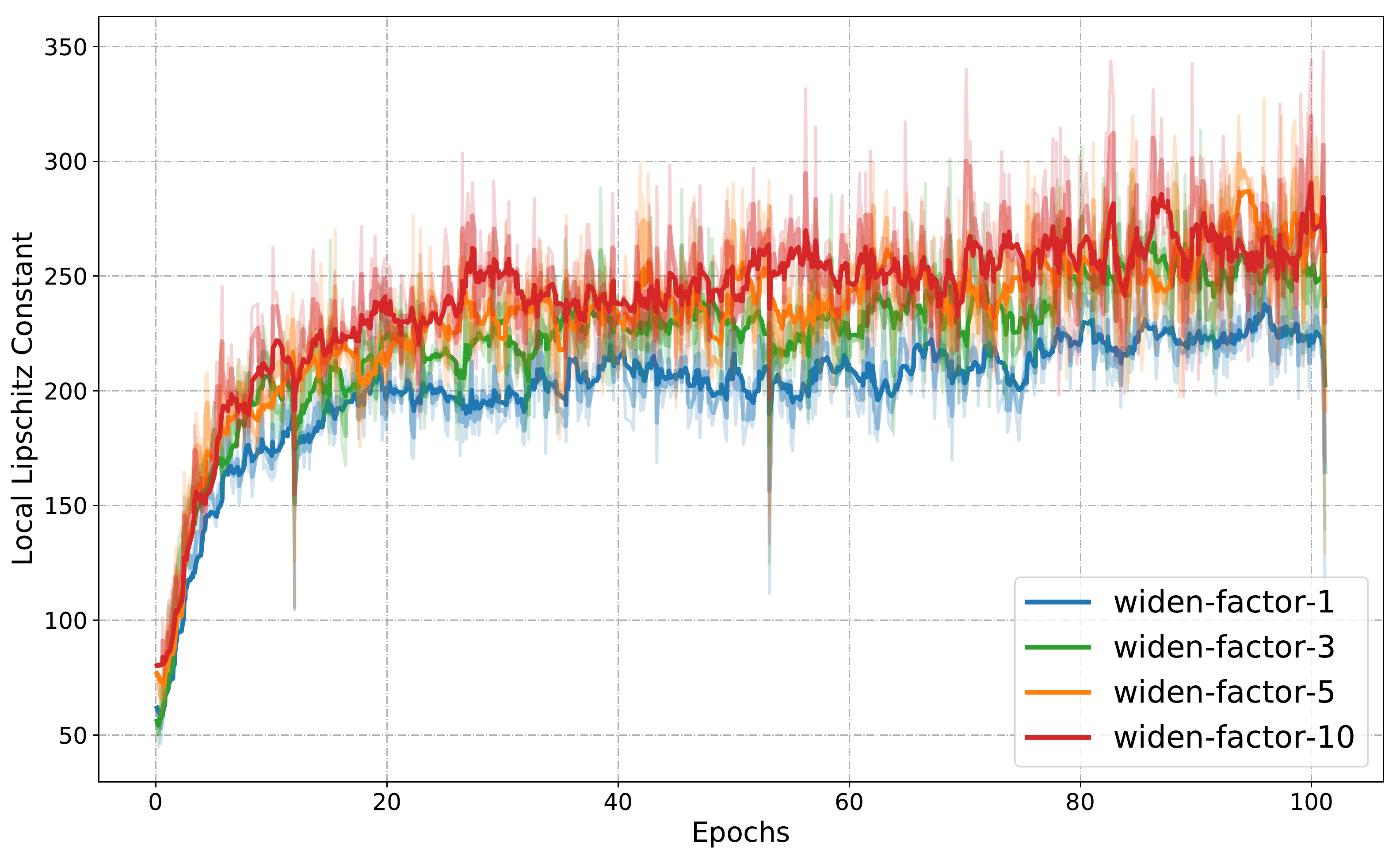}}
    	\caption{Plot of approximated local Lipschitz constant along the adversarial training trajectory. Models are trained by TRADES \citep{trades} on CIFAR10 dataset using WideResNet model. Wider networks in general have larger local Lipschitz constants.}
    	\label{fig:grad}
    	\vskip -0.1in
    \end{wrapfigure}
    The proof of Lemma \ref{lemma:ntk} can be found in the supplemental materials.
    Note that Lemma \ref{lemma:ntk} holds for any $\xb' \in \mathbb{B}(\xb, \epsilon)$, therefore, the maximum input gradient norm in the $\epsilon$-ball is also in the order of $O(\sqrt{mH})$.
    Lemma \ref{lemma:ntk} suggests that the local Lipschitz constant is closely related to the neural network width $m$. In particular, the local Lipschitz constant scales as the square root of the network width. This in theory explains why wider networks are more vulnerable to adversarial perturbation. %Although this lemma also has several limitations inherited from the NTK type of analysis (e.g., only for $\ell_2$ norm case in over-parameterized settings), it provides useful guidance on the relationship between the local Lipschitz constant and the network width.\

    In order to further verify the above theoretical result, we empirically calculate the local Lipschitz constant. In detail, for commonly used $\ell_{\infty}$ norm threat model, we evaluate the quantity $\sup_{\xb' \in \mathbb{B}(\xb, \epsilon)}\{\|\nabla \mathcal{L}(\mathbf{\btheta}; \xb',y)\|_1\}$ along the adversarial training trajectory for networks with different widths. Note that solving this maximization problem along the entire training trajectory is computationally expensive or even intractable. Therefore, we approximate this quantity by choosing the maximum input gradient $\ell_1$-norm among the $10$ attack steps for each iteration.
    Figure \ref{fig:grad} shows that larger network width indeed leads to larger local Lipschitz constant values. This backup the theoretical results in Lemma \ref{lemma:ntk}.

\section{Experiments}
\label{sec:boost}
	From Section \ref{sec:adv_theory}, we know that wider networks have worse perturbation stability. This suggests that to fully unleash the potential of wide model architectures, we need to carefully control the decreasing of the perturbation stability on wide models. One natural strategy to do this is by adopting a larger robust regularization parameter $\lambda$ in \eqref{eq:general_adv}. In this section, we conduct thorough experiments to verify whether this strategy can mitigate the negative effects on perturbation stability and achieve better performances for wider networks.

    It is worth noting that due to the high computational overhead of adversarial training on wide networks, previous works \citep{trades} tuned $\lambda$ on smaller networks (ResNet18~\cite{resnetv1}) and directly apply it on wider ones, neglecting the influence of model capacity.
    Our analysis suggests that using the same $\lambda$ for models with different widths is suboptimal, and one should use a larger $\lambda$ for wider models in order to get better model robustness. %This shed some light on how to choose the robust regularization parameter in practice.

\vspace{-5pt}
	\subsection{Experimental Settings}
	\label{exp}
	We conduct our experiments on CIFAR10 \citep{krizhevsky2009learning} dataset, which is the most popular dataset in the adversarial training literature. It contains images from $10$ different categories, with $50k$ images for training and $10k$ for testing. Here we first conduct our experiments using the TRADES \citep{trades} method. Networks are chosen from WideResNet \citep{wrn} with different widen factor from $1, 5, 10$. The batch size is set to $128$, and we train each model for $100$ epochs. The initial learning rate is set to be $0.1$. We adopt a slightly different learning rate decay schedule: instead of dividing the learning rate by $10$ after $75$-th epoch and $90$-th epoch as in \cite{pgd, trades, mart}, we halve the learning rate for every epoch after the $75$-th epoch, for the purpose of preventing over-fitting. For evaluating the model robustness, we perform the standard PGD attack \citep{pgd} using $20$ steps with step size $0.007$, and $\epsilon=8/255$. Note that previous works \citep{trades, mart} report their results using step size $0.003$, which we found is actually less effective than ours. All experiments are conducted on a single NVIDIA V100 GPU.

\begin{table*}[t!]
		\caption{The three metrics under PGD attack with different $\lambda$ on CIFAR10 dataset using WideResNet-34 model. We test TRADES as well as our (generalized) adversarial training. Each experiment is repeated three times. The highest robustness value for each column is annotated with bold number. From the table, we can tell that: 1) The best choice of $\lambda$ increases as the network width increases; 2) For models with the same width, the larger $\lambda$ always leads to higher perturbation stability; 3) With the same $\lambda$, the larger width always hurts perturbation stability, which backs up our claim in Section \ref{sec:adv_empirical}. }
		\label{table:three-accu}
		\centering
%		\vskip 0.05in
		\resizebox{\textwidth}{!}{
			\begin{tabular}{cccc|ccc|ccc}
				\toprule
				& \multicolumn{3}{c}{Robust Accuracy (\%)}
				& \multicolumn{3}{c}{Natural Accuracy (\%)}
				& \multicolumn{3}{c}{Perturbation Stability (\%)}\\
				\cmidrule(r){2-4}
				\cmidrule(r){5-7}
				\cmidrule(r){8-10}
				$\lambda$
				& width-1     & width-5     & width-10
				& width-1     & width-5     & width-10
				& width-1     & width-5     & width-10 \\
				\midrule
				& \multicolumn{9}{c}{TRADES~\cite{trades}} \\
				\midrule
				6    & 47.81$\pm$.09           & 54.45$\pm$.16           & 54.18$\pm$.39
				     & \textbf{76.26$\pm$.10}  & \textbf{84.44$\pm$.06}  & \textbf{84.90$\pm$.80}
				     & 69.33$\pm$.05           & 68.27$\pm$.22           & 67.25$\pm$.39 \\
				9    & \textbf{48.01$\pm$.06}  & 55.34$\pm$.17           & 55.29$\pm$.45
				     & 73.78$\pm$.30           & 82.77$\pm$.07           & 84.13$\pm$.28
				     & 71.92$\pm$.33           & 70.66$\pm$.26           & 69.08$\pm$.80 \\
				12   & 47.87$\pm$.06           & \textbf{55.61$\pm$.04}  & 55.98$\pm$.13
				     & 72.29$\pm$.25           & 81.59$\pm$.20           & 83.59$\pm$.62
				     & 73.33$\pm$.16           & 72.00$\pm$.20           & 70.18$\pm$.67 \\
				15   & 47.15$\pm$.13           & 55.49$\pm$.15           & 55.96$\pm$.09
				     & 70.98$\pm$.24           & 80.69$\pm$.08           & 82.81$\pm$.19
				     & 73.79$\pm$.27           & 72.87$\pm$.03           & 70.87$\pm$.23 \\
				18   & 47.02$\pm$.13           & 55.43$\pm$.12           & \textbf{56.43$\pm$.17}
				     & 70.13$\pm$.06           & 79.97$\pm$.12           & 82.21$\pm$.21
				     & 74.63$\pm$.11  & 73.77$\pm$.13  & 72.04$\pm$.30 \\
			    21   & 46.26$\pm$.19  & 55.31$\pm$.20  & 56.07$\pm$.21
				     & 68.95$\pm$.38  & 79.25$\pm$.23  & 81.74$\pm$.12
				     & \textbf{75.17$\pm$.28}  & \textbf{74.15$\pm$.38}  & \textbf{72.11$\pm$.12} \\
			    \midrule
				& \multicolumn{9}{c}{Adversarial Training~\cite{pgd}} \\
				\midrule
				1.00 & 47.99$\pm$.16           & 50.87$\pm$.42           & 50.12$\pm$.13
				     & \textbf{77.30$\pm$.01}  & \textbf{85.82$\pm$.01}  & \textbf{85.62$\pm$.81}
				     & 66.48$\pm$.24           & 62.23$\pm$.42           & 61.62$\pm$.46 \\
				1.25 & \textbf{49.24$\pm$.12}  & 53.10$\pm$.09           & 51.97$\pm$.46
				     & 74.04$\pm$.47           & 84.73$\pm$.22           & 86.25$\pm$.12
				     & 70.34$\pm$.54           & 65.24$\pm$.08           & 62.94$\pm$.35 \\
				1.50 & 49.11$\pm$.03           & 54.15$\pm$.03           & 53.25$\pm$.52
				     & 72.16$\pm$.25           & 84.35$\pm$.19           & 85.50$\pm$.57
				     & 72.10$\pm$.11           & 66.65$\pm$.06           & 64.51$\pm$.72 \\
				1.75 & 48.32$\pm$.63           & \textbf{54.36$\pm$.14}  & 53.65$\pm$.80
				     & 70.66$\pm$.46           & 83.95$\pm$.30           & 85.52$\pm$.24
				     & 72.43$\pm$.40           & 67.31$\pm$.03           & 65.67$\pm$.10 \\
				2.00 & 47.44$\pm$.06           & 54.10$\pm$.15           & \textbf{55.78$\pm$.22}
				     & 69.67$\pm$.09           & 83.49$\pm$.06           & 85.41$\pm$.13
				     & \textbf{72.73$\pm$.04}  & \textbf{67.53$\pm$.01}  & \textbf{65.71$\pm$.15} \\
				\bottomrule
			\end{tabular}
		}
		\vskip -0.2in
	\end{table*}

	\subsection{Model Robustness with Larger Robust Regularization Parameter}\label{sec:boost-exp}

	We first compare the robustness performance of models with different network width using robust regularization parameters chosen from $\{6, 9, 12, 15, 18, 21\}$ for TRADES \citep{trades}.
	Results of different evaluation metrics are presented in Table \ref{table:three-accu}.

	From Table \ref{table:three-accu}, we can observe that the best robust accuracy for width-$1$ network is achieved when $\lambda = 9$, yet for width-$5$ network, the best robust accuracy is achieved when $\lambda = 12$, and for width-$10$ network, the best $\lambda$ is $18$. This suggests that wider networks indeed need a larger robust regularization parameter to unleash the power of wide model architecture fully. 
	Our exploration also suggests that the optimal choice of $\lambda$ for width-$10$ network is $18$ under the same setting as \cite{trades}, which is three times larger than the one used in the original paper, leading to an average improvement of $2.25\%$ on robust accuracy.
	It is also worth noting that enlarging $\lambda$ indeed leads to improved perturbation stability. Under the same $\lambda$, wider networks have worse perturbation stability. This observation is rather consistent with our empirical and theoretical findings in Sections \ref{sec:width-exp} and \ref{sec:adv_theory}. As stated in Section~\ref{sec:eval_sta}, the real trade-off is between natural accuracy and perturbation stability rather than robust accuracy. Also, the stability provides a clear hint for finding the best choice of $\lambda$.
	
	We further show that our strategy also applies to the original adversarial training~\cite{pgd}, as shown by the bottom part of Table \ref{table:three-accu}. Proper adaptations should be made to boost the robust regularization for original (generalized) adversarial training. We show the detail of the adaptations in the Appendix. As shown by the table, the large improvements on both TRADES and adversarial training using our boosting strategy suggest that adopting larger $\lambda$ is crucial in unleashing the full potential of wide models, which is usually neglected in practice.

	\begin{table*}[t!]
		\caption{Robust accuracy ($\%$) for different datasets, architectures and regularization parameters under various attacks. The highest results are evaluated for three times of randomly started attack. Our approach of boosting regularization for wider models apply to all cases. The value of $w$ and $k$ represents the network width.}
		\label{table:model-dataset}
		\centering
		\vskip 0.02in
		\resizebox{\textwidth}{!}{
		\begin{tabular}{ c|c|c|l|cccc } 
			\toprule
			Dataset & Architecture & \tabincell{c}{widen-factor/\\growth-rate} 
			& \tabincell{c}{regulari-\\zation} 
			& \tabincell{c}{PGD}
			& \tabincell{c}{C\&W}
			& \tabincell{c}{FAB}
			& \tabincell{c}{Square} \\
			\midrule
			\midrule
			\multirow{15}{*}{CIFAR10} 
			& \multirow{9}{*}{WideResNet-34}  
			& \multirow{3}{*}{$w=1$}  
			& $\lambda=6$    & \textbf{47.92$\pm$.01} & \textbf{44.95$\pm$.03} & \textbf{44.31$\pm$.04}  & \textbf{49.25$\pm$.02} \\ 
			&&& $\lambda=12$  & 47.91$\pm$.04 & 44.24$\pm$.02 & 43.71$\pm$.05 & 47.75$\pm$.02 \\ 
			&&& $\lambda=18$  & 46.92$\pm$.05 & 43.48$\pm$.03 & 43.00$\pm$.01 & 46.01$\pm$.05 \\ 
			\cmidrule{3-8}
			&& \multirow{3}{*}{$w=5$}  
			& $\lambda=6$     & 54.50$\pm$.03 & 53.14$\pm$.03 & 52.13$\pm$.05 & 56.79$\pm$.02 \\ 
			&&& $\lambda=12$  & \textbf{55.56$\pm$.04} & \textbf{53.28$\pm$.04} & \textbf{52.55$\pm$.02}  & \textbf{56.88$\pm$.05} \\ 
			&&& $\lambda=18$  & 55.21$\pm$.02 & 52.64$\pm$.02 & 52.18$\pm$.01 & 56.31$\pm$.01 \\  
			\cmidrule{3-8}
			&& \multirow{3}{*}{$w=10$}  
			& $\lambda=6$     & 54.23$\pm$.04 & 54.02$\pm$.03 & 52.68$\pm$.07 & 57.64$\pm$.03 \\   
			&&& $\lambda=12$  & 55.80$\pm$.06 & 54.41$\pm$.01 & 53.57$\pm$.04 & 57.72$\pm$.10 \\  
			&&& $\lambda=18$  & \textbf{56.29$\pm$.10} & \textbf{54.57$\pm$.02} & \textbf{54.06$\pm$.02}  & \textbf{58.04$\pm$.05} \\ 
			\cmidrule{2-8}
			& \multirow{6}{*}{DenseNet-BC-40}  
			& \multirow{3}{*}{$k=12$}  
			& $\lambda=6$     & \textbf{44.79$\pm$.02} & 40.83$\pm$.03 & \textbf{40.07$\pm$.03}  & \textbf{45.66$\pm$.05} \\ 
			&&& $\lambda=12$  & 44.66$\pm$.03 & \textbf{40.91$\pm$.03} & 39.88$\pm$.01 & 44.23$\pm$.04 \\   
			&&& $\lambda=18$  & 44.38$\pm$.05 & 40.63$\pm$.03 & 39.42$\pm$.01 & 43.31$\pm$.04 \\  
			\cmidrule{3-8}
			&& \multirow{3}{*}{$k=64$}  
			& $\lambda=6$     & 55.51$\pm$.01 & 52.76$\pm$.04 & 51.74$\pm$.02 & 57.24$\pm$.01 \\ 
			&&& $\lambda=12$  & \textbf{55.85$\pm$.03} & \textbf{52.98$\pm$.02} & \textbf{52.10$\pm$.03}  & \textbf{57.34$\pm$.04} \\ 
			&&& $\lambda=18$  & 55.71$\pm$.03 & 52.83$\pm$.06 & 51.66$\pm$.04 & 55.21$\pm$.03 \\    
			\midrule
			\multirow{9}{*}{CIFAR100} 
			& \multirow{9}{*}{WideResNet-34}  
			& \multirow{3}{*}{$w=1$}  
			& $\lambda=6$     & \textbf{24.28$\pm$.02} & \textbf{20.24$\pm$.01} & \textbf{19.97$\pm$.02}  & \textbf{22.91$\pm$.02} \\ 
			&&& $\lambda=12$  & 24.18$\pm$.04 & 20.15$\pm$.02 & 19.83$\pm$.01 & 22.78$\pm$.01 \\  
			&&& $\lambda=18$  & 23.99$\pm$.03 & 20.01$\pm$.02 & 19.01$\pm$.01 & 22.04$\pm$.01 \\  
			\cmidrule{3-8}
			&& \multirow{3}{*}{$w=5$}  
			& $\lambda=6$     & 30.73$\pm$.03 & 27.25$\pm$.05 & 26.01$\pm$.03 & 30.11$\pm$.03 \\  
			&&& $\lambda=12$  & \textbf{31.57$\pm$.02} & \textbf{27.83$\pm$.02} & \textbf{27.08$\pm$.01}  & \textbf{30.45$\pm$.01} \\ 
			&&& $\lambda=18$  & 31.38$\pm$.01 & 27.66$\pm$.04 & 26.94$\pm$.03 & 30.02$\pm$.01 \\    
			\cmidrule{3-8}
			&& \multirow{3}{*}{$w=10$}  
			& $\lambda=6$     & 30.48$\pm$.02 & 27.98$\pm$.01 & 27.00$\pm$.11 & 30.45$\pm$.06 \\ 
			&&& $\lambda=12$  & 31.75$\pm$.09 & 29.25$\pm$.04 & 28.14$\pm$.03 & 31.23$\pm$.04 \\   
			&&& $\lambda=18$  & \textbf{32.98$\pm$.03} & \textbf{29.83$\pm$.01} & \textbf{28.78$\pm$.02}  & \textbf{32.02$\pm$.01} \\ 
			\bottomrule
		\end{tabular}}
		\vskip -0.1in
	\end{table*}

	\subsection{Experiments on Different Datasets and Architectures}
	\label{sec:boost-all}
	
	To show that our theory is universal and is applicable to various datasets and architectures, we conduct extra experiments on the CIFAR100 dataset and DenseNet model \cite{dense}.
	For the DenseNet models, the growth rate $k$ denotes how fast the number of channels grows and thus becomes a suitable measure of network width. Following the original paper \cite{dense}, we choose DenseNet-BC-$40$ and use models with different growth rates to verify our theory.

	Experimental results are shown in Table \ref{table:model-dataset}.
	For completeness, we also report the results under four different attack methods and settings, including PGD~\cite{pgd}, C\&W~\cite{cw}, FAB~\cite{fab}, and Square~\cite{square}. We adopt the best $\lambda$ from Table \ref{table:three-accu} and show the corresponding performance on models with different widths. It can be seen that our strategy of using a larger robust regularization parameter works very well across different datasets and networks. On the WideResNet model, we observe clear patterns as in Section \ref{sec:boost-exp}. On the DenseNet model, although the best regularization $\lambda$ is different from that of WideResNet, wider models, in general, still require larger $\lambda$ for better robustness. On CIFAR100, our strategy raises the standard PGD score of the widest model from $30.48\%$ to $32.98\%$.

    \subsection{Width Adjusted Regularization}
    
% \begin{wraptable}{r}{0.5\linewidth}
% \vskip -0.3in
% 	\caption{Comparison of TRADES with and without Width Adjusted Regularization.}
% % 	\vskip -2pt
% 	\label{table:war}
% 	\small
% % 	\setlength{\belowcaptionskip}{-2pt}
% % 	\centering
% % 	\resizebox{0.8\textwidth}{!}{
% 	\begin{tabular}{ ccccccccc } 
% 		\toprule
% 		 Model & WAR
% 		& PGD & $\lambda$ & Total Time\\
% 		\midrule
% 	    \multirow{3}{*}{WRN-34-1}
% 		& -
% 		& 47.81 & 6.00   \\ 
% 		& -
% 		&  & & best   \\ 
% 		& $\checkmark$
% 		& \textbf{48.17} & 9.42  \\ 
% 		\midrule
% % 		\cmidrule(r){2-5}
% 		\multirow{3}{*}{WRN-34-5}
% 		& -
% 		& 54.45 & 6.00   \\ 
% 		& -
% 		&  & & best   \\ 
% 		& $\checkmark$
% 		& \textbf{55.70} & 15.73  \\
% 		\midrule
% % 		\cmidrule(r){2-5}
% 		\multirow{3}{*}{WRN-34-10}
% 		& -
% 		& 54.18 & 6.00   \\ 
% 		& -
% 		&  & & best   \\ 
		 
% 		& $\checkmark$
% 		& \textbf{56.21} & 16.41  \\ 
% 		\bottomrule
% 	\end{tabular}
% % 	}
% 	\vskip -0.1in
% \end{wraptable}

    Our previous analysis has shown that larger model width may hurt adversarial robustness without properly choosing the regularization parameter $\lambda$. However, exhaustively cross-validating $\lambda$ on wider networks can be extremely time-consuming in practice. To address this issue, we investigate the possibility of automatically adjusting $\lambda$ according to the model width, based on our existing knowledge obtained in fine-tuning smaller networks, which is much cheaper. Note that the key to achieving the best robustness is to well balance between the natural risk term and the robust regularization term in \eqref{eq:general_adv}. Although the regularization parameter $\lambda$ cannot be directly applied from thinner networks to wider networks (as suggested by our analyses), the best ratio between the natural risk and the robust regularization across different width models can be kept roughly the same. Following this idea, we design the \textbf{W}idth \textbf{A}djusted \textbf{R}egularization (WAR) method, which is summarized in Algorithm \ref{alg:war}. Specifically, we first manually tune the best $\lambda$ for a thin network and record the ratio $\zeta$ between the natural risk and the robust regularization when the training converges. Then, on training wider networks, we adaptively\footnote{the learning rate $\alpha$ for $\lambda_t$ in Algorithm \ref{alg:war} is not sensitive and needs no extra tuning.} adjust $\lambda$ to encourage the ratio between the natural risk and the robust regularization to stay close to $\zeta$. 
    Let's take an example here. We first  cross-validate $\lambda$ on a thin network with widen factor $0.5$ and identify the best $\lambda=6$ and $\zeta=30$ with $18$ GPU hours in total. Now we compare three different strategies for training wider models and summarize the results in Table \ref{table:war}: 1) directly apply $\lambda=6$ with no fine-tuning on the current model; 2) exhaustive manual fine-tuning from $\lambda=6.0$ to $\lambda=21.0$ (6 trials) as in Table~\ref{table:three-accu}; 3) our WAR strategy. 
    Table \ref{table:war} shows that the final $\lambda$ generated by WAR on wider models are consistent with the exhaustively tuned best $\lambda$. Compared to the exhaustive manual tuning strategy, WAR achieves even slightly better model robustness with much less overall training time ($\sim$4 times speedup for WRN-34-10 model). On the other hand, directly using $\lambda=6$ with no tuning on the wide models leads to much worse model robustness while having the same overall training time. This verifies the effectiveness of our proposed WAR method.
    % In contrast, WAR only requires a single trial to find a proper $\lambda$ plus the tuning time on small networks, which is much cheaper. We present the detail of GPU time calculation in Appendix.
    
    % By manually tuning for $\lambda$ on the base WRN-34-1 model, we learned $\eta=30$ and adopted this value to train the wider WRN-34-5 model and WRN-34-10 model, respectively.
    % Table \ref{table:war} shows that
    % the final $\lambda$ generated using WAR on wider models are approximately consistent with the exhaustively tuned best $\lambda$. As a sanity check, we also observe that when we apply WAR to train the base model of WRN-34-1 again and it learns a $\lambda=9.12$ which is close to the best value of $\lambda=9.0$. We also present the total tuning time needs for different strategies in terms of GPU hours. For WRN-34-10 model, the exhaustive manual fine-tuning needs six trials from $\lambda=6.0$ to $\lambda=21.0$ to find the best choice of $\lambda=18.0$, as illustrated in Table~\ref{table:three-accu}, while WAR only requires a single trial to find a proper $\lambda$ plus the tuning time on small networks, which is much cheaper. 

\begin{table}[]
\centering
\begin{minipage}{.49\textwidth}
\vskip -0.25in
\begin{algorithm}[H]
\small
\caption{Width Adjusted Regularization}
\label{alg:war}
 \begin{algorithmic}[1]
    \STATE {{\bfseries Input}: initial weights $\btheta_0$, WAR parameter $\zeta$, learning rate $\eta$, adversarial attack $\cA$}
    \STATE{$\lambda_0 = 0$, $\alpha=0.1$}
    \FOR{$t = 1, \dots, T$}
    \STATE{Get mini-batch $\{(\xb_1,y_1), \dots, (\xb_m,y_m) \}$}
    \FOR{$i = 1, \dots, m$ (in parallel)}
    \STATE{$\hat{\xb}_i \gets \cA(\xb_i)$}
    \STATE{$l_{\text{nat}} \gets \mathcal{L}(\mathbf{\btheta}_t; \xb_i,y_i)$}
    \STATE{$l_{\text{rob}} \gets \mathcal{L}(\mathbf{\btheta}_t; \hat{\xb}_i,y_i)
			- \mathcal{L}(\mathbf{\btheta}_t; \xb_i,y_i)$}
    \STATE{$\lambda_t \gets \max(\lambda_{t-1} + \alpha \cdot (\zeta - (l_{\text{nat}}/l_{\text{rob}}), 0) $}
    \STATE{$\btheta_t \gets \btheta_{t-1} - (\eta/m) \sum_{i=1}^{m} \nabla_{\btheta} [l_{\text{nat}} + \lambda_t \cdot l_{\text{rob}}]  $}
    \ENDFOR
    \ENDFOR
    % \STATE {{\bfseries until} training converged}.
    % \STATE {{\bfseries Output}: Robust network $\btheta_T$}    
 \end{algorithmic}
\end{algorithm}
\end{minipage}
\hfill
\begin{minipage}{.49\textwidth}
% \begin{wraptable}{R}{0.5\linewidth}
\vskip -0.3in
	\caption{Comparison of TRADES with different tuning strategies. N/A denotes no fine-tuning of the current model (tuning on small networks only). Manual represents exhaustive fine-tuning.
% 	WAR achieves comparable results with the exhaustive cross-validation and meanwhile significantly saves the tuning time.
	}
% 	\vskip -2pt
	\label{table:war}
	\small
% 	\setlength{\belowcaptionskip}{-2pt}
% 	\centering
 	\resizebox{1.0\textwidth}{!}{
	\begin{tabular}{ ccccccccc } 
		\toprule
		 Model & Tuning & $\lambda$ & PGD & GPU hours\\
		\midrule
	    \multirow{2}{*}{WRN-34-1}
		& N/A 
		& 6.00 & 47.81 & 12+18=30 \\ 
		& Manual 
		& 9.00 & 48.01 & 12$\times$6=72 \\
		& WAR
		& 9.12 & \textbf{48.06} & 12+18=30 \\ 
		\midrule
		\multirow{3}{*}{WRN-34-5}
		& N/A 
		& 6.00 & 54.45 & 20+18=38   \\ 
		& Manual
		& 12.00 & 55.61 & 20$\times$6=120   \\ 
		& WAR
		& 14.37 & \textbf{55.62} & 20+18=38  \\
		\midrule
		\multirow{3}{*}{WRN-34-10}
		& N/A 
		& 6.00 & 54.18 & 32+18=50  \\ 
		& Manual
		& 18.00 & 56.43 & 32$\times$6=192  \\ 
		& WAR
		& 16.43 & \textbf{56.46} & 32+18=50  \\ 
		\bottomrule
	\end{tabular}
 	}
	\vskip -0.1in
% 	\end{wraptable}
\end{minipage}
\vskip -0.2in
\end{table}

	\subsection{Comparison of Robustness on Wide Models}
	\label{sec:compare}
	
	\begin{wraptable}{R}{0.45\linewidth}
    \vskip -0.2in
		\caption{Robust accuracy ($\%$) comparison on CIFAR10 under AutoAttack. $\dagger$ indicates training with extra unlabeled data.}
		\label{table:compare}
		\centering
		
		\resizebox{\linewidth}{!}{
			\begin{tabular}{lcc}

				\toprule
				Methods    & Model     & AutoAttack \\
				\midrule
				TRADES~\cite{trades}  & WRN-34-10  & 53.08  \\
				Early-Stop~\cite{over-fitting} & WRN-34-20 & 53.42 \\
				FAT~\cite{fat}          & WRN-34-10  & 53.51  \\
				HE~\cite{he}          & WRN-34-20  & 53.74  \\
				WAR & WRN-34-10    & \textbf{54.73}  \\
				\midrule
				\midrule
				% Pre-Training~\cite{pretrain}$\dagger$    & wrn-28-10       & 54.92 \\
				MART~\cite{mart}$\dagger$   & WRN-28-10     & 56.29  \\
				HYDRA~\cite{hydra}$\dagger$   & WRN-28-10     & 57.14  \\
				RST~\cite{rst}$\dagger$       & WRN-28-10     & 59.53  \\
				% \cite{wu2020adversarial}    & WRN-28-10     & 60.04  \\
				WAR$\dagger$  & WRN-28-10    & \textbf{60.02}  \\
				WAR$\dagger$   & WRN-28-20   & \textbf{61.84}  \\
				\bottomrule
			\end{tabular}
			}
		\vskip -0.2in
	\end{wraptable}
	Previous experiments in Section \ref{sec:boost-exp} and Section \ref{sec:boost-all} have shown the effectiveness of our proposed strategy on using larger robust regularization parameter for wider models.
	In order to ensure that this strategy does not lead to any obfuscated gradient problem \citep{obfuscated} and gives a false sense of robustness, we further conduct experiments using stronger attacks.
	In particular, we choose to evaluate our best models on the AutoAttack algorithm~\citep{autoattack}, which is an ensemble attack method that contains four different white-box and black-box attacks for the best attack performances.

	We evaluate models trained with WAR, with or without extra unlabeled data~\cite{rst}, and report the robust accuracy in Table \ref{table:compare}. Note that the results of other baselines are directly obtained from the AutoAttack leaderboard\footnote{\url{https://github.com/fra31/auto-attack}}.
	From Table \ref{table:compare}, we can see that our WAR significantly improves the baseline TRADES models on WideResNet. This experiment further verifies the effectiveness of our proposed strategy.

\section{Conclusions}
\label{sec:conclusion}

In this paper, we studied the relation between network width and adversarial robustness in adversarial training, a principled approach to train robust neural networks. We showed that the model robustness is closely related to both natural accuracy and perturbation stability, while the balance between the two is controlled by the robust regularization parameter $\lambda$. With the same value of $\lambda$, the natural accuracy is better on wider models while the perturbation stability actually becomes worse, leading to a possible decrease in the overall model robustness. We showed the origin of this problem by relating perturbation stability with local Lipschitzness and leveraging recent studies on the neural tangent kernel to prove that larger network width leads to worse perturbation stability. Our analyses suggest that: 1) proper tuning of $\lambda$ on wider models is necessary despite being extremely time-consuming; 2) practitioners should adopt a larger $\lambda$ for training wider networks. Finally, we propose the Width Adjusted Regularization, which significantly saves the tuning time for robust training on wide models.

\appendix
\newpage

\appendix

\section{Proof of Lemma \ref{lemma:ntk}}
% First, let us introduce some notations that are useful in later proofs. Let $m$ denotes the network width and $H$ denotes the network depth. Denote 
% $$f(\xb) = \ab^\top\sigma(\Wb^{(H)}\sigma(\Wb^{(H-1)}\cdots\sigma(\Wb^{(1)}\xb)\cdots)),$$
% as the network output where $\Wb^{(h)}$ represents the $h$-th layer weights, $\ab$ as the output layer weights and $\sigma(\cdot)$ is the entry-wise ReLU activation function. For notation simplicity, let us denote $\Wb = (\Wb^{(H)}, \ldots, \Wb^{(1)})$ as the learnable weights and $\Wb_0 = (\Wb^{(H)}_0, \ldots, \Wb^{(1)}_0)$ as the initial weights, $\Db^{(h)}(\Wb, \xb) = \diag(\mathbbm{1}\{\Wb^{(h)}\sigma(\cdots\sigma(\Wb^{(1)}\xb)) > 0\})$ as the diagonal sign matrix and we have
% $$f(\xb) = \ab^\top\Db^{(H)}(\Wb, \xb)\Wb^{(H)}\Db^{(H-1)}(\Wb, \xb)\Wb^{(H-1)}\cdots\Db^{(1)}(\Wb, \xb)\Wb^{(1)}\xb.$$
% Following \cite{gao2019convergence}, we assume the first layer and the last layer's weights are fixed, and $\Wb$ is updated via projected gradient descent with projection set $B(R) = \{\Wb: \|\Wb^{(h)} - \Wb^{(h)}_0\|_F \leq R/\sqrt{m}, h \in [H]\}$. We have the following lemma upper bounding the input gradient norm.
\begin{lemma}[Restatement of Lemma \ref{lemma:ntk}]
For any given input $\xb \in \RR^d$ and $\ell_2$ norm perturbation limit $\epsilon$, if $m \geq \max(d, \Omega(H\log(H)))$, $\frac{R}{\sqrt{m}} + \epsilon \leq \frac{c}{H^6(\log{m})^3}$ for some sufficient small $c$, then with probability at least $1 - O(H)e^{-\Omega(m(R/\sqrt{m} + \epsilon)^{2/3} H)}$, we have for any $\xb' \in \mathbb{B}(\xb, \epsilon)$ and Lipschitz loss $\mathcal{L}$, the input gradient norm satisfies
$$\|\nabla \mathcal{L}(f(\xb'),y)\|_2 = O\big(\sqrt{mH}\big).$$
\end{lemma}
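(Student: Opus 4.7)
The plan is to reduce the claim to a bound on the input--Jacobian of $f$, and then control that Jacobian via the ``forward/backward'' stability lemmas for over-parameterized ReLU networks (in the style of Allen-Zhu--Li--Song (2019), \cite{zou2020gradient}, and the adversarial adaptation of \cite{gao2019convergence}). Since $\mathcal{L}(\cdot,y)$ is Lipschitz in its first argument, the chain rule gives
\begin{align*}
\|\nabla_{\xb}\, \mathcal{L}(f(\xb'),y)\|_2 \;\leq\; |\mathcal{L}'(f(\xb'),y)| \cdot \|\nabla_{\xb} f(\xb')\|_2 \;\lesssim\; \|\nabla_{\xb} f(\xb')\|_2,
\end{align*}
so it suffices to upper-bound the input-Jacobian norm uniformly over $\xb' \in \mathbb{B}(\xb,\epsilon)$ and over $\Wb \in B(R)$.

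For the backprop formula, let $\Db^{(h)} = \Db^{(h)}(\xb',\Wb) \in \{0,1\}^{m\times m}$ denote the diagonal activation mask of layer $h$. The ReLU chain rule yields
\begin{align*}
\nabla_{\xb} f(\xb') \;=\; (\Wb^{(1)})^\top \Db^{(1)} (\Wb^{(2)})^\top \Db^{(2)} \cdots (\Wb^{(H)})^\top \Db^{(H)} \ab,
\end{align*}
and each $\Db^{(h)}$ has operator norm at most one. Under the standard NTK-scale Gaussian initialization (hidden $\Wb_0^{(h)}$ entries $\mathcal{N}(0,2/m)$, $\|\ab\|_2 = \Theta(\sqrt{m})$), random-matrix concentration gives $\|\Wb_0^{(h)}\|_{\mathrm{op}} = O(1)$ uniformly in $h$ with probability $1 - H e^{-\Omega(m)}$; combined with the projection constraint $\|\Wb^{(h)} - \Wb_0^{(h)}\|_F \leq R/\sqrt{m}$ and the smallness hypothesis on $R/\sqrt{m}$, this gives $\|\Wb^{(h)}\|_{\mathrm{op}} \leq 1 + o(1)$ for every layer.

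The naive product $\|\ab\|_2 \cdot \prod_h \|\Wb^{(h)}\|_{\mathrm{op}}$ is only $(1+o(1))^H \sqrt{m}$, which is loose. To get the sharper $O(\sqrt{mH})$ scaling I would invoke the standard NTK backward-product lemma: with high probability, for every layer $h$, the partial product $(\Wb^{(h)})^\top \Db^{(h)} \cdots (\Wb^{(H)})^\top \Db^{(H)} \ab$ has $\ell_2$-norm of order $\sqrt{m}$ and differs from its value at initialization by a multiplicative factor $1 + \widetilde{O}\big((R/\sqrt{m}+\epsilon)^{1/3} H\big)$ plus a sparse correction term whose contribution absorbs into the $\sqrt{H}$ overhead. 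Taking $h=1$ recovers the input-Jacobian, yielding $\|\nabla_{\xb} f(\xb')\|_2 = O(\sqrt{mH})$, which combined with the chain-rule reduction establishes the lemma.

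The main obstacle, and where I expect the real work, is activation-pattern stability: the standard NTK lemmas are stated for a fixed input with \emph{only} weight perturbations, whereas here we must allow $\xb' = \xb + \bm{\delta}$ with $\|\bm{\delta}\|_2 \leq \epsilon$ simultaneously with $\Wb \in B(R)$. The fix is to prove an inductive forward-propagation bound $\|\xb'^{(h)} - \xb^{(h)}\|_2 \leq (1+o(1))^h(\epsilon + R/\sqrt{m})$ and then redo the neuron-flip counting argument of Allen-Zhu--Li--Song with the \emph{combined} perturbation size $R/\sqrt{m} + \epsilon$ replacing the pure weight-perturbation radius. A Bernstein-type estimate then shows that at any layer at most an $\widetilde{O}\big((R/\sqrt{m}+\epsilon)^{2/3}\big)$ fraction of neurons flip activation, which is precisely what produces both the hypothesis $R/\sqrt{m}+\epsilon \leq c/(H^6(\log m)^3)$ and the failure probability $O(H)\,e^{-\Omega(m(R/\sqrt{m}+\epsilon)^{2/3} H)}$. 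Once this input-and-weight stability of the activation mask is in hand, the remaining backward-product analysis transfers verbatim from the existing over-parameterization literature.
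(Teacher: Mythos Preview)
Your proposal is correct and follows essentially the same route as the paper: chain-rule reduction to $\|\nabla_{\xb} f(\xb')\|_2$, the backprop product formula $\ab^\top \Db^{(H)}\Wb^{(H)}\cdots\Db^{(1)}\Wb^{(1)}$, a triangle-inequality split around the at-initialization product, and then an appeal to the NTK stability lemmas of \cite{gao2019convergence} (which in turn encapsulate the Allen-Zhu--Li--Song neuron-flip counting you sketch). The paper simply cites Lemma~A.3 of \cite{gao2019convergence} for the $O(\sqrt{mH})$ at-initialization bound and Lemma~A.5 for the $O\big((R/\sqrt{m}+\epsilon)^{1/3}H^{2}\sqrt{m\log m}\big)$ perturbation bound, then uses the smallness hypothesis to make the latter $O(\sqrt{mH})$; your write-up unpacks what those lemmas do internally, but the argument is the same. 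One small clarification: the $\sqrt{H}$ factor comes directly from the at-initialization backward product (Lemma~A.3), not from a ``sparse correction'' to the perturbation term.
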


\begin{proof}
The major part of this proof is inspired from \cite{gao2019convergence}. Let $\Db^{(h)}(\Wb, \xb) = \diag(\mathbbm{1}\{\Wb^{(h)}\sigma(\cdots\sigma(\Wb^{(1)}\xb)) > 0\})$ be a diagonal sign matrix. Then the neural network function can be rewritten as  follows:
$$f(\xb) = \ab^\top\Db^{(H)}(\Wb, \xb)\Wb^{(H)}\cdots\Db^{(1)}(\Wb, \xb)\Wb^{(1)}\xb.$$
By the chain rule of the derivatives, the input gradient norm can be further written as
\begin{align}\label{eq:gao0}
    \|\nabla \mathcal{L}(f(\xb'),y)\|_2 &= \|\mathcal{L}'(f(\xb'), y) \cdot \nabla f(\xb')\|_2 \notag\\
    &\leq \|\mathcal{L}'(f(\xb'), y) \|_2 \cdot \|\nabla f(\xb')\|_2 \notag\\
    &= \|\mathcal{L}'(f(\xb'), y) \|_2 \cdot \|\ab^\top\Db^{(H)}(\Wb, \xb')\Wb^{(H)}\cdots\Db^{(1)}(\Wb, \xb')\Wb^{(1)}\|_2.
\end{align}
Now let us focus on the term $\|\ab^\top\Db^{(H)}(\Wb, \xb')\Wb^{(H)}\cdots\Db^{(1)}(\Wb, \xb')\Wb^{(1)}\|_2$. Note that by triangle inequality,
\begin{align}\label{eq:gao1}
     &\|\ab^\top\Db^{(H)}(\Wb, \xb')\Wb^{(H)}\cdots\Db^{(1)}(\Wb, \xb')\Wb^{(1)}\|_2 \notag\\
     &\leq \|\ab^\top\Db^{(H)}(\Wb, \xb')\Wb^{(H)}\cdots\Db^{(1)}(\Wb, \xb')\Wb^{(1)} - \ab^\top\Db^{(H)}(\Wb_0, \xb)\Wb^{(H)}_0\cdots\Db^{(1)}(\Wb_0, \xb)\Wb^{(1)}_0\|_2 \notag\\
     &\qquad+ \|\ab^\top\Db^{(H)}(\Wb_0, \xb)\Wb^{(H)}_0\cdots\Db^{(1)}(\Wb_0, \xb)\Wb^{(1)}_0\|_2.
\end{align}
Note that $\Wb$ is updated via projected gradient descent with projection set $B(R)$. Therefore, by Equation (12) in Lemma A.5 of \cite{gao2019convergence} we have
\begin{align}\label{eq:gao2}
    &\|\ab^\top\Db^{(H)}(\Wb, \xb')\Wb^{(H)}\cdots\Db^{(1)}(\Wb, \xb')\Wb^{(1)} - \ab^\top\Db^{(H)}(\Wb_0, \xb)\Wb^{(H)}_0\cdots\Db^{(1)}(\Wb_0, \xb)\Wb^{(1)}_0\|_2 \notag\\
    &\qquad= O\bigg( \big( \frac{R}{\sqrt{m}} + \epsilon\big)^{1/3} H^{2} \sqrt{m\log m} \bigg),
\end{align}
and by Lemma A.3 in \cite{gao2019convergence} we have
\begin{align}\label{eq:gao3}
    \|\ab^\top\Db^{(H)}(\Wb_0, \xb)\Wb^{(H)}_0\cdots\Db^{(1)}(\Wb_0, \xb)\Wb^{(1)}_0\|_2 = O(\sqrt{mH}).
\end{align}
Combining \eqref{eq:gao1}, \eqref{eq:gao2}, \eqref{eq:gao3}, when $\frac{R}{\sqrt{m}} + \epsilon \leq \frac{c}{H^6(\log{m})^3}$, we have
\begin{align}\label{eq:gao4}
    \|\ab^\top\Db^{(H)}(\Wb, \xb')\Wb^{(H)}\cdots\Db^{(1)}(\Wb, \xb')\Wb^{(1)}\|_2 = O(\sqrt{mH}).
\end{align}
By substituting \eqref{eq:gao4} into \eqref{eq:gao0} we have,
\begin{align*}
    \|\nabla \mathcal{L}(f(\xb'),y)\|_2  
    &\leq \|\mathcal{L}'(f(\xb'), y) \|_2 \cdot \|\ab^\top\Db^{(H)}(\Wb, \xb')\Wb^{(H)}\cdots\Db^{(1)}(\Wb, \xb')\Wb^{(1)}\|_2 = O(\sqrt{mH}),
\end{align*}
where the last inequality holds since $\|\mathcal{L}'(f(\xb'), y) \|_2 = O(1)$ due to the Lipschitz condition of loss $\mathcal{L}$. This concludes the proof.
\end{proof}

\section{The Experimental Detail for Reproducibility}
All experiments are conducted on a single NVIDIA V100.
% RTX that has a memory size of 24190MB. 
It runs on the GNU Linux Debian 4.9 operating system. The experiment is implemented via PyTorch 1.6.0. We adopt the public released codes of PGD~\cite{pgd}, TRADES~\cite{trades}, and RST~\cite{rst} and adapt them for our own settings, including inspecting the loss value of robust regularization and the local Lipschitzness.

CIFAR100 contains 50k images for 100 classes, which means that it has much fewer images for each class compared with CIFAR10. This makes the learning problem of CIFAR100 much harder. For DenseNet architecture, we adopt the 40 layers model with the bottleneck design, which is the DenseNet-BC-40. It has three building blocks, with each one having the same number of layers. This is the same architecture tested in the original paper of DenseNet for CIFAR10. For simplicity reason, we make the training schedule stay the same with the one used for WideResNet, which is the decay learning rate schedule. As DenseNet gets deeper, its channel number (width) will be multiplied with the growing rate k. Thus, as k gets larger, the width of DenseNet also does. Although this mechanism slightly differs from the widen factor of WideResNet, which amplify all layers with the same ratio.

\section{The Exponential Decay Learning Rate}
\begin{figure}[h!]
	\centering
	\includegraphics[width=0.32\linewidth]{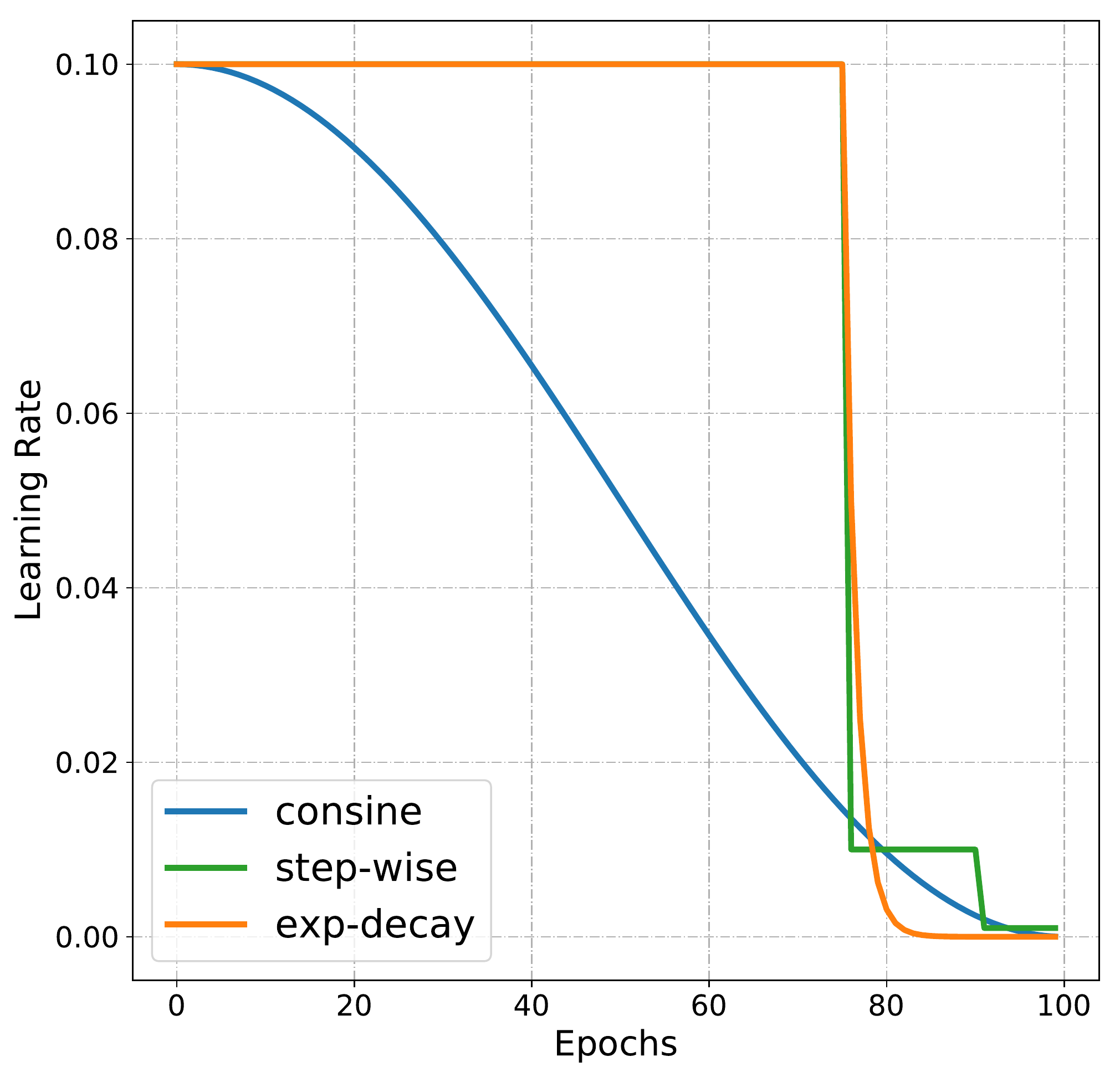}
	\caption{The changing trend leanring rate against training epochs for different learning rate schedule.}
	\label{fig:grad-trades}
\end{figure}
To demonstrate the fact that the over-fitting problem all comes from perturbation stability in Section 3.2(3), we use the training schedule of the original work for Figure 2. Aside from that, all the other experiments and plots are results under our proposed learning rate schedule, which halve the learning rate for every epochs after the 75-th epoch and can prevent over-fitting. Different learning rate schedules are shown in Figure~5, including the step-wise~\cite{trades}, cosine~\cite{rst}, and our exp-decay learning rate schedule. Basically, our schedule is an early-stop version of the baseline of TRADES~\cite{trades}, which skips the small learning rate stage as soon as possible in the later stage. We found this schedule is the most effective one when only training on the original CIFAR10. However, when combined with the 500K unlabeled images from RST~\cite{rst}, we find that the over-fitting problem is much less severe and cosine learning rate is the best choice.

% \section{Fast Adversarial Training on ImageNet}
% ImageNet is the standard benchmark for classification. Yet, for adversarial training, researcher usually not evaluate their methods on this dataset considering its hardness~\cite{pgd,trades,rst}. After all, people have not fully solved this problem on the much easier CIFAR10 dataset. The best robust accuracy on CIFAR10 is no more than $70\%$, and no more than $40\%$ on CIFAR100. Additionally, the huge complexity of adversarial training prevents people from adopting the standard version of it on ImageNet. Methods that focus on reducing computational complexity are proposed~\cite{fast-free}. Thus, we decide to verify our theory on ImageNet with the Fast Adversarial Training~\cite{fast-free}.

\section{Boosting the Original Adversarial Training}

We further show that our strategy also applies to the original adversarial training~\cite{pgd}. Note that our generalized adversarial training framework \eqref{eq:general_adv} allow us to further boost the robust regularization for original (generalized) adversarial training. The only caveat is that in adversarial training formulation, the robust regularization term is not guaranteed to be non-negative in practice\footnote{Successfully solving the inner maximization problem in \eqref{eq:general_adv} is supposed to guarantee that $\mathcal{L}(\mathbf{\btheta}; {\xb'},y) > \mathcal{L}(\mathbf{\btheta}; \xb,y)$, however, in practice, there still exist a very little chance that $\mathcal{L}(\mathbf{\btheta}; {\xb'},y) < \mathcal{L}(\mathbf{\btheta}; \xb,y)$ due to failure in solving the inner maximization problem at the beginning of the training procedure with limited steps.}. To avoid this problem, we manually set the robust regularization term in \eqref{eq:general_adv} to be non-negative by clipping the $\mathcal{L}(\mathbf{\btheta}; \hat{\xb},y) - \mathcal{L}(\mathbf{\btheta}; \xb,y)$ term. Let us denote ${\xb'}$ as the empirical maximization solution, the final loss function becomes:
    \begin{align}
    \mathop{\argmin}_{\btheta}
    \mathbb{E}_{(\xb,y)\sim\mathcal{D}}  \Big\{
    \mathcal{L}(\mathbf{\btheta}; \xb,y)
    \notag
    + 
    \lambda \cdot  
    \max\limits_{\hat{\xb}_i \in \mathbb{B}(\xb_i,\epsilon)} 
    \big(
    \mathcal{L}(\mathbf{\btheta}; {\xb'},y)
    - \mathcal{L}(\mathbf{\btheta}; \xb,y)
    , 0 \big) \Big\}.
    \end{align}

The bottom part of Table \ref{table:three-accu} shows the experimental results for boosting the robust regularization parameter for (generalized) adversarial training models. We can observe that the boosting strategy still works in this method, and wider models indeed require larger $\lambda$ to obtain the best robust accuracy.

\section{Verifying Our Findings on ImageNet}

We further test the model of Fast AT~\cite{fast-free} on ImageNet dataset in Table \ref{table:compare_imagenet}, and it again verifies our conclusion that larger model width would increase natural accuracy but decrease perturbation stability.
\begin{table}[!h]
	\caption{Fast Adversarial Training on ImageNet.}
	\label{table:compare_imagenet}
	\centering
	\vskip 0.05in
	\resizebox{0.6\columnwidth}{!}{
	\begin{tabular}{l|c|c|c|c}
		\toprule
		Models          
		&$\lambda$
		&\tabincell{c}{Robust\\Accuracy}     
		&\tabincell{c}{Top5-Natural\\Accuracy}     
		&\tabincell{c}{Perturbation\\Stability} \\
		\midrule
		WideResNet-50-1 & 1.0 & 38.34 & 53.24 & 72.29 \\
		WideResNet-50-2 & 1.0 & 51.65 & 66.67 & 70.10 \\
		\bottomrule
	\end{tabular}}
	\vskip -0.15in
\end{table}
\\

\section{Boosting the Regularization Parameter on Extra Adversarial Training Methods}
We also compare with other models from the AutoAttack~\cite{autoattack} leaderboard. We focus on the AWP~\cite{wu2020adversarial} and show the result in Table \ref{table:compare_awp}. 
We found that our conclusion still holds for the AWP method that using larger $\lambda$ ($12.0$ rather than $6.0$ in the default setting) can achieve even better robust accuracy.
\begin{table}[!h]
	\caption{AWP on CIFAR10 dataset.}
	\label{table:compare_awp}
	\centering
	\vskip 0.05in
	\resizebox{0.6\columnwidth}{!}{
	\begin{tabular}{l|c|c|c|c}
		\toprule
		Models          
		&$\lambda$
		&\tabincell{c}{Robust\\Accuracy}     
		&\tabincell{c}{Natural\\Accuracy}     
		&\tabincell{c}{Perturbation\\Stability} \\
		\midrule
		WideResNet-34-10 & 6.0  & 59.01 & 84.82 & 73.95 \\
		WideResNet-34-10 & 12.0 & 59.34 & 81.20 & 76.65 \\
		WideResNet-34-10 & 18.0 & 58.72 & 78.43 & 77.54 \\
		\bottomrule
	\end{tabular}}
	\vskip -0.1in
\end{table}

\section{Evaluating the Three Metrics on State-of-the-Art Models}

In the figure below, we evaluate nine state-of-the-art robust models against the PGD attack for the three metrics: the natural accuracy, the perturbation stability, and the robust accuracy (the size of the ball). Our dissection of these three metrics helps the researcher better understand how different approaches influence adversarial robustness. For instance, we can tell that HE~\cite{he} mainly helps the stability, Pretrain~\cite{pretrain} mainly helps the natural accuracy and slightly hurts stability. Moreover, we can tell that methods like RST\cite{rst} simultaneously improve the natural accuracy and perturbation stability. This observation shows that it is possible to improve the two contradictory metrics.

\begin{figure}[h!]
	\centering
	\includegraphics[width=0.66\linewidth]{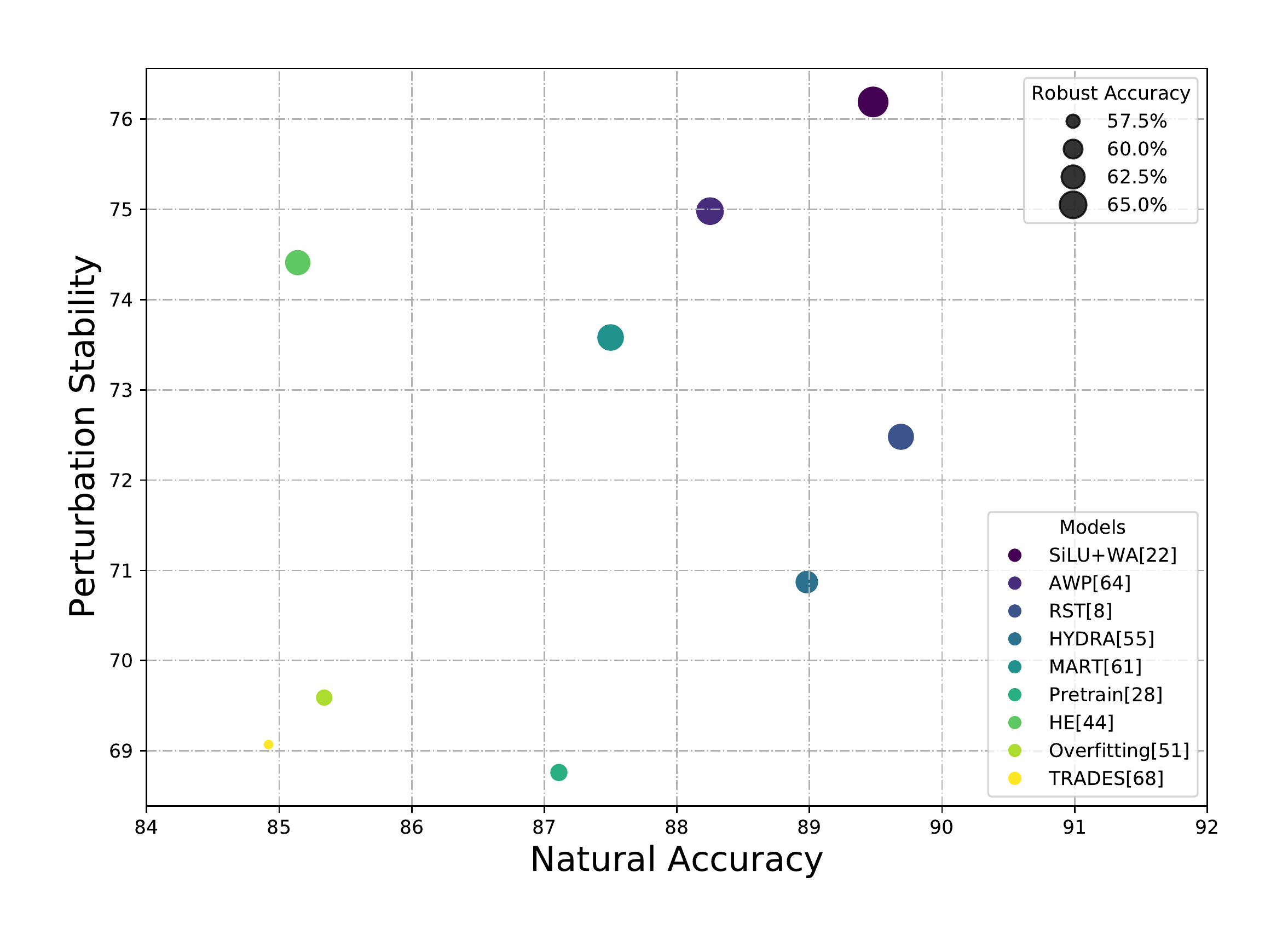}
	\caption{The changing trend leanring rate against training epochs for different learning rate schedule.}

\end{figure}

\newpage

\section{More Illustrations of Eqn.~\eqref{eq:general_adv}}

In this part, we provide a complete visualization for the two parts in Eqn.~\eqref{eq:general_adv}. The figures below are an extension of Figure~\ref{fig:loss}, where the models are those we trained in Table~\ref{table:model-dataset}. We test WideResNet-34 on CIFAR10 and CIFAR10. We test DenseNet-BC-40 on CIFAR10. The two losses with respect to different robust regularization parameter $\lambda$ are shown. Again, we emphasize that the observation that wider neural networks achieve worse performance on stability with the same $\lambda$ can be found during the training stage. Therefore, this intriguing phenomenon is not an over-fitting problem, as previous works~\cite{over-fitting} pointed out.

\begin{figure}[h!]
	\centering
	\subfigure[Natural Risk, $\lambda=6$]{
		\includegraphics[width=0.31\linewidth]{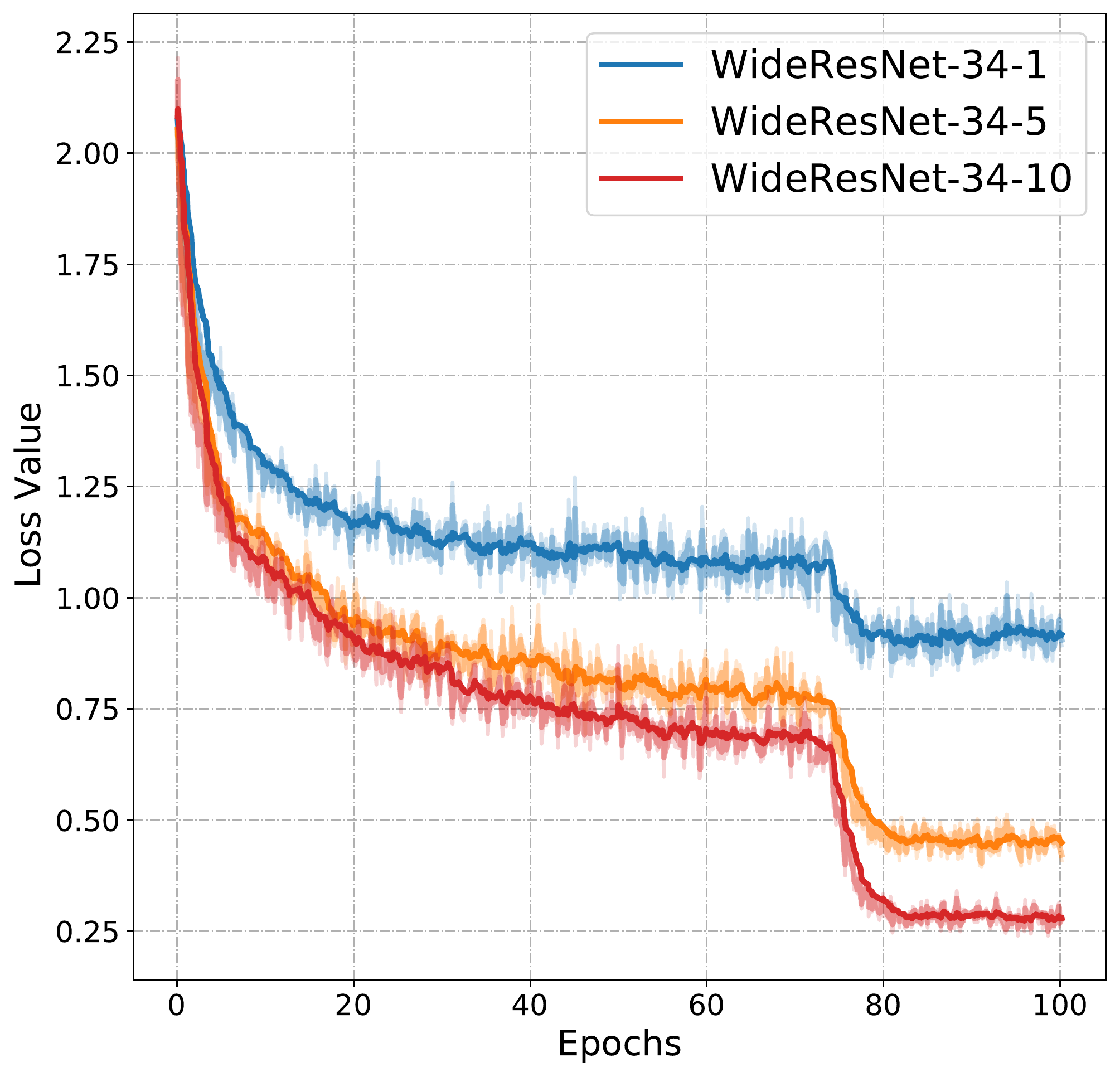}}
	\subfigure[Natural Risk, $\lambda=12$]{
		\includegraphics[width=0.31\linewidth]{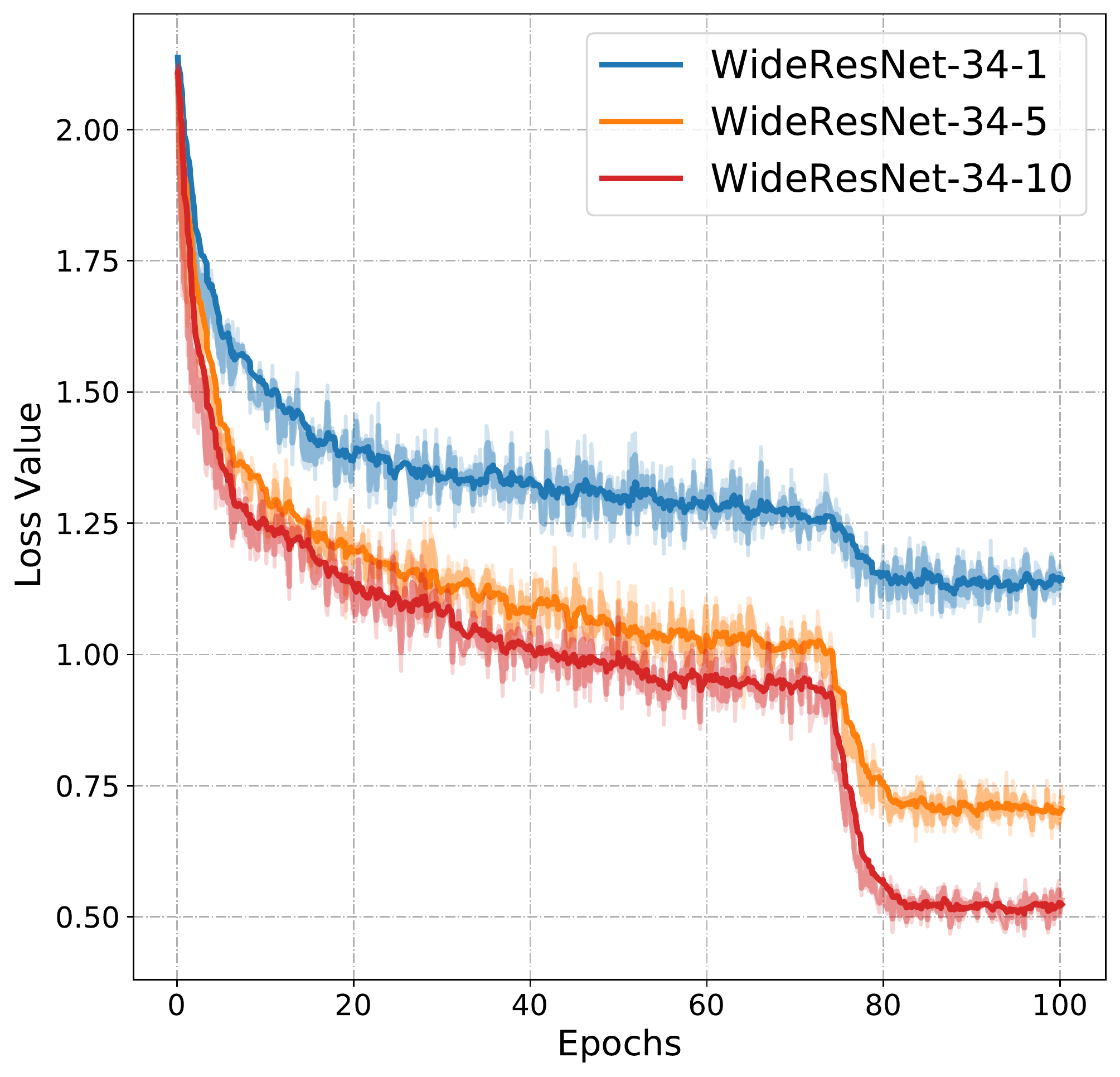}}
	\subfigure[Natural Risk, $\lambda=18$]{
		\includegraphics[width=0.31\linewidth]{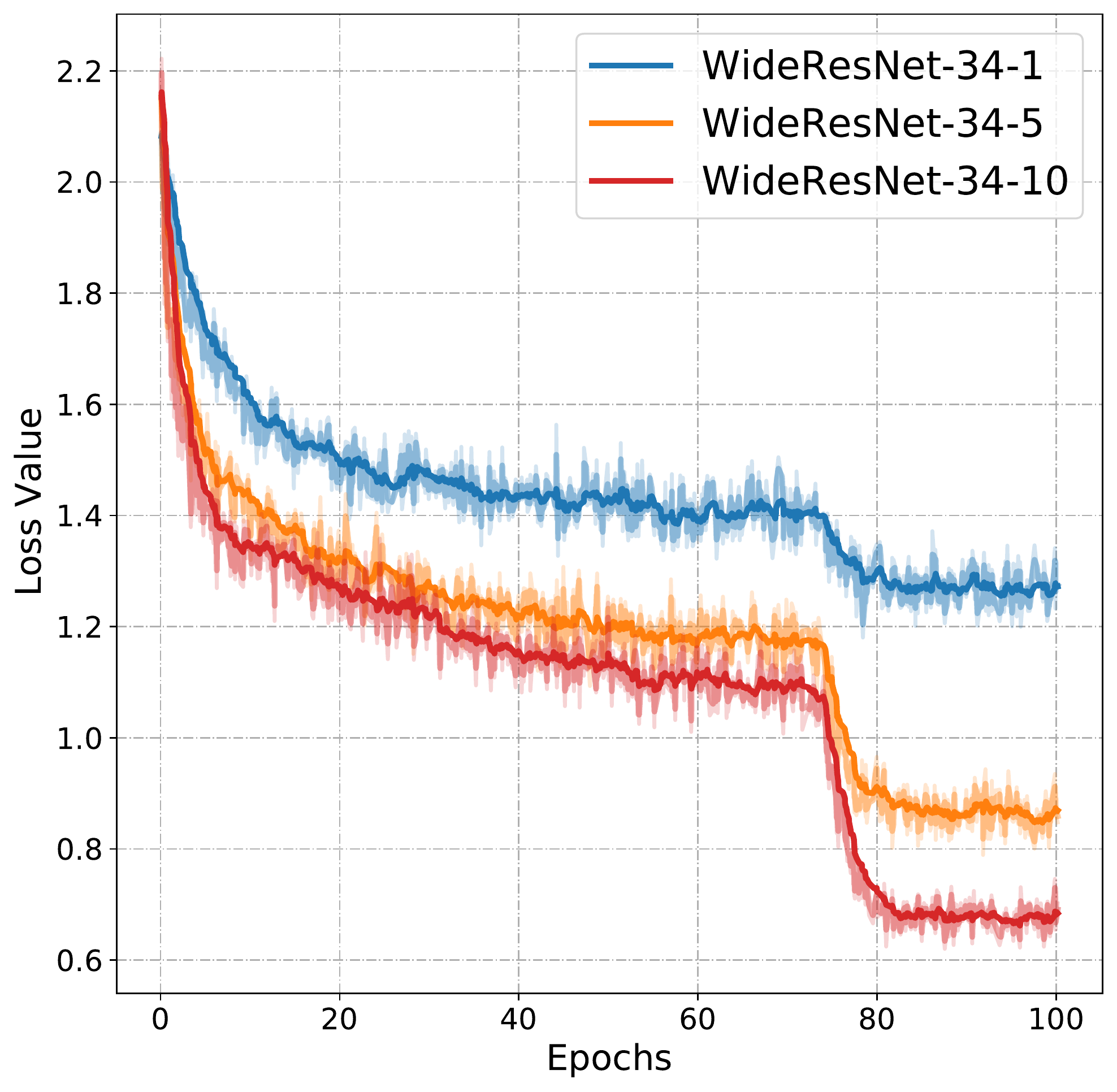}}
	\subfigure[Robust Regularization, $\lambda=6$]{
		\includegraphics[width=0.31\linewidth]{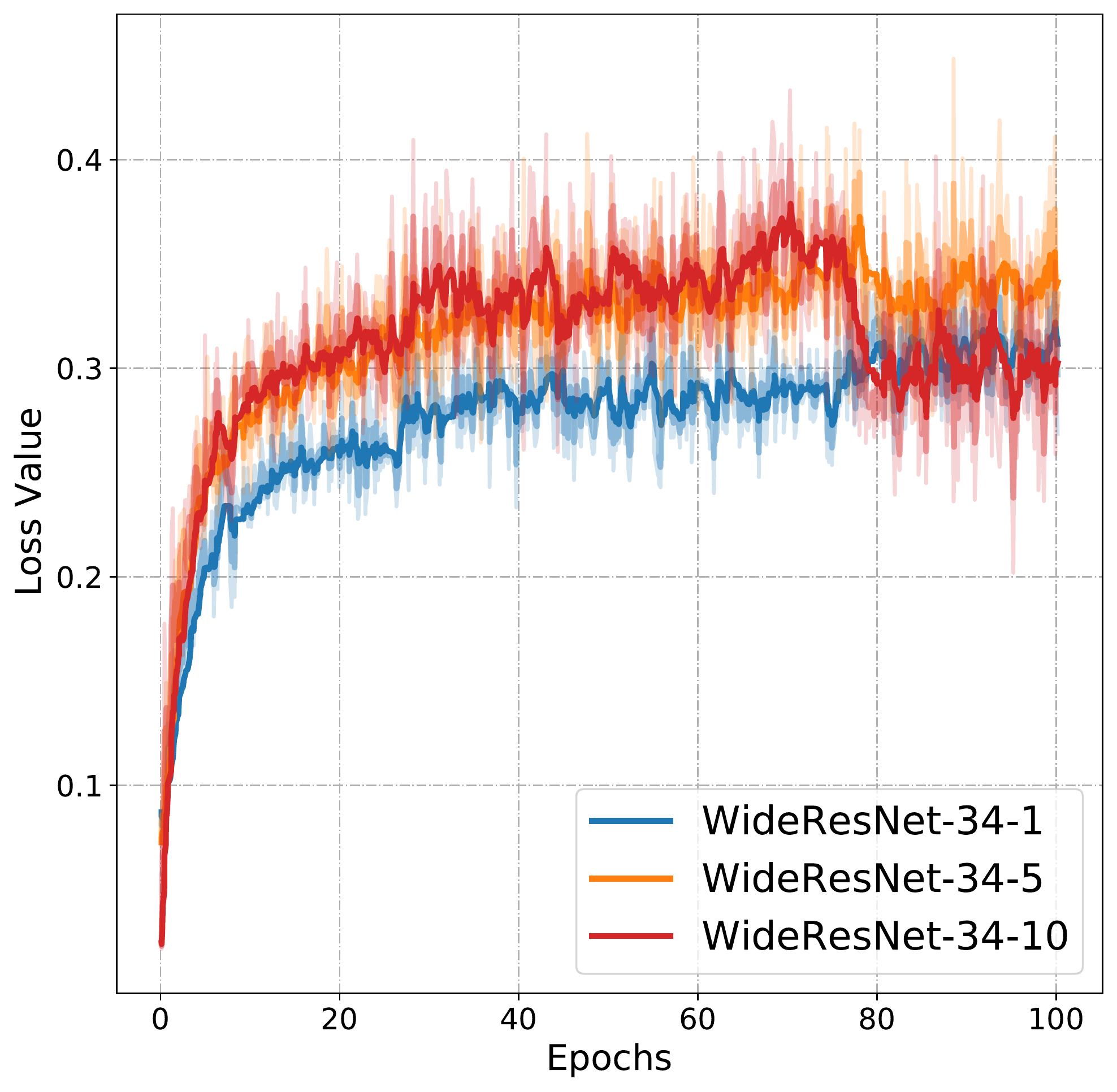}}
	\subfigure[Robust Regularization, $\lambda=12$]{
		\includegraphics[width=0.31\linewidth]{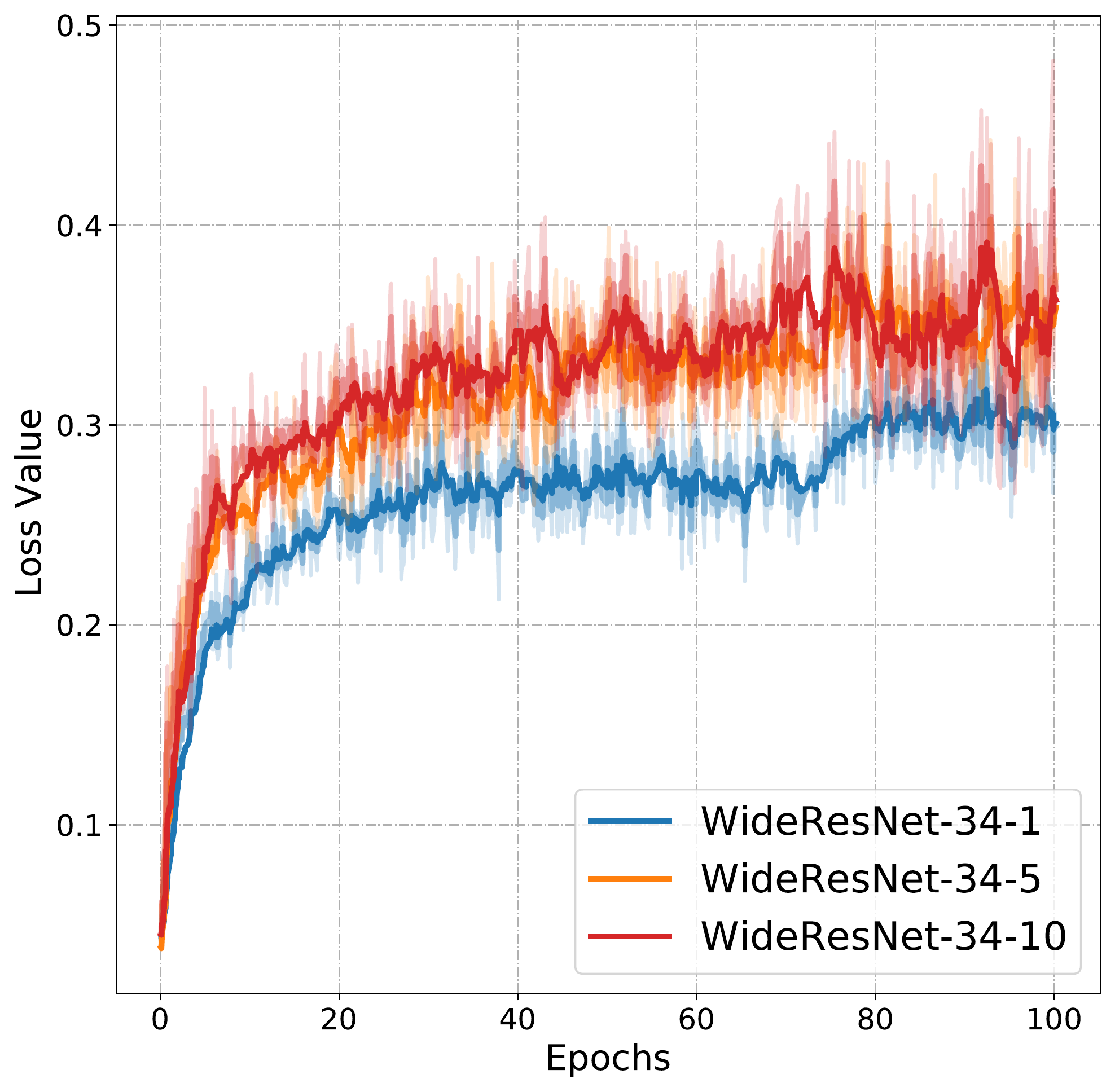}}
	\subfigure[Robust Regularization, $\lambda=18$]{
		\includegraphics[width=0.31\linewidth]{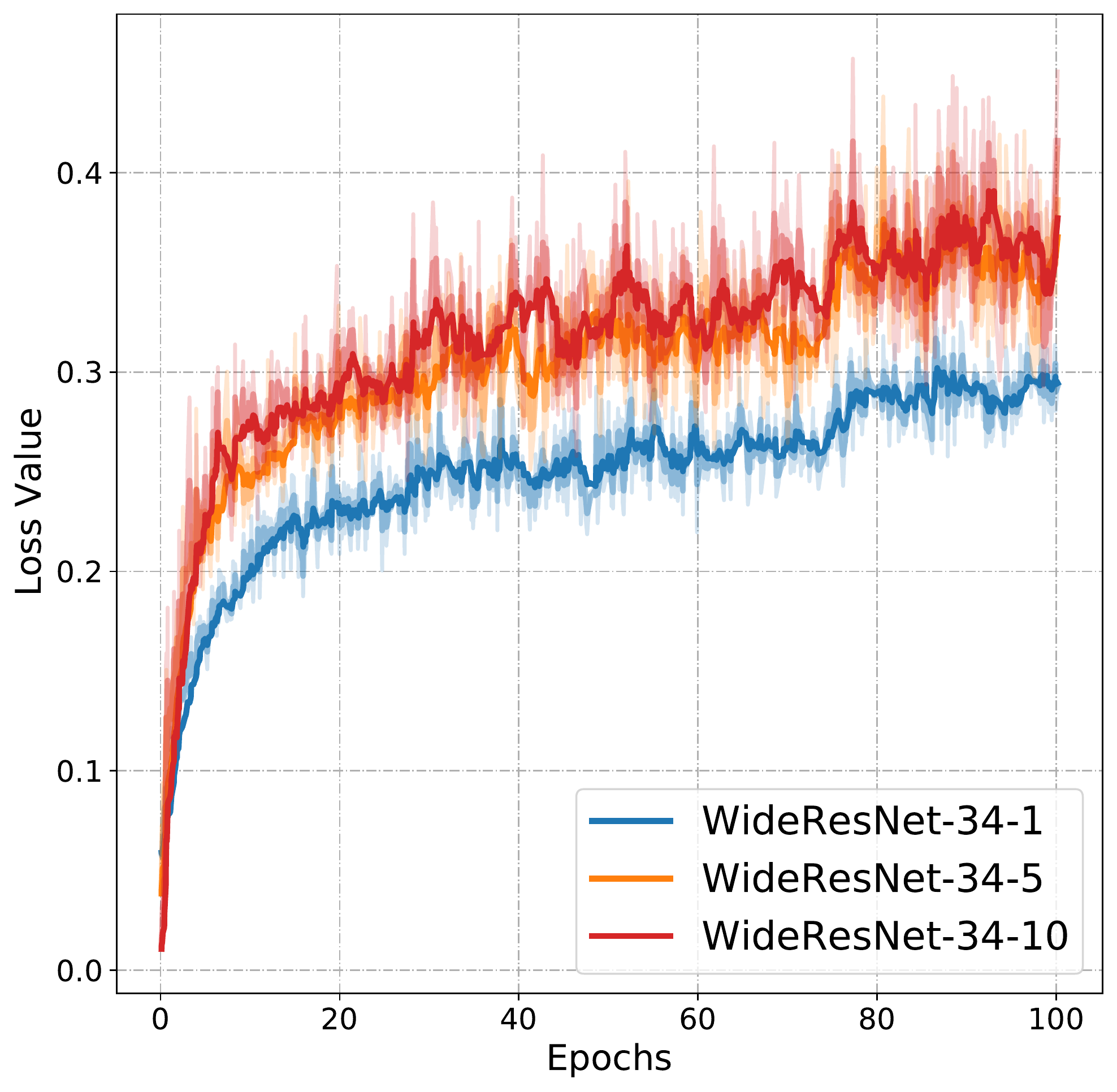}}
	\caption{WideResNet-34 on CIFAR10.}
\end{figure}

\begin{figure}[h!]
	\centering
	\subfigure[Natural Risk, $\lambda=6$]{
		\includegraphics[width=0.31\linewidth]{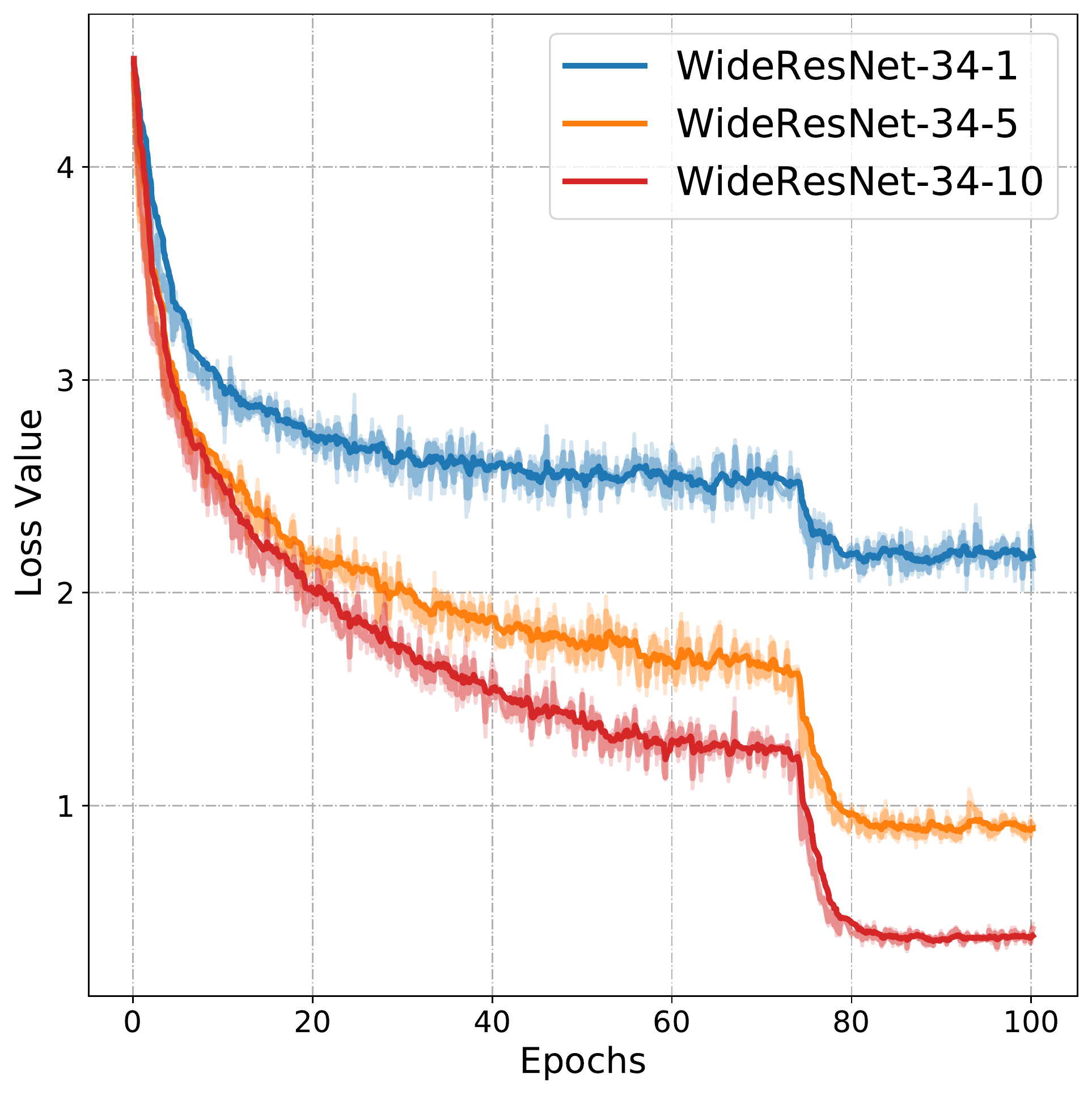}}
	\subfigure[Natural Risk, $\lambda=12$]{
		\includegraphics[width=0.31\linewidth]{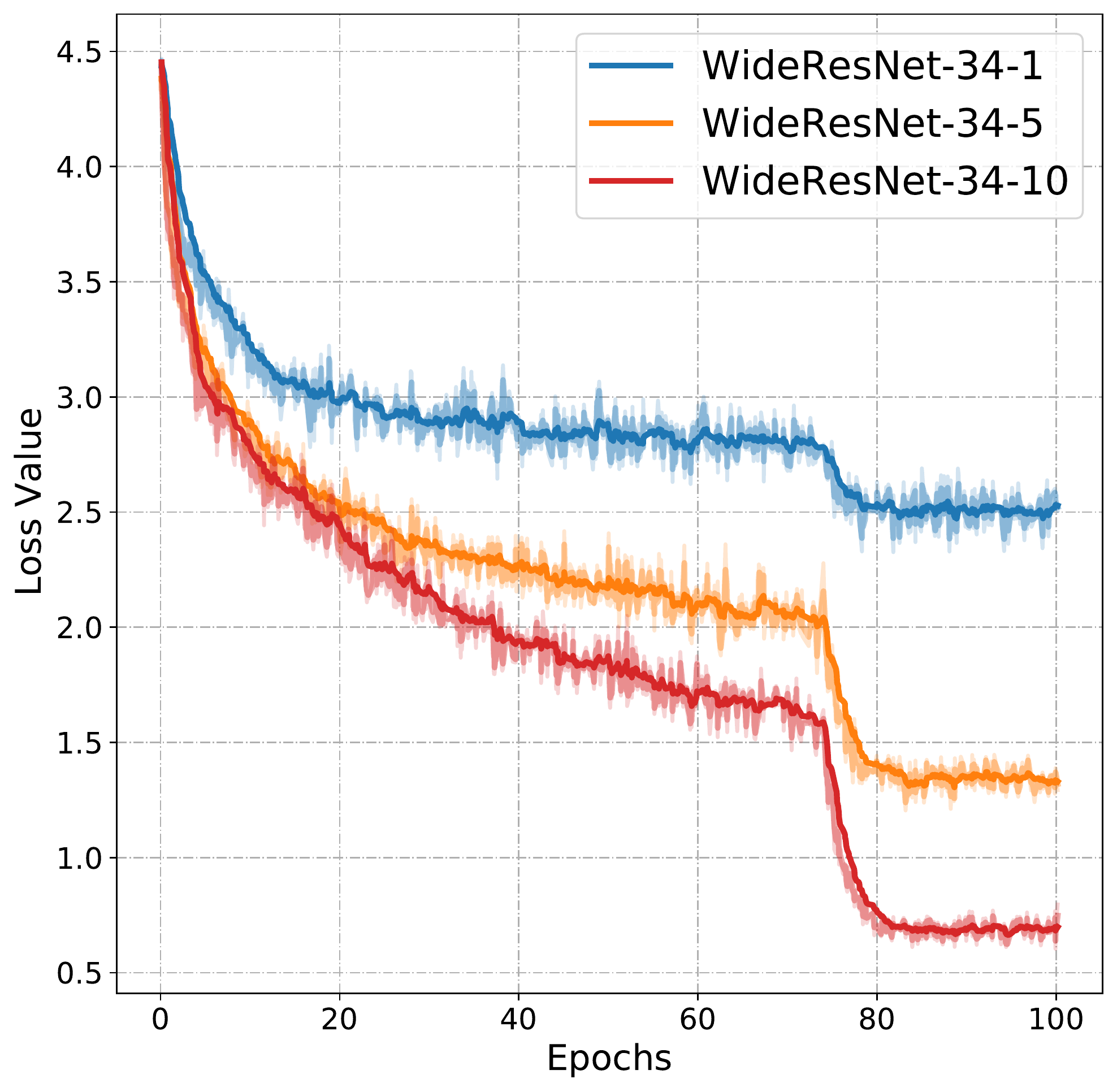}}
	\subfigure[Natural Risk, $\lambda=18$]{
		\includegraphics[width=0.31\linewidth]{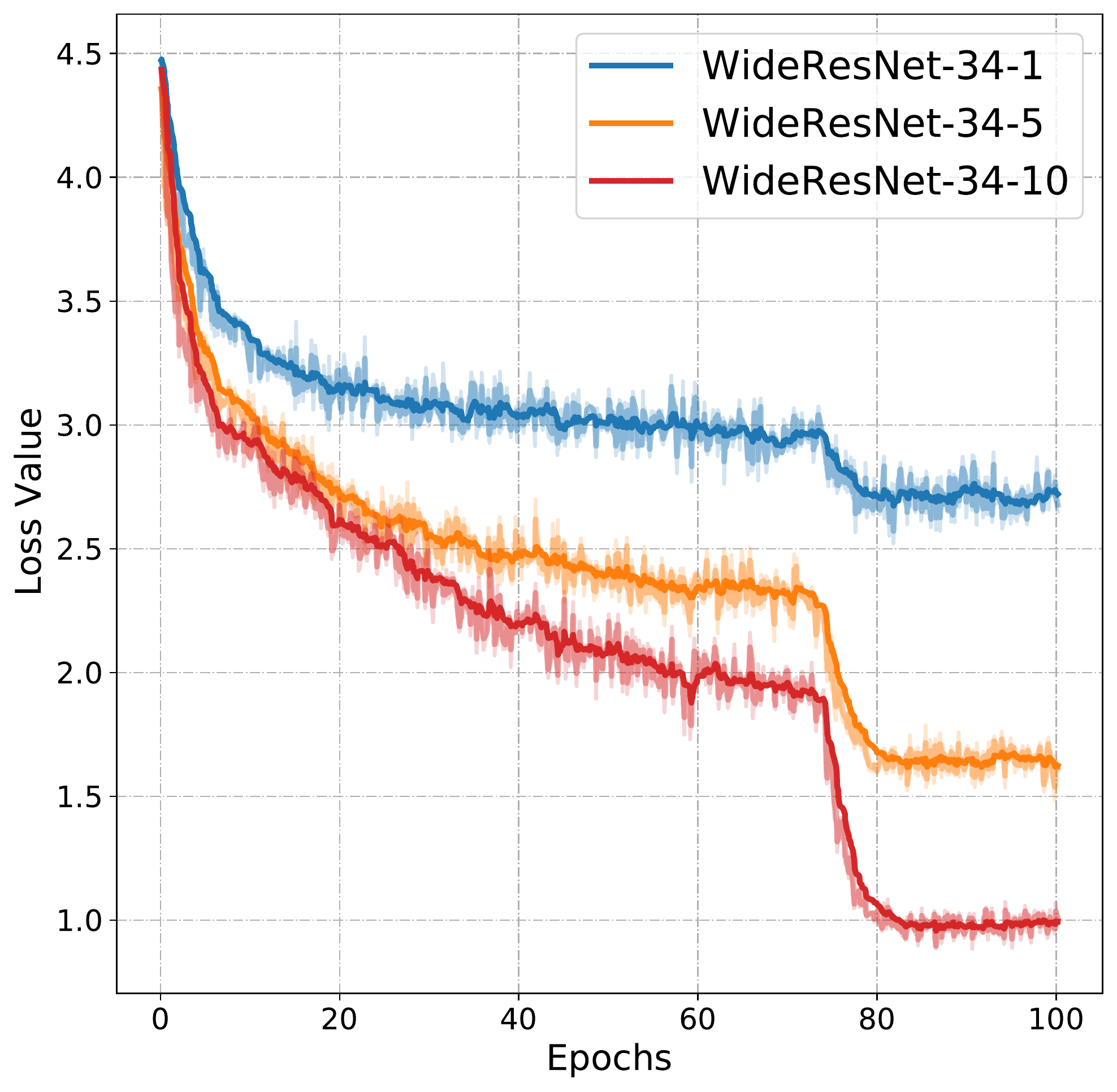}}
	\subfigure[Robust Regularization, $\lambda=6$]{
		\includegraphics[width=0.31\linewidth]{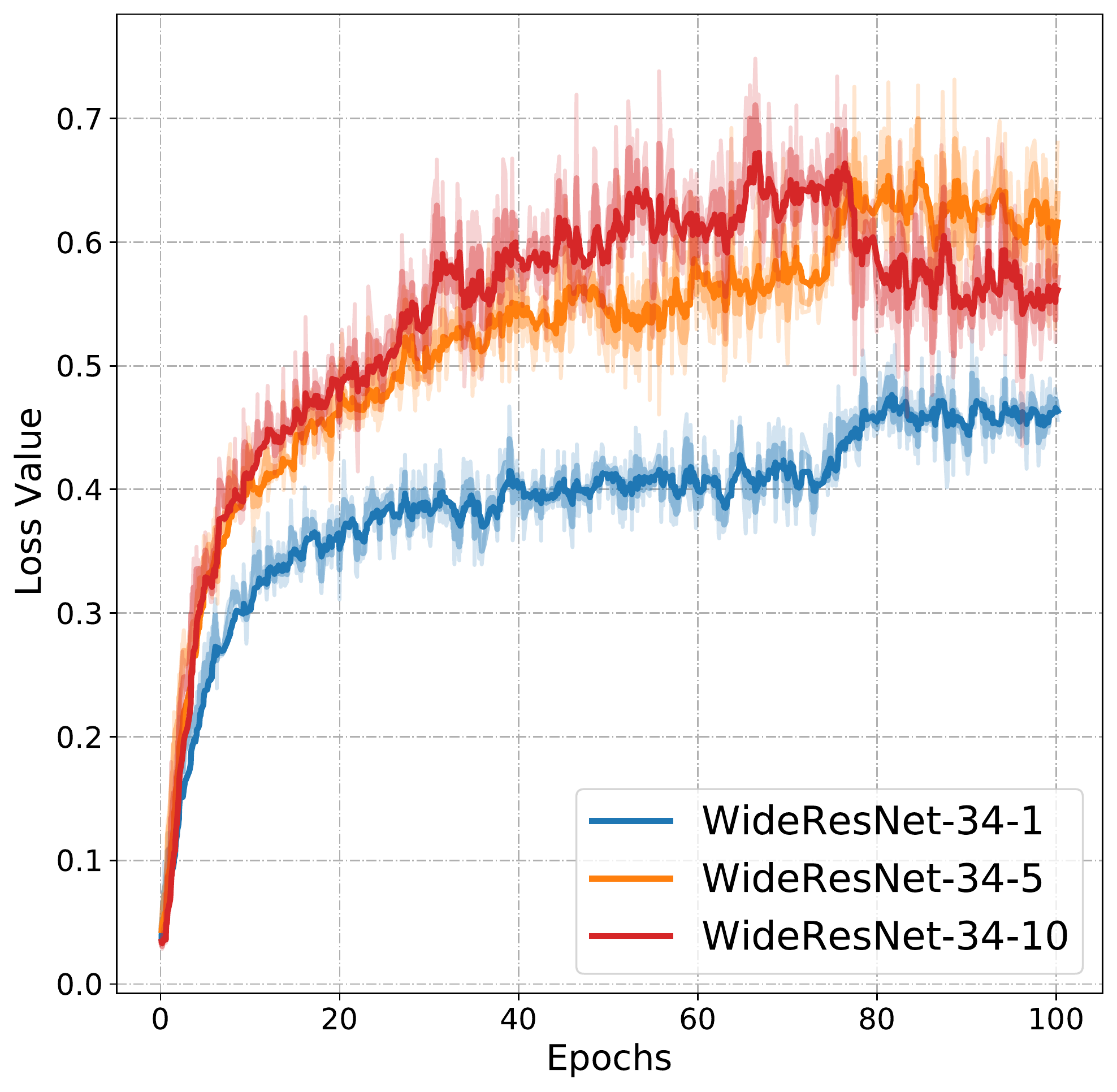}}
	\subfigure[Robust Regularization, $\lambda=12$]{
		\includegraphics[width=0.31\linewidth]{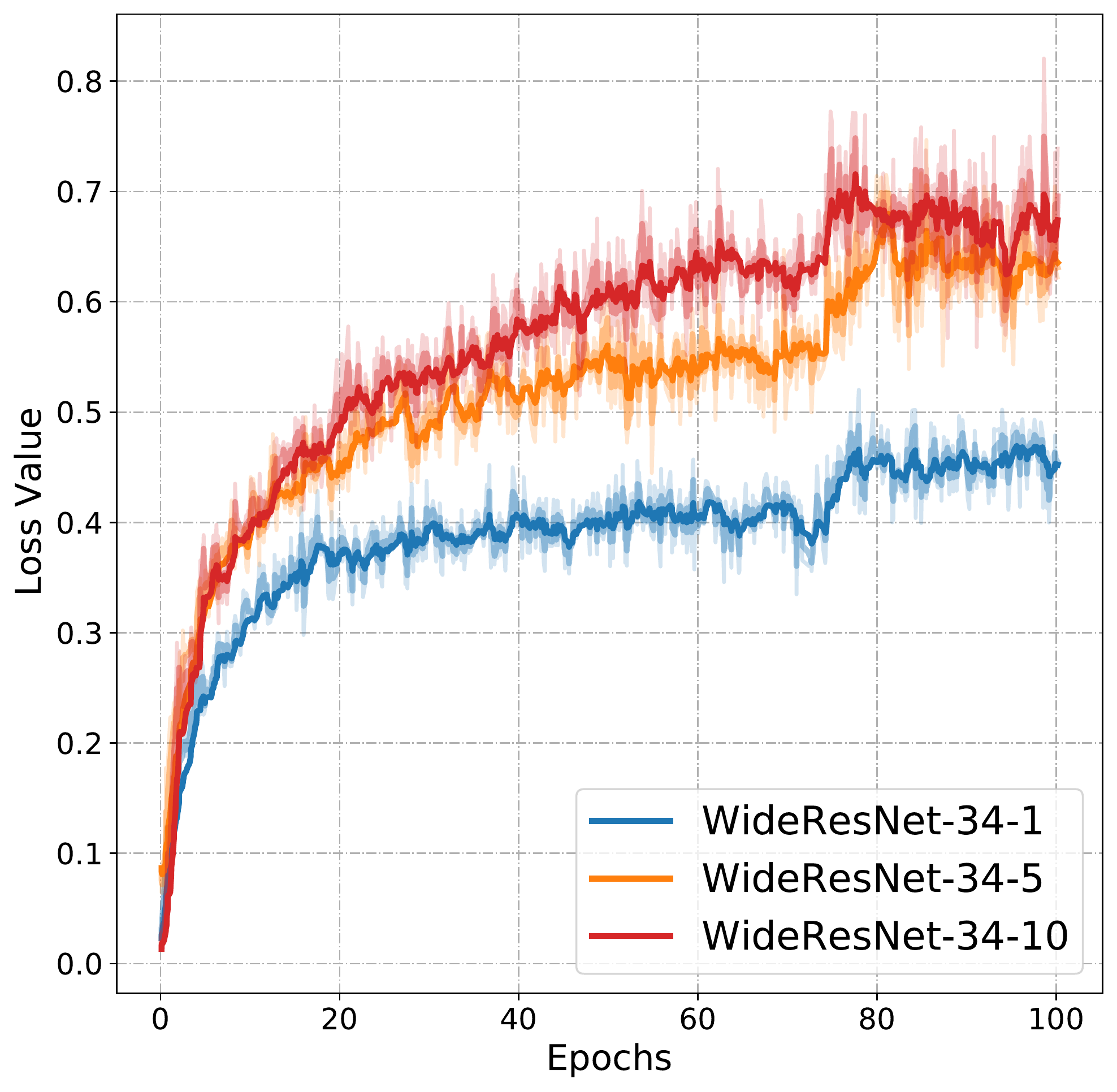}}
	\subfigure[Robust Regularization, $\lambda=18$]{
		\includegraphics[width=0.31\linewidth]{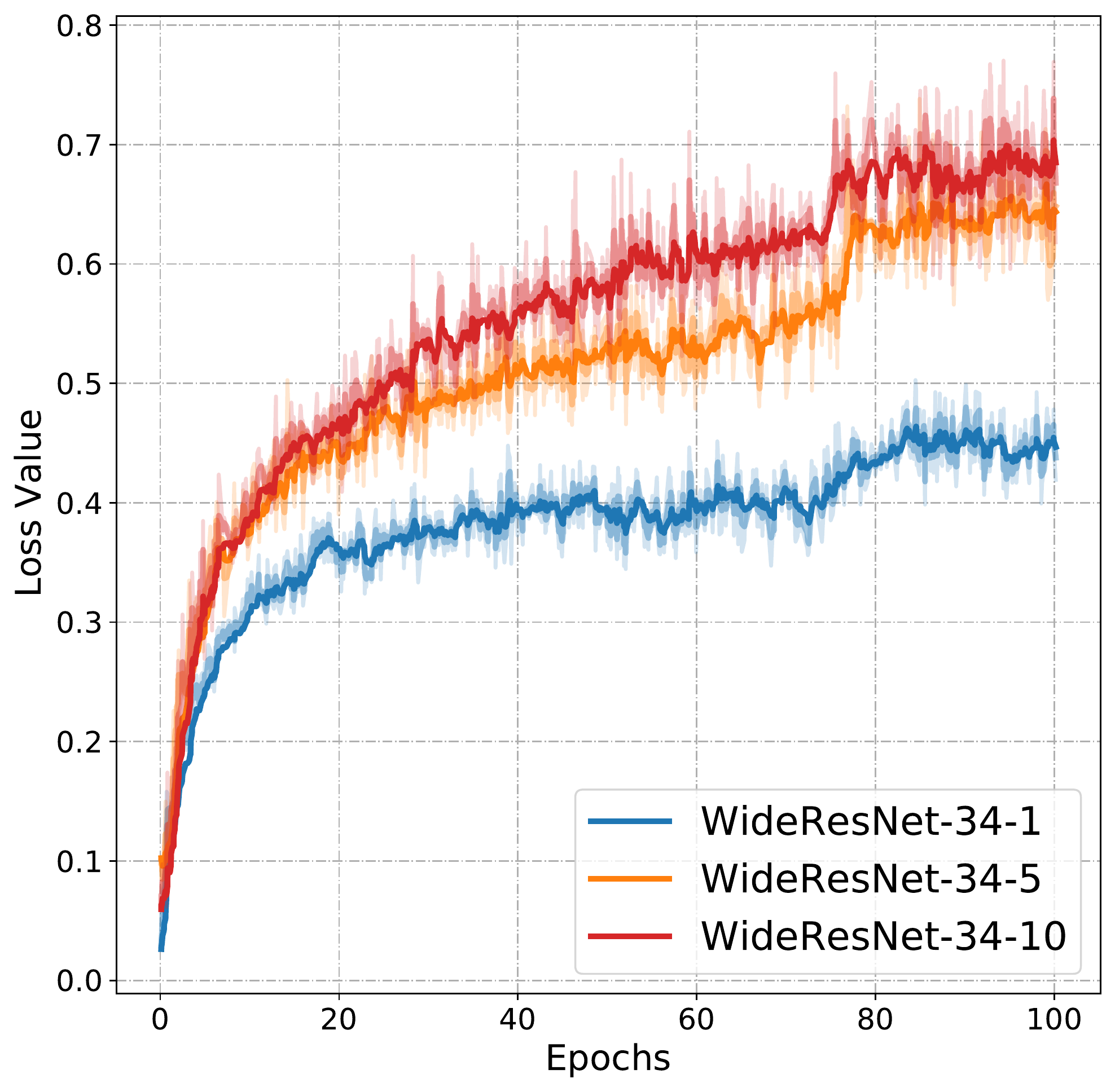}}
	\caption{WideResNet-34 on CIFAR100.}
\end{figure}

\begin{figure}[h!]
	\centering
	\subfigure[Natural Risk, $\lambda=6$]{
		\includegraphics[width=0.31\linewidth]{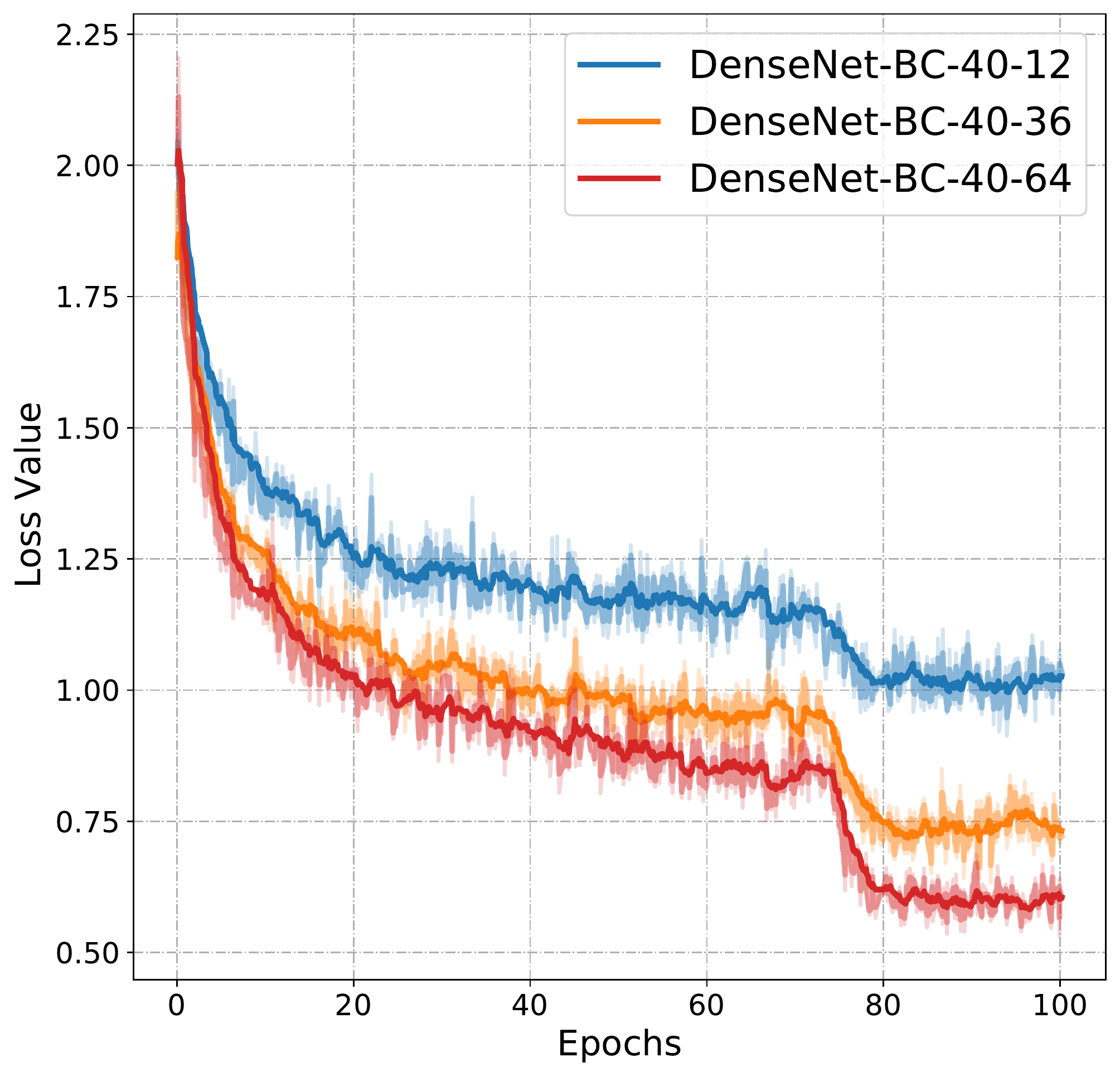}}
	\subfigure[Natural Risk, $\lambda=12$]{
		\includegraphics[width=0.31\linewidth]{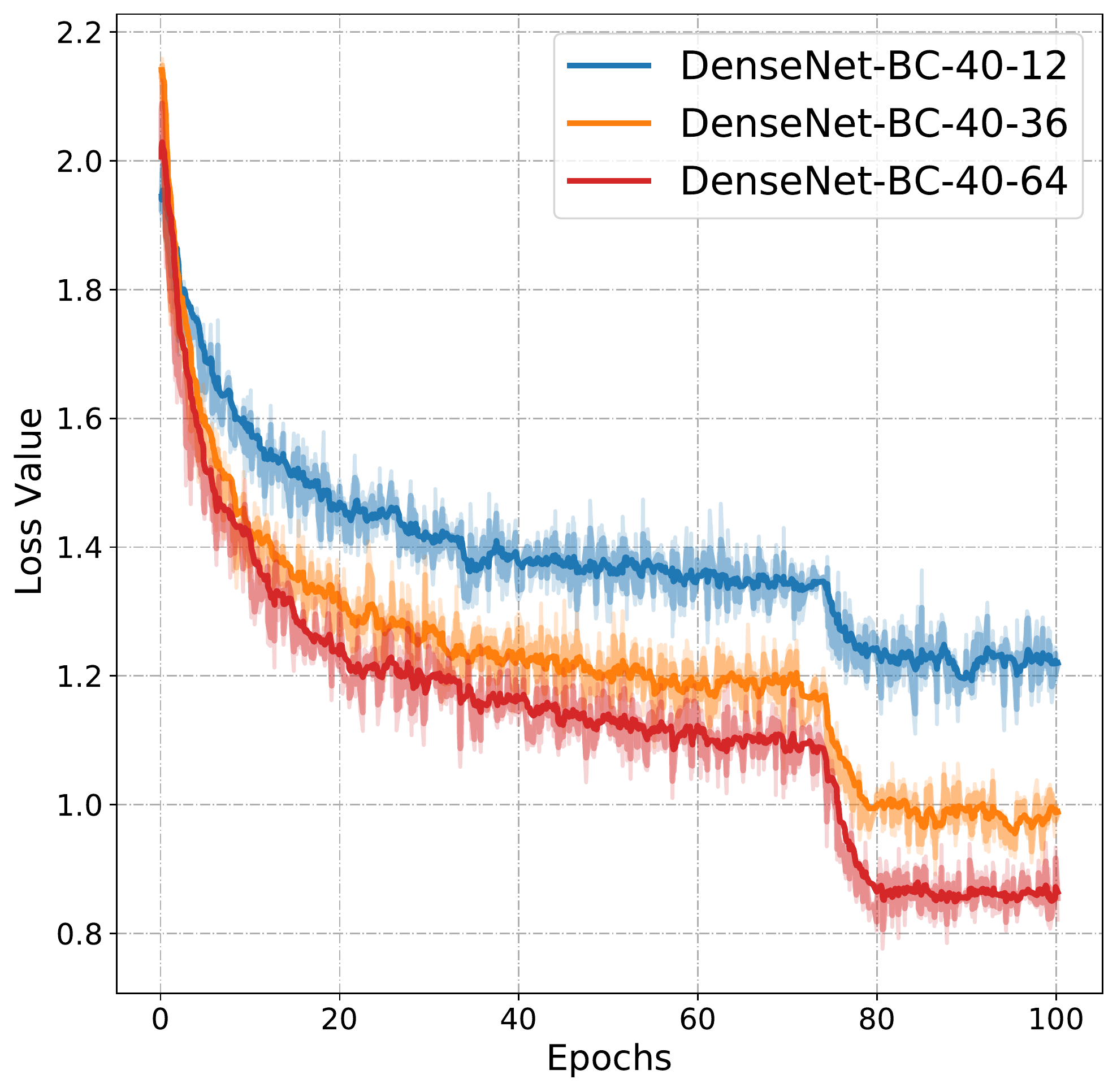}}
	\subfigure[Natural Risk, $\lambda=18$]{
		\includegraphics[width=0.31\linewidth]{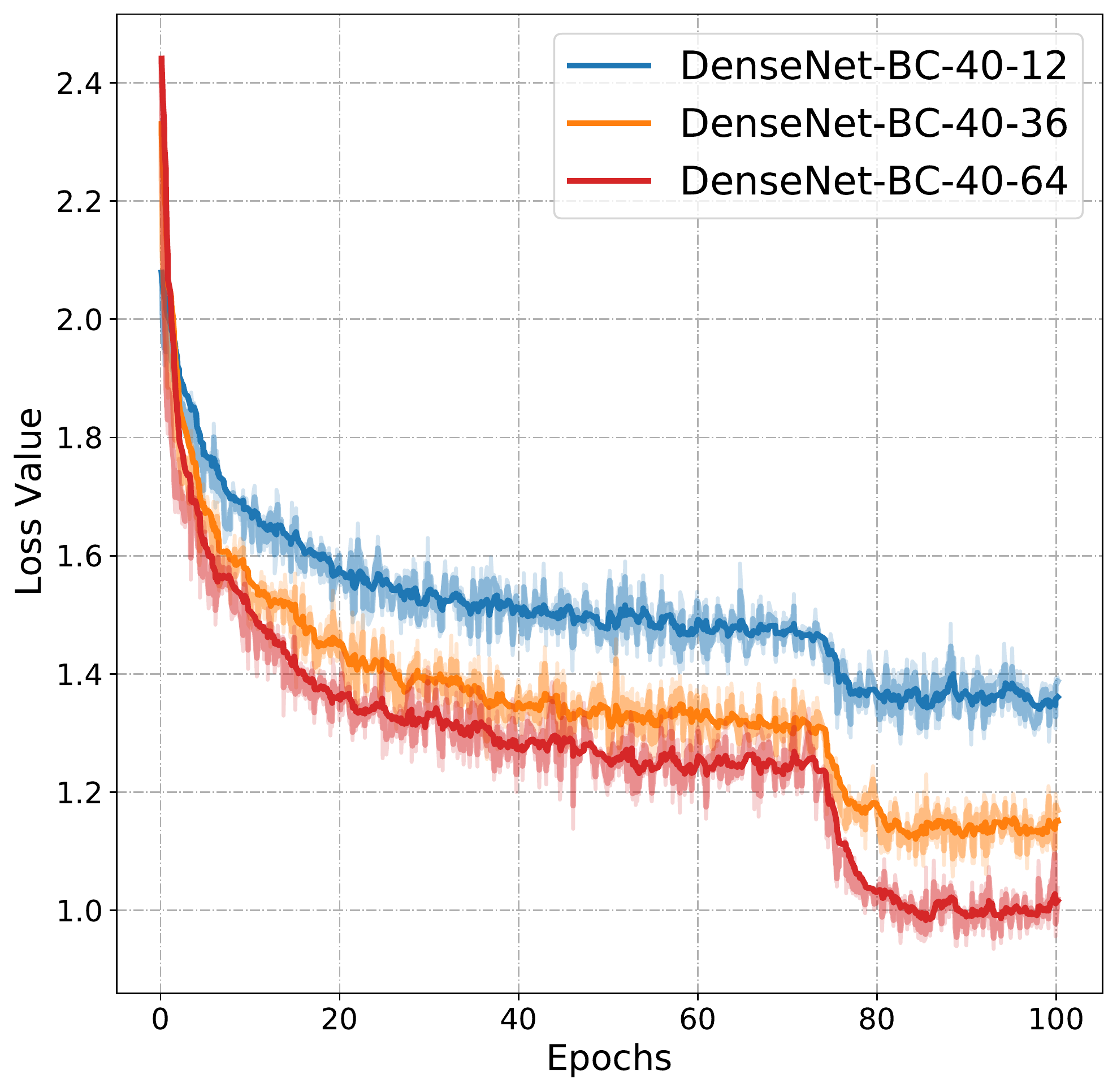}}
	\subfigure[Robust Regularization, $\lambda=6$]{
		\includegraphics[width=0.31\linewidth]{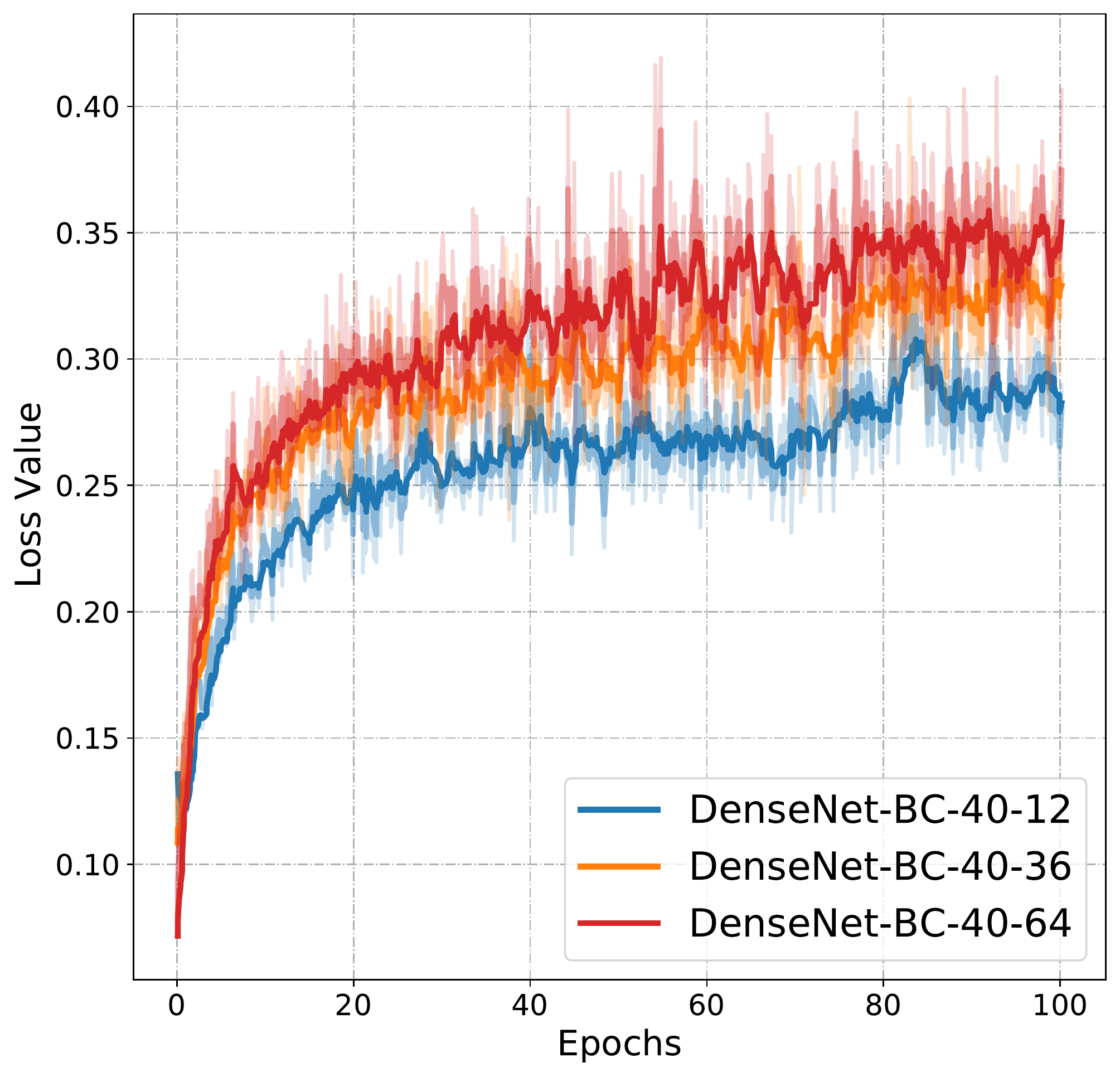}}
	\subfigure[Robust Regularization, $\lambda=12$]{
		\includegraphics[width=0.31\linewidth]{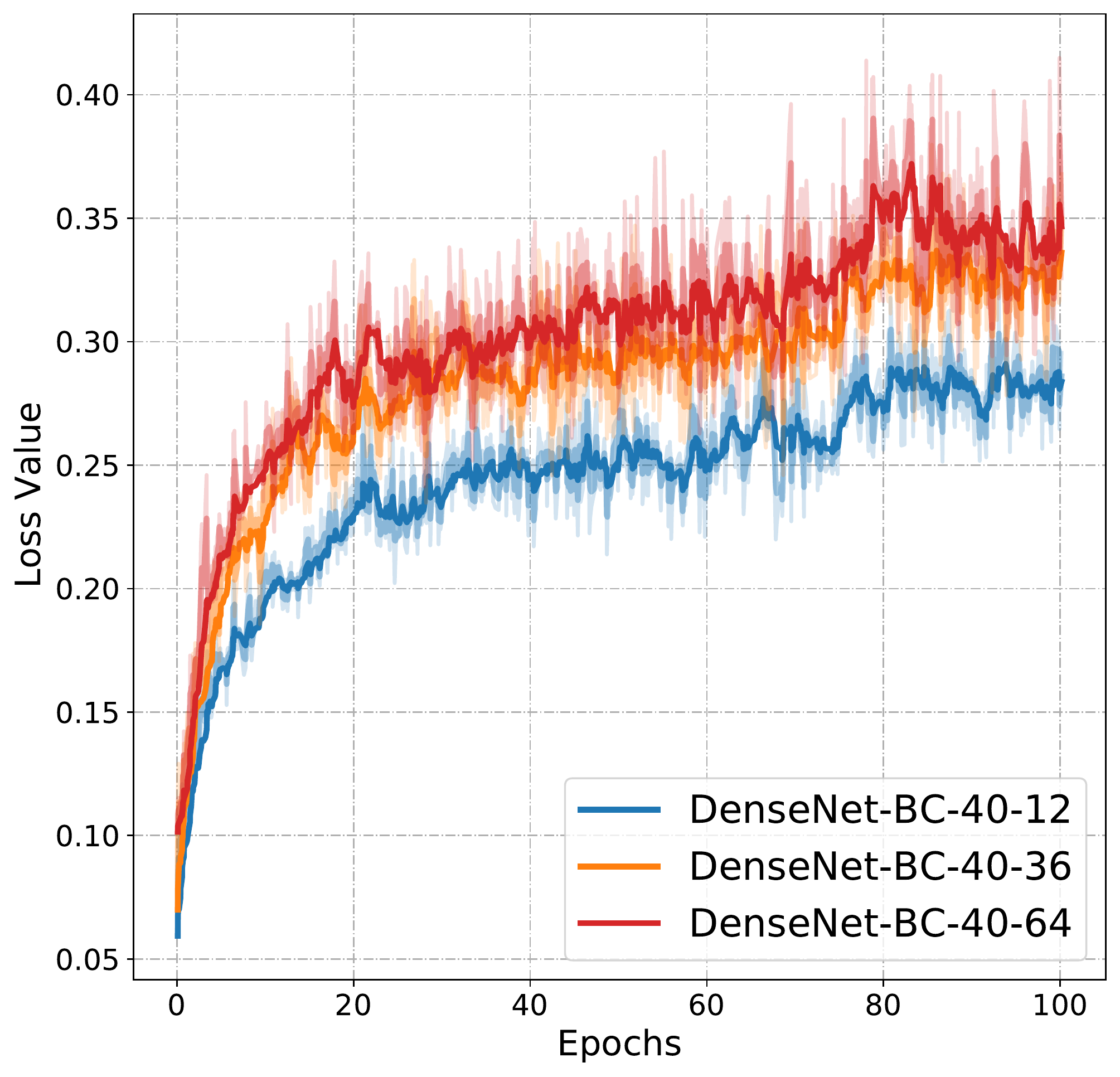}}
	\subfigure[Robust Regularization, $\lambda=18$]{
		\includegraphics[width=0.31\linewidth]{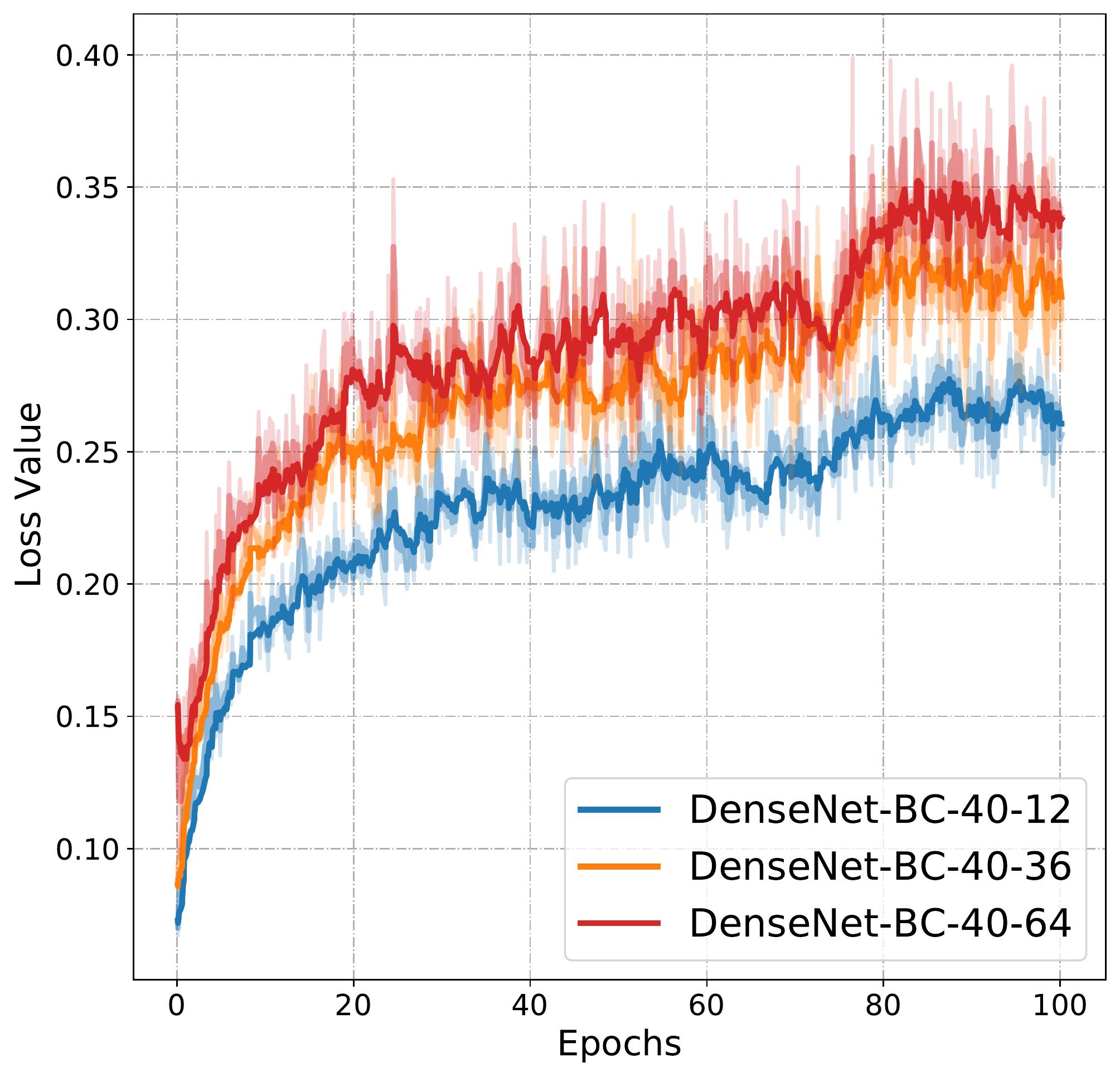}}
	\caption{DenseNet-BC-40 on CIFAR10.}
\end{figure}

% \medskip
% \small
\bibliographystyle{ims}
\bibliography{a-references}

\begin{thebibliography}{73}
\expandafter\ifx\csname natexlab\endcsname\relax\def\natexlab#1{#1}\fi
\expandafter\ifx\csname url\endcsname\relax
  \def\url#1{\texttt{#1}}\fi
\expandafter\ifx\csname urlprefix\endcsname\relax\def\urlprefix{URL }\fi

\bibitem[{Alayrac et~al.(2019)Alayrac, Uesato, Huang, Fawzi, Stanforth and
  Kohli}]{uat}
\textsc{Alayrac, J.}, \textsc{Uesato, J.}, \textsc{Huang, P.}, \textsc{Fawzi,
  A.}, \textsc{Stanforth, R.} and \textsc{Kohli, P.} (2019).
\newblock Are labels required for improving adversarial robustness?
\newblock In \textit{NeurIPS}.

\bibitem[{Allen-Zhu and Li(2020)}]{allen2020feature}
\textsc{Allen-Zhu, Z.} and \textsc{Li, Y.} (2020).
\newblock Feature purification: How adversarial training performs robust deep
  learning.
\newblock \textit{arXiv preprint arXiv:2005.10190} .

\bibitem[{Allen-Zhu et~al.(2019)Allen-Zhu, Li and Song}]{allen2019convergence}
\textsc{Allen-Zhu, Z.}, \textsc{Li, Y.} and \textsc{Song, Z.} (2019).
\newblock A convergence theory for deep learning via over-parameterization.
\newblock In \textit{International Conference on Machine Learning}. PMLR.

\bibitem[{Andriushchenko et~al.(2020)Andriushchenko, Croce, Flammarion and
  Hein}]{square}
\textsc{Andriushchenko, M.}, \textsc{Croce, F.}, \textsc{Flammarion, N.} and
  \textsc{Hein, M.} (2020).
\newblock Square attack: {A} query-efficient black-box adversarial attack via
  random search.
\newblock In \textit{ECCV} (A.~Vedaldi, H.~Bischof, T.~Brox and J.~Frahm,
  eds.), vol. 12368 of \textit{Lecture Notes in Computer Science}. Springer.

\bibitem[{Athalye et~al.(2018)Athalye, Carlini and Wagner}]{obfuscated}
\textsc{Athalye, A.}, \textsc{Carlini, N.} and \textsc{Wagner, D.~A.} (2018).
\newblock Obfuscated gradients give a false sense of security: Circumventing
  defenses to adversarial examples.
\newblock In \textit{ICML}, vol.~80 of \textit{Proceedings of Machine Learning
  Research}. {PMLR}.

\bibitem[{Bubeck et~al.(2019)Bubeck, Lee, Price and Razenshteyn}]{computation}
\textsc{Bubeck, S.}, \textsc{Lee, Y.~T.}, \textsc{Price, E.} and
  \textsc{Razenshteyn, I.~P.} (2019).
\newblock Adversarial examples from computational constraints.
\newblock In \textit{ICML}, vol.~97 of \textit{Proceedings of Machine Learning
  Research}. {PMLR}.

\bibitem[{Bubeck et~al.(2020)Bubeck, Li and Nagaraj}]{bubeck2020law}
\textsc{Bubeck, S.}, \textsc{Li, Y.} and \textsc{Nagaraj, D.} (2020).
\newblock A law of robustness for two-layers neural networks.
\newblock \textit{arXiv preprint arXiv:2009.14444} .

\bibitem[{Cao and Gu(2019)}]{cao2019generalization}
\textsc{Cao, Y.} and \textsc{Gu, Q.} (2019).
\newblock Generalization bounds of stochastic gradient descent for wide and
  deep neural networks.
\newblock In \textit{Advances in Neural Information Processing Systems}.

\bibitem[{Carlini and Wagner(2017)}]{cw}
\textsc{Carlini, N.} and \textsc{Wagner, D.~A.} (2017).
\newblock Towards evaluating the robustness of neural networks.
\newblock In \textit{SP}. {IEEE} Computer Society.

\bibitem[{Carmon et~al.(2019)Carmon, Raghunathan, Schmidt, Duchi and
  Liang}]{rst}
\textsc{Carmon, Y.}, \textsc{Raghunathan, A.}, \textsc{Schmidt, L.},
  \textsc{Duchi, J.~C.} and \textsc{Liang, P.} (2019).
\newblock Unlabeled data improves adversarial robustness.
\newblock In \textit{NeurIPS}.

\bibitem[{Chen and Gu(2020)}]{chen2020rays}
\textsc{Chen, J.} and \textsc{Gu, Q.} (2020).
\newblock Rays: A ray searching method for hard-label adversarial attack.
\newblock In \textit{Proceedings of the 26rd ACM SIGKDD International
  Conference on Knowledge Discovery and Data Mining}.

\bibitem[{Chen et~al.(2020)Chen, Zhou, Yi and Gu}]{chen2018frank}
\textsc{Chen, J.}, \textsc{Zhou, D.}, \textsc{Yi, J.} and \textsc{Gu, Q.}
  (2020).
\newblock A frank-wolfe framework for efficient and effective adversarial
  attacks.
\newblock In \textit{AAAI}.

\bibitem[{Chen et~al.(2017)Chen, Zhang, Sharma, Yi and Hsieh}]{zoo}
\textsc{Chen, P.}, \textsc{Zhang, H.}, \textsc{Sharma, Y.}, \textsc{Yi, J.} and
  \textsc{Hsieh, C.} (2017).
\newblock {ZOO:} zeroth order optimization based black-box attacks to deep
  neural networks without training substitute models.
\newblock In \textit{AISec@CCS}. {ACM}.

\bibitem[{Ciss{\'{e}} et~al.(2017)Ciss{\'{e}}, Bojanowski, Grave, Dauphin and
  Usunier}]{parseval}
\textsc{Ciss{\'{e}}, M.}, \textsc{Bojanowski, P.}, \textsc{Grave, E.},
  \textsc{Dauphin, Y.~N.} and \textsc{Usunier, N.} (2017).
\newblock Parseval networks: Improving robustness to adversarial examples.
\newblock In \textit{ICML}, vol.~70 of \textit{Proceedings of Machine Learning
  Research}. {PMLR}.

\bibitem[{Cohen et~al.(2019)Cohen, Rosenfeld and Kolter}]{certified}
\textsc{Cohen, J.~M.}, \textsc{Rosenfeld, E.} and \textsc{Kolter, J.~Z.}
  (2019).
\newblock Certified adversarial robustness via randomized smoothing.
\newblock In \textit{ICML}, vol.~97 of \textit{Proceedings of Machine Learning
  Research}. {PMLR}.

\bibitem[{Croce and Hein(2020{\natexlab{a}})}]{fab}
\textsc{Croce, F.} and \textsc{Hein, M.} (2020{\natexlab{a}}).
\newblock Minimally distorted adversarial examples with a fast adaptive
  boundary attack.
\newblock In \textit{ICML}, vol. 119 of \textit{Proceedings of Machine Learning
  Research}. {PMLR}.

\bibitem[{Croce and Hein(2020{\natexlab{b}})}]{autoattack}
\textsc{Croce, F.} and \textsc{Hein, M.} (2020{\natexlab{b}}).
\newblock Reliable evaluation of adversarial robustness with an ensemble of
  diverse parameter-free attacks.
\newblock In \textit{ICML}.

\bibitem[{Dhillon et~al.(2018)Dhillon, Azizzadenesheli, Lipton, Bernstein,
  Kossaifi, Khanna and Anandkumar}]{dhillon2018stochastic}
\textsc{Dhillon, G.~S.}, \textsc{Azizzadenesheli, K.}, \textsc{Lipton, Z.~C.},
  \textsc{Bernstein, J.}, \textsc{Kossaifi, J.}, \textsc{Khanna, A.} and
  \textsc{Anandkumar, A.} (2018).
\newblock Stochastic activation pruning for robust adversarial defense.
\newblock \textit{ICLR} .

\bibitem[{Dobriban et~al.(2020)Dobriban, Hassani, Hong and
  Robey}]{dobriban2020provable}
\textsc{Dobriban, E.}, \textsc{Hassani, H.}, \textsc{Hong, D.} and
  \textsc{Robey, A.} (2020).
\newblock Provable tradeoffs in adversarially robust classification.
\newblock \textit{arXiv preprint arXiv:2006.05161} .

\bibitem[{Fawzi et~al.(2018)Fawzi, Fawzi and Frossard}]{analysis}
\textsc{Fawzi, A.}, \textsc{Fawzi, O.} and \textsc{Frossard, P.} (2018).
\newblock Analysis of classifiers' robustness to adversarial perturbations.
\newblock \textit{Mach. Learn.} \textbf{107} 481--508.

\bibitem[{Gao et~al.(2019)Gao, Cai, Li, Hsieh, Wang and
  Lee}]{gao2019convergence}
\textsc{Gao, R.}, \textsc{Cai, T.}, \textsc{Li, H.}, \textsc{Hsieh, C.-J.},
  \textsc{Wang, L.} and \textsc{Lee, J.~D.} (2019).
\newblock Convergence of adversarial training in overparametrized neural
  networks.
\newblock In \textit{Advances in Neural Information Processing Systems}.

\bibitem[{Gilmer et~al.(2018)Gilmer, Metz, Faghri, Schoenholz, Raghu,
  Wattenberg and Goodfellow}]{gilmer2018adversarial}
\textsc{Gilmer, J.}, \textsc{Metz, L.}, \textsc{Faghri, F.},
  \textsc{Schoenholz, S.}, \textsc{Raghu, M.}, \textsc{Wattenberg, M.} and
  \textsc{Goodfellow, I.} (2018).
\newblock Adversarial spheres.

\bibitem[{Goodfellow et~al.(2015)Goodfellow, Shlens and Szegedy}]{fgsm}
\textsc{Goodfellow, I.~J.}, \textsc{Shlens, J.} and \textsc{Szegedy, C.}
  (2015).
\newblock Explaining and harnessing adversarial examples.
\newblock In \textit{ICLR} (Y.~Bengio and Y.~LeCun, eds.).

\bibitem[{Gowal et~al.(2020)Gowal, Qin, Uesato, Mann and
  Kohli}]{gowal2020uncovering}
\textsc{Gowal, S.}, \textsc{Qin, C.}, \textsc{Uesato, J.}, \textsc{Mann, T.}
  and \textsc{Kohli, P.} (2020).
\newblock Uncovering the limits of adversarial training against norm-bounded
  adversarial examples.
\newblock \textit{arXiv preprint arXiv:2010.03593} .

\bibitem[{Gu and Rigazio(2015)}]{gu}
\textsc{Gu, S.} and \textsc{Rigazio, L.} (2015).
\newblock Towards deep neural network architectures robust to adversarial
  examples.
\newblock In \textit{ICLR} (Y.~Bengio and Y.~LeCun, eds.).

\bibitem[{Guo et~al.(2018)Guo, Rana, Cisse and Van
  Der~Maaten}]{guo2017countering}
\textsc{Guo, C.}, \textsc{Rana, M.}, \textsc{Cisse, M.} and \textsc{Van
  Der~Maaten, L.} (2018).
\newblock Countering adversarial images using input transformations.
\newblock \textit{ICLR} .

\bibitem[{He et~al.(2016{\natexlab{a}})He, Zhang, Ren and Sun}]{resnetv1}
\textsc{He, K.}, \textsc{Zhang, X.}, \textsc{Ren, S.} and \textsc{Sun, J.}
  (2016{\natexlab{a}}).
\newblock Deep residual learning for image recognition.
\newblock In \textit{CVPR}.

\bibitem[{He et~al.(2016{\natexlab{b}})He, Zhang, Ren and Sun}]{resnetv2}
\textsc{He, K.}, \textsc{Zhang, X.}, \textsc{Ren, S.} and \textsc{Sun, J.}
  (2016{\natexlab{b}}).
\newblock Identity mappings in deep residual networks.
\newblock In \textit{ECCV}.

\bibitem[{Hein and Andriushchenko(2017)}]{hein2017formal}
\textsc{Hein, M.} and \textsc{Andriushchenko, M.} (2017).
\newblock Formal guarantees on the robustness of a classifier against
  adversarial manipulation.
\newblock In \textit{Advances in Neural Information Processing Systems}.

\bibitem[{Hendrycks et~al.(2019)Hendrycks, Lee and Mazeika}]{pretrain}
\textsc{Hendrycks, D.}, \textsc{Lee, K.} and \textsc{Mazeika, M.} (2019).
\newblock Using pre-training can improve model robustness and uncertainty.
\newblock In \textit{ICML}, vol.~97 of \textit{Proceedings of Machine Learning
  Research}. {PMLR}.

\bibitem[{Huang et~al.(2017)Huang, Liu, van~der Maaten and Weinberger}]{dense}
\textsc{Huang, G.}, \textsc{Liu, Z.}, \textsc{van~der Maaten, L.} and
  \textsc{Weinberger, K.~Q.} (2017).
\newblock Densely connected convolutional networks.
\newblock In \textit{2017 {IEEE} Conference on Computer Vision and Pattern
  Recognition}.

\bibitem[{Ilyas et~al.(2018)Ilyas, Engstrom, Athalye, Lin, Athalye, Engstrom,
  Ilyas and Kwok}]{ilyas2018black}
\textsc{Ilyas, A.}, \textsc{Engstrom, L.}, \textsc{Athalye, A.}, \textsc{Lin,
  J.}, \textsc{Athalye, A.}, \textsc{Engstrom, L.}, \textsc{Ilyas, A.} and
  \textsc{Kwok, K.} (2018).
\newblock Black-box adversarial attacks with limited queries and information.
\newblock In \textit{Proceedings of the 35th International Conference on
  Machine Learning}.

\bibitem[{Ilyas et~al.(2019{\natexlab{a}})Ilyas, Engstrom and
  Madry}]{ilyas2018prior}
\textsc{Ilyas, A.}, \textsc{Engstrom, L.} and \textsc{Madry, A.}
  (2019{\natexlab{a}}).
\newblock Prior convictions: Black-box adversarial attacks with bandits and
  priors.
\newblock \textit{ICLR} .

\bibitem[{Ilyas et~al.(2019{\natexlab{b}})Ilyas, Santurkar, Tsipras, Engstrom,
  Tran and Madry}]{bug-features}
\textsc{Ilyas, A.}, \textsc{Santurkar, S.}, \textsc{Tsipras, D.},
  \textsc{Engstrom, L.}, \textsc{Tran, B.} and \textsc{Madry, A.}
  (2019{\natexlab{b}}).
\newblock Adversarial examples are not bugs, they are features.
\newblock In \textit{NeurIPS}.

\bibitem[{Jacobsen et~al.(2019)Jacobsen, Behrmann, Zemel and
  Bethge}]{excessive}
\textsc{Jacobsen, J.}, \textsc{Behrmann, J.}, \textsc{Zemel, R.~S.} and
  \textsc{Bethge, M.} (2019).
\newblock Excessive invariance causes adversarial vulnerability.
\newblock In \textit{ICLR}. OpenReview.net.

\bibitem[{Jacot et~al.(2018)Jacot, Hongler and Gabriel}]{ntk}
\textsc{Jacot, A.}, \textsc{Hongler, C.} and \textsc{Gabriel, F.} (2018).
\newblock Neural tangent kernel: Convergence and generalization in neural
  networks.
\newblock In \textit{NeurIPS}.

\bibitem[{Krizhevsky et~al.(2009)Krizhevsky, Hinton
  et~al.}]{krizhevsky2009learning}
\textsc{Krizhevsky, A.}, \textsc{Hinton, G.} \textsc{et~al.} (2009).
\newblock Learning multiple layers of features from tiny images .

\bibitem[{Kurakin et~al.(2017)Kurakin, Goodfellow and Bengio}]{at-scale}
\textsc{Kurakin, A.}, \textsc{Goodfellow, I.~J.} and \textsc{Bengio, S.}
  (2017).
\newblock Adversarial machine learning at scale.
\newblock In \textit{ICLR}. OpenReview.net.

\bibitem[{L{\'{e}}cuyer et~al.(2019)L{\'{e}}cuyer, Atlidakis, Geambasu, Hsu and
  Jana}]{rs}
\textsc{L{\'{e}}cuyer, M.}, \textsc{Atlidakis, V.}, \textsc{Geambasu, R.},
  \textsc{Hsu, D.} and \textsc{Jana, S.} (2019).
\newblock Certified robustness to adversarial examples with differential
  privacy.
\newblock In \textit{SP}. {IEEE}.

\bibitem[{Liao et~al.(2018)Liao, Liang, Dong, Pang, Hu and
  Zhu}]{liao2018defense}
\textsc{Liao, F.}, \textsc{Liang, M.}, \textsc{Dong, Y.}, \textsc{Pang, T.},
  \textsc{Hu, X.} and \textsc{Zhu, J.} (2018).
\newblock Defense against adversarial attacks using high-level representation
  guided denoiser.
\newblock In \textit{CVPR}.

\bibitem[{Madry et~al.(2018)Madry, Makelov, Schmidt, Tsipras and Vladu}]{pgd}
\textsc{Madry, A.}, \textsc{Makelov, A.}, \textsc{Schmidt, L.},
  \textsc{Tsipras, D.} and \textsc{Vladu, A.} (2018).
\newblock Towards deep learning models resistant to adversarial attacks.
\newblock In \textit{ICLR}. OpenReview.net.

\bibitem[{Meng and Chen(2017)}]{meng2017magnet}
\textsc{Meng, D.} and \textsc{Chen, H.} (2017).
\newblock Magnet: a two-pronged defense against adversarial examples.
\newblock In \textit{Proceedings of the 2017 ACM SIGSAC Conference on Computer
  and Communications Security}.

\bibitem[{Moosavi{-}Dezfooli et~al.(2016)Moosavi{-}Dezfooli, Fawzi and
  Frossard}]{deep-fool}
\textsc{Moosavi{-}Dezfooli, S.}, \textsc{Fawzi, A.} and \textsc{Frossard, P.}
  (2016).
\newblock Deepfool: {A} simple and accurate method to fool deep neural
  networks.
\newblock In \textit{CVPR}. {IEEE} Computer Society.

\bibitem[{Najafi et~al.(2019)Najafi, Maeda, Koyama and Miyato}]{incomplete}
\textsc{Najafi, A.}, \textsc{Maeda, S.}, \textsc{Koyama, M.} and
  \textsc{Miyato, T.} (2019).
\newblock Robustness to adversarial perturbations in learning from incomplete
  data.
\newblock In \textit{NeurIPS}.

\bibitem[{Nakkiran(2019)}]{simplicity}
\textsc{Nakkiran, P.} (2019).
\newblock Adversarial robustness may be at odds with simplicity.
\newblock \textit{CoRR} \textbf{abs/1901.00532}.

\bibitem[{Pang et~al.(2020)Pang, Yang, Dong, Xu, Su and Zhu}]{he}
\textsc{Pang, T.}, \textsc{Yang, X.}, \textsc{Dong, Y.}, \textsc{Xu, K.},
  \textsc{Su, H.} and \textsc{Zhu, J.} (2020).
\newblock Boosting adversarial training with hypersphere embedding.
\newblock \textit{CoRR} \textbf{abs/2002.08619}.

\bibitem[{Papernot et~al.(2016{\natexlab{a}})Papernot, McDaniel, Jha,
  Fredrikson, Celik and Swami}]{jsma}
\textsc{Papernot, N.}, \textsc{McDaniel, P.~D.}, \textsc{Jha, S.},
  \textsc{Fredrikson, M.}, \textsc{Celik, Z.~B.} and \textsc{Swami, A.}
  (2016{\natexlab{a}}).
\newblock The limitations of deep learning in adversarial settings.
\newblock In \textit{EuroS{\&}P}. {IEEE}.

\bibitem[{Papernot et~al.(2016{\natexlab{b}})Papernot, McDaniel, Wu, Jha and
  Swami}]{distillation}
\textsc{Papernot, N.}, \textsc{McDaniel, P.~D.}, \textsc{Wu, X.}, \textsc{Jha,
  S.} and \textsc{Swami, A.} (2016{\natexlab{b}}).
\newblock Distillation as a defense to adversarial perturbations against deep
  neural networks.
\newblock In \textit{SP}. {IEEE} Computer Society.

\bibitem[{Paulavi{\v{c}}ius and
  {\v{Z}}ilinskas(2006)}]{paulavivcius2006analysis}
\textsc{Paulavi{\v{c}}ius, R.} and \textsc{{\v{Z}}ilinskas, J.} (2006).
\newblock Analysis of different norms and corresponding lipschitz constants for
  global optimization.
\newblock \textit{Technological and Economic Development of Economy}
  \textbf{12} 301--306.

\bibitem[{Qin et~al.(2019)Qin, Martens, Gowal, Krishnan, Dvijotham, Fawzi, De,
  Stanforth and Kohli}]{llr}
\textsc{Qin, C.}, \textsc{Martens, J.}, \textsc{Gowal, S.}, \textsc{Krishnan,
  D.}, \textsc{Dvijotham, K.}, \textsc{Fawzi, A.}, \textsc{De, S.},
  \textsc{Stanforth, R.} and \textsc{Kohli, P.} (2019).
\newblock Adversarial robustness through local linearization.
\newblock In \textit{NeurIPS}.

\bibitem[{Raghunathan et~al.(2020)Raghunathan, Xie, Yang, Duchi and
  Liang}]{raghunathan2020understanding}
\textsc{Raghunathan, A.}, \textsc{Xie, S.~M.}, \textsc{Yang, F.},
  \textsc{Duchi, J.} and \textsc{Liang, P.} (2020).
\newblock Understanding and mitigating the tradeoff between robustness and
  accuracy.
\newblock \textit{arXiv preprint arXiv:2002.10716} .

\bibitem[{Raghunathan et~al.(2019)Raghunathan, Xie, Yang, Duchi and
  Liang}]{can-hurt}
\textsc{Raghunathan, A.}, \textsc{Xie, S.~M.}, \textsc{Yang, F.},
  \textsc{Duchi, J.~C.} and \textsc{Liang, P.} (2019).
\newblock Adversarial training can hurt generalization.
\newblock \textit{CoRR} \textbf{abs/1906.06032}.

\bibitem[{Rice et~al.(2020)Rice, Wong and Kolter}]{over-fitting}
\textsc{Rice, L.}, \textsc{Wong, E.} and \textsc{Kolter, J.~Z.} (2020).
\newblock Overfitting in adversarially robust deep learning.
\newblock \textit{CoRR} \textbf{abs/2002.11569}.

\bibitem[{Salman et~al.(2019)Salman, Li, Razenshteyn, Zhang, Zhang, Bubeck and
  Yang}]{provably}
\textsc{Salman, H.}, \textsc{Li, J.}, \textsc{Razenshteyn, I.~P.},
  \textsc{Zhang, P.}, \textsc{Zhang, H.}, \textsc{Bubeck, S.} and \textsc{Yang,
  G.} (2019).
\newblock Provably robust deep learning via adversarially trained smoothed
  classifiers.
\newblock In \textit{NeurIPS}.

\bibitem[{Samangouei et~al.(2018)Samangouei, Kabkab and
  Chellappa}]{samangouei2018defense}
\textsc{Samangouei, P.}, \textsc{Kabkab, M.} and \textsc{Chellappa, R.} (2018).
\newblock Defense-gan: Protecting classifiers against adversarial attacks using
  generative models.
\newblock \textit{ICLR} .

\bibitem[{Schmidt et~al.(2018)Schmidt, Santurkar, Tsipras, Talwar and
  Madry}]{more-data}
\textsc{Schmidt, L.}, \textsc{Santurkar, S.}, \textsc{Tsipras, D.},
  \textsc{Talwar, K.} and \textsc{Madry, A.} (2018).
\newblock Adversarially robust generalization requires more data.
\newblock In \textit{NeurIPS}.

\bibitem[{Sehwag et~al.(2020)Sehwag, Wang, Mittal and Jana}]{hydra}
\textsc{Sehwag, V.}, \textsc{Wang, S.}, \textsc{Mittal, P.} and \textsc{Jana,
  S.} (2020).
\newblock On pruning adversarially robust neural networks.
\newblock \textit{CoRR} \textbf{abs/2002.10509}.

\bibitem[{Song et~al.(2018)Song, Kim, Nowozin, Ermon and
  Kushman}]{song2017pixeldefend}
\textsc{Song, Y.}, \textsc{Kim, T.}, \textsc{Nowozin, S.}, \textsc{Ermon, S.}
  and \textsc{Kushman, N.} (2018).
\newblock Pixeldefend: Leveraging generative models to understand and defend
  against adversarial examples.
\newblock \textit{ICLR} .

\bibitem[{Szegedy et~al.(2014)Szegedy, Zaremba, Sutskever, Bruna, Erhan,
  Goodfellow and Fergus}]{intrigue}
\textsc{Szegedy, C.}, \textsc{Zaremba, W.}, \textsc{Sutskever, I.},
  \textsc{Bruna, J.}, \textsc{Erhan, D.}, \textsc{Goodfellow, I.~J.} and
  \textsc{Fergus, R.} (2014).
\newblock Intriguing properties of neural networks.
\newblock In \textit{ICLR} (Y.~Bengio and Y.~LeCun, eds.).

\bibitem[{Tram{\`{e}}r et~al.(2017)Tram{\`{e}}r, Papernot, Goodfellow, Boneh
  and McDaniel}]{transfer}
\textsc{Tram{\`{e}}r, F.}, \textsc{Papernot, N.}, \textsc{Goodfellow, I.~J.},
  \textsc{Boneh, D.} and \textsc{McDaniel, P.~D.} (2017).
\newblock The space of transferable adversarial examples.
\newblock \textit{CoRR} \textbf{abs/1704.03453}.

\bibitem[{Tsipras et~al.(2019)Tsipras, Santurkar, Engstrom, Turner and
  Madry}]{tes19}
\textsc{Tsipras, D.}, \textsc{Santurkar, S.}, \textsc{Engstrom, L.},
  \textsc{Turner, A.} and \textsc{Madry, A.} (2019).
\newblock Robustness may be at odds with accuracy.
\newblock In \textit{ICLR}. OpenReview.net.

\bibitem[{Wang et~al.(2019)Wang, Ma, Bailey, Yi, Zhou and
  Gu}]{wang2019convergence}
\textsc{Wang, Y.}, \textsc{Ma, X.}, \textsc{Bailey, J.}, \textsc{Yi, J.},
  \textsc{Zhou, B.} and \textsc{Gu, Q.} (2019).
\newblock On the convergence and robustness of adversarial training.
\newblock In \textit{ICML}.

\bibitem[{Wang et~al.(2020)Wang, Zou, Yi, Bailey, Ma and Gu}]{mart}
\textsc{Wang, Y.}, \textsc{Zou, D.}, \textsc{Yi, J.}, \textsc{Bailey, J.},
  \textsc{Ma, X.} and \textsc{Gu, Q.} (2020).
\newblock Improving adversarial robustness requires revisiting misclassified
  examples.
\newblock In \textit{ICLR}. OpenReview.net.

\bibitem[{Weng et~al.(2018)Weng, Zhang, Chen, Yi, Su, Gao, Hsieh and
  Daniel}]{weng2018evaluating}
\textsc{Weng, T.-W.}, \textsc{Zhang, H.}, \textsc{Chen, P.-Y.}, \textsc{Yi,
  J.}, \textsc{Su, D.}, \textsc{Gao, Y.}, \textsc{Hsieh, C.-J.} and
  \textsc{Daniel, L.} (2018).
\newblock Evaluating the robustness of neural networks: An extreme value theory
  approach.
\newblock In \textit{International Conference on Learning Representations}.

\bibitem[{Wong et~al.(2020)Wong, Rice and Kolter}]{fast-free}
\textsc{Wong, E.}, \textsc{Rice, L.} and \textsc{Kolter, J.~Z.} (2020).
\newblock Fast is better than free: Revisiting adversarial training.
\newblock \textit{CoRR} \textbf{abs/2001.03994}.

\bibitem[{Wu et~al.(2020)Wu, Xia and Wang}]{wu2020adversarial}
\textsc{Wu, D.}, \textsc{Xia, S.-T.} and \textsc{Wang, Y.} (2020).
\newblock Adversarial weight perturbation helps robust generalization.
\newblock \textit{Advances in Neural Information Processing Systems}
  \textbf{33}.

\bibitem[{Xie et~al.(2018)Xie, Wang, Zhang, Ren and Yuille}]{xie2017mitigating}
\textsc{Xie, C.}, \textsc{Wang, J.}, \textsc{Zhang, Z.}, \textsc{Ren, Z.} and
  \textsc{Yuille, A.} (2018).
\newblock Mitigating adversarial effects through randomization.
\newblock \textit{ICLR} .

\bibitem[{Yang et~al.(2020)Yang, Rashtchian, Zhang, Salakhutdinov and
  Chaudhuri}]{yang2020closer}
\textsc{Yang, Y.-Y.}, \textsc{Rashtchian, C.}, \textsc{Zhang, H.},
  \textsc{Salakhutdinov, R.~R.} and \textsc{Chaudhuri, K.} (2020).
\newblock A closer look at accuracy vs. robustness.
\newblock \textit{Advances in Neural Information Processing Systems}
  \textbf{33}.

\bibitem[{Zagoruyko and Komodakis(2016)}]{wrn}
\textsc{Zagoruyko, S.} and \textsc{Komodakis, N.} (2016).
\newblock Wide residual networks.
\newblock In \textit{BMVC}. {BMVA} Press.

\bibitem[{Zhang et~al.(2019)Zhang, Yu, Jiao, Xing, Ghaoui and Jordan}]{trades}
\textsc{Zhang, H.}, \textsc{Yu, Y.}, \textsc{Jiao, J.}, \textsc{Xing, E.~P.},
  \textsc{Ghaoui, L.~E.} and \textsc{Jordan, M.~I.} (2019).
\newblock Theoretically principled trade-off between robustness and accuracy.
\newblock In \textit{ICML}, vol.~97 of \textit{Proceedings of Machine Learning
  Research}. {PMLR}.

\bibitem[{Zhang et~al.(2020{\natexlab{a}})Zhang, Xu, Han, Niu, Cui, Sugiyama
  and Kankanhalli}]{fat}
\textsc{Zhang, J.}, \textsc{Xu, X.}, \textsc{Han, B.}, \textsc{Niu, G.},
  \textsc{Cui, L.}, \textsc{Sugiyama, M.} and \textsc{Kankanhalli, M.~S.}
  (2020{\natexlab{a}}).
\newblock Attacks which do not kill training make adversarial learning
  stronger.
\newblock \textit{CoRR} \textbf{abs/2002.11242}.

\bibitem[{Zhang et~al.(2020{\natexlab{b}})Zhang, Chen, Gu and
  Evans}]{zhang2020understanding}
\textsc{Zhang, X.}, \textsc{Chen, J.}, \textsc{Gu, Q.} and \textsc{Evans, D.}
  (2020{\natexlab{b}}).
\newblock Understanding the intrinsic robustness of image distributions using
  conditional generative models.
\newblock In \textit{Proceedings of the 23rd International Conference on
  Artificial Intelligence and Statistics}.

\bibitem[{Zou et~al.(2020)Zou, Cao, Zhou and Gu}]{zou2020gradient}
\textsc{Zou, D.}, \textsc{Cao, Y.}, \textsc{Zhou, D.} and \textsc{Gu, Q.}
  (2020).
\newblock Gradient descent optimizes over-parameterized deep relu networks.
\newblock \textit{Machine Learning} \textbf{109} 467--492.

\end{thebibliography}

\end{document}